\documentclass[sigconf]{acmart}
\usepackage[skip=3pt]{caption} 
\usepackage{afterpage} 

\usepackage{enumitem}
\usepackage{pifont}
\usepackage{float}
\usepackage{subcaption}
\usepackage{ragged2e}
\usepackage{tikz}
\usepackage{adjustbox}
\usepackage{graphicx}
\usepackage[most]{tcolorbox}
\usepackage{booktabs}
\usepackage{threeparttable}
\usepackage{makecell}
\usepackage{multirow}
\usepackage{array}
\usepackage{resizegather}
\usepackage{tcolorbox}
\usepackage[skip=2pt]{caption}
\usepackage{wasysym}
\usepackage{mathtools}
\usepackage{bm}
\usepackage[ruled,vlined,linesnumbered]{algorithm2e}
\SetKw{Or}{or}
\SetKw{True}{True}

\usepackage{amsthm}
\newtheorem{assumption}{Assumption}
\newtheorem{theorem}{Theorem}

\usepackage{color}

\newcommand{\myparagraph}[1]{\par\noindent\textbf{#1}.}

\usepackage{soul}

\usepackage{xspace}
\newcommand{\codename}{\textsc{Ghost}\xspace}

\newcommand{\argmin}{\operatorname{arg\,min}}

\newcommand{\ie}{\textit{i.e.}}

\acmYear{2026}\copyrightyear{2026}
\setcopyright{cc}
\setcctype[4.0]{by}
\acmConference[ASIA CCS '26]{ACM Asia Conference on Computer and Communications Security}{June 1--5, 2026}{Bangalore, India}
\acmBooktitle{ACM Asia Conference on Computer and Communications Security (ASIA CCS '26), June 1--5, 2026, Bangalore, India}
\acmDOI{10.1145/3779208.3785389}
\acmISBN{979-8-4007-2356-8/26/06}

\title{Mitigating Gradient Inversion Risks in Language Models via Token Obfuscation}

\author{Xinguo Feng}
\affiliation{%
  \institution{The University of Queensland}
  \institution{CSIRO's Data61}
  \streetaddress{St Lucia}
  \city{Brisbane}
  \country{Australia}
  }
\email{s.feng@uq.edu.au}

\author{Zhongkui Ma}
\affiliation{
  \institution{The University of Queensland}
  \streetaddress{St Lucia}
  \city{Brisbane}
  \country{Australia}}
\email{zhongkui.ma@uq.edu.au}

\author{Zihan Wang}
\affiliation{
  \institution{The University of Queensland}
  \institution{CSIRO's Data61}
  \streetaddress{St Lucia}
  \city{Brisbane}
  \country{Australia}}
\email{zihan.wang@uq.edu.au}

\author{Alsharif Abuadbba}
\affiliation{%
  \institution{CSIRO's Data61}
  \streetaddress{Marsfield}
  \city{Sydney}
  \country{Australia}}
\email{sharif.abuadbba@data61.csiro.au}

\author{Guangdong Bai}
\authornote{Corresponding author}
\affiliation{%
  \institution{City University of Hong Kong}
  \streetaddress{Hong Kong}
  \country{Hong Kong}}
\email{g.bai@cityu.edu.hk}

\begin{document}

\begin{abstract}
    Training and fine-tuning large-scale language models largely benefit from collaborative learning, but the approach has been proven vulnerable to gradient inversion attacks~(GIAs), which allow adversaries to reconstruct private training data from shared gradients. 
    Existing defenses mainly employ gradient perturbation techniques, \textit{e.g.}, noise injection or gradient pruning, to disrupt GIAs' direct mapping from gradient space to token space.     
    However, these methods often fall short due to the retention of semantics similarity across gradient, embedding, and token spaces. 
    Attackers can map proximate gradients into similar embeddings, and subsequently correspond them to tokens with similar semantics.
    
    In this work, we propose a novel defense mechanism named \codename (\textbf{\underline{g}}radient s\textbf{\underline{h}}ield with \textbf{\underline{o}}bfu\textbf{\underline{s}}cated \textbf{\underline{t}}okens), a \textit{token-level} obfuscation mechanism that neutralizes GIAs by decoupling the inherent connections across gradient, embedding, and token spaces. 
    \codename is built upon an important insight: due to the large scale of the token space, there exist semantically distinct yet embedding-proximate tokens that can serve as the \emph{shadow substitutes} of the original tokens, which enables a semantic disconnection in the token space while preserving the connection in the embedding and gradient spaces.
    \codename comprises a \emph{searching} step, which identifies semantically distinct candidate tokens using a multi-criteria searching process, and a \emph{selection} step, which selects optimal shadow tokens to ensure minimal disruption to features critical for training by preserving alignment with the internal outputs produced by original tokens. 
    Evaluation across diverse model architectures (from BERT to Llama) and datasets demonstrates the remarkable effectiveness of \codename in protecting privacy (as low as 1\% in recovery rate) and preserving utility (up to 0.92 in classification F1 and 5.45 in perplexity), in both classification and generation tasks against state-of-the-art GIAs and adaptive attack scenarios. \codename demonstrates a paradigm shift from the gradient level to the token level for effective mitigation against GIAs.
\end{abstract}

\ccsdesc[500]{Security and privacy}
\ccsdesc[500]{Computing methodologies~Machine learning}

\keywords{Collaborative Learning, Language Models, Gradient Inversion}

\maketitle

\section{Introduction}
Training and fine-tuning\footnote{In practice, training a language model typically refers to fine-tuning a published pre-trained language model with a local dataset for a downstream task. Without loss of generality, we interchangeably use ``training'' and ``fine-tuning'' to refer to this practice in this paper.} large-scale language models largely benefit from collaborative learning, attracting significant interest from both academia~\cite{ye2024fedllm} and industry~\cite{FedML}. However, this approach has been proven vulnerable to gradient inversion attacks~(GIAs)~\cite{dlg,tag,lamp,grab,dager,film,decep}, which allow adversaries to reconstruct private training data from shared gradients.
The growing capability of GIAs undermines the core promise of data privacy in collaborative learning~\cite{wang2025catch}, exposing participants to significant risks.

Efforts to mitigate GIAs for language models have led to the development of several defense mechanisms. One line of research aims to use lightweight methods to restrict gradient leakage by freezing the embedding layers to prevent direct token leakage~\cite{film}, or incorporating dropout layers to introduce randomness to the gradients~\cite{wei2020framework}. 
However, these methods are often bypassed by adaptive GIAs that abandon embedding layers or incorporate dropout learning.
Another line of research mainly employs gradient perturbation techniques, \textit{e.g.}, noise injection~\cite{dlg, wei2020framework}, or gradient pruning~\cite{dlg}. These methods aim to disrupt the direct mapping from gradient space to token space. While such defenses can partially defend against earlier attacks, recent advancements~\cite{grab, dager} demonstrate that they often fall short due to the retention of semantics similarity across gradient, embedding, and token spaces. 
This retention enables attackers to map proximate gradients into similar embeddings, and subsequently to tokens with similar semantics.
These limitations highlight the urgent need for a new perspective beyond the gradient space to effectively mitigate the privacy threat of GIAs.

\myparagraph{Our work}
We propose a novel defense mechanism, \codename (\textbf{\underline{g}}radient s\textbf{\underline{h}}ield with \textbf{\underline{o}}bfu\textbf{\underline{s}}cated \textbf{\underline{t}}okens), to mitigate gradient inversion risks in language model training within the collaborative learning setting. 
Unlike gradient-level defenses, \codename introduces a \textit{token-level} obfuscation mechanism that neutralizes GIAs by decoupling the inherent connections across gradient, embedding, and token spaces. 
This solution draws inspiration from a key insight into the vast scale of the token space.
Specifically, this allows for the existence of semantically distinct yet embedding-proximate tokens. 
These tokens can serve as shadow substitutes for the original tokens, enabling a semantic disconnection within the token space. 
At the same time, the embedding proximity preserves the effectiveness of fine-tuning in the embedding and gradient spaces.

\codename is designed as a two-step approach, including a \textit{searching} step and a \textit{selection} step. During the searching step, \codename applies a three-layer searching mechanism to narrow down neighbor tokens in the top-$k$ embedding similarity range that are likely to share distinct semantics with each original token as shadow tokens. Specifically, \codename excludes neighbor tokens that 1) have overlapping neighbors beyond a defined threshold, 2) are mutual neighbors of each other, and 3) share the same lemma (\textit{e.g.}, `\textit{went}'--`\textit{go}', `\textit{going}'--`\textit{go}'), with the original token. 
The shadow tokens serve as the candidates to replace the original tokens. 
During the selection step, \codename utilizes the shadow tokens to obfuscate the original data. 
Specifically, beam search is employed to iteratively identify the optimal shadow tokens that minimize the discrepancy between the internal outputs generated by the obfuscated data and those produced by the original data across all model layers. By systematically searching and selecting the optimal shadow tokens, \codename ensures that the obfuscated data possesses distinct semantics but retains the critical features required for effective model training. 

We evaluate \codename in terms of its defense performance and utility preservation performance.  
These experiments cover classification and generation models that span 6 model families (\textit{e.g.}, BERT, Llama), 21 models of varying sizes and types, and 6 diverse datasets. On its defense performance, we assess the attack efficacy of state-of-the-art GIA methods against \codename, using both comparative studies and adaptive scenarios. 
The results demonstrate \codename's remarkable protection capabilities, with the strongest attacks recovering fewer than 2\% of the original tokens. 
This surpasses the best baseline defense by defending up to 98\% more tokens. 
A human evaluation on the semantic similarity between original and obfuscated data confirms \codename's effectiveness in disrupting semantics.
In addition, we explore the robustness of \codename under potential adaptive attacks designed to bypass \codename, where the adversary has significantly enhanced capabilities with complete knowledge of the defense mechanisms and parameters. 
Results indicate that \codename remains highly effective, demonstrating its resilience to advanced adaptive attacks. 

On the utility preservation ability of \codename, we measure the model utilities across binary classification, multi-class classification, and sentence generation tasks with \codename applied. 
Results show F1 scores of up to 0.92 for classification models and perplexities as low as 5.45 for generative models, which are comparable to models fine-tuned on original data. This highlights \codename's effectiveness in utility preservation. In addition, for generative models, a human evaluation on the fluency and topic relevance of the original and defended models' generated content further confirms the utility preservation ability of \codename. Compared to other defenses, \codename achieves the best balance between data privacy and model utility. An ablation study is conducted to highlight the distinct and complementary roles of the components of \codename in achieving strong privacy protection and effective utility preservation.

\myparagraph{Contributions}
Our main contributions in this work are as follows:

\begin{itemize}[left=1em]
\item \textbf{A novel defense paradigm.} We introduce \codename, a pioneering \textit{token-level} defense mechanism that decouples the inherent connection between gradient, embedding, and token spaces exploited in GIAs. Unlike existing approaches that operate at the gradient level, \codename leverages the intrinsic properties of the three spaces to neutralize GIAs, setting a new direction for privacy-preserving collaborative learning.

\item \textbf{A generic and robust obfuscation process.} \codename incorporates a two-stage obfuscation process, which identifies semantically distinct but embedding-proximate tokens as shadow substitutes for the original tokens, achieving robust obfuscation while maintaining the required critical features for effective model training.

\item \textbf{A formal framework and theoretical analysis.} We establish a strong formal foundation for \codename, leveraging optimization theory and embedding space analysis to demonstrate utility preservation through bounded loss differences and defense efficacy by disrupting semantic alignment in token reconstruction.

\item \textbf{Comprehensive empirical evaluation.} We conduct extensive experiments across diverse model architectures (\textit{e.g.}, BERT, Llama) and datasets. Results highlight \codename's superior defense capability against state-of-the-art GIAs and challenging adaptive attacks while providing the best balance for model utilities. Human evaluations are provided to validate \codename's effectiveness in preserving utility and protecting privacy. An ablation study is conducted to assess the contributions of its core components.

\end{itemize}

\myparagraph{Availability} Our code is publicly available at: \url{https://github.com/Trusted-System-Lab/GHOST}.

\section{Background}
This section introduces the necessary background knowledge to understand \codename comprehensively.

\subsection{Language Modeling}
Language modeling has become a major focus in NLP due to the development of neural networks. There are two main frameworks, Masked Language Models (MLM), \textit{e.g.}, BERT~\cite{bert}, RoBERTa~\cite{roberta}, DeBERTa~\cite{he2020deberta}, and Causal Language Models (CLM), \textit{e.g.}, GPT-2~\cite{gpt2}, Llama~\cite{touvron2023llama}, Gemma~\cite{team2024gemma}. These models adopt the transformer encoder or decoder architectures~\cite{transformer} and possess millions to billions of parameters pre-trained on a large corpus of text. Additional layers are attached to these pre-trained models for specific downstream tasks if necessary, such as sentiment analysis~\cite{sentiment}, named-entity recognition~\cite{ner}, and machine translation~\cite{koehn2009statistical}.

Recently, training or fine-tuning language models largely benefit from collaborative learning methods such as Federated Learning (FL)~\cite{fl, FedML, tian2022fedbert, cai2023federated}, which enable multiple clients to collaboratively train a model while preserving the privacy of their individual data. Instead of sharing raw data, clients transmit training information, such as gradients, to a central server for coordination and aggregation.

\subsection{Tokens and Embeddings}
\label{approach_preliminaries}
We introduce the tokens and embeddings in language models.
The following focuses on a single input sentence processed by the language models, which can be extended to batched input sentences without the loss of generality.

\myparagraph{Tokens in embedding space}
A \textit{token} is the smallest unit of text processed by a language model. A token \textit{embedding} is a vector representation of the token that captures the semantics of the token in a high-dimensional embedding space. The token embedding layer is typically trained in the pre-training phase and undergoes a minor update during the fine-tuning phase. Consider a sentence comprising $n_w$ words, denoted as $\bm{w} = (w_1, w_2, \cdots, w_{n_w})$. A tokenizer processes this sentence and converts it into a sequence of $n_t$ tokens, $\bm{t} = (t_1, t_2, \cdots, t_{n_t})$.
Each token $t_i$ in the sequence $\bm{t}$ is an element of a predefined vocabulary $\mathcal{V}$. Notably, the number of words and the number of tokens are typically not equal, \textit{i.e.}, $n_w \neq n_t$.
The token sequence is mapped to a corresponding embedding matrix $\bm{E} \in \mathbb{R}^{n_t \times d_e}$, where $d_e$ is the dimensionality of the embedding space.

\myparagraph{Token neighbors based on embedding}
Given the embedding vectors $\bm{e}'$ and $\bm{e}''$ of any two token $t', t'' \in \mathcal{V}$, the cosine distance is used to measure their distance, \textit{i.e.}, 
$$
d(t', t'') \triangleq 1 - \frac{\bm{e}' (\bm{e}'')^T}{\|\bm{e}'\| \|\bm{e}''\|}.
$$
The \textit{top-$k$ neighbors} of a token $t$ are the $k$ tokens having the smallest $k$ embedding distances, denoted as
$$
\mathcal{N}(t, k) \triangleq \{
t' \mid t' \in \mathrm{argsort}(d(t, \mathcal{V}))[1:k]
\},
$$
where $d(t, \mathcal{V})$ is a vector representing the distance between $t$ and each token in $\mathcal{V}$ (including $t$ itself).
The range $[1:k]$ excludes the first token, which is $t$ itself. 

\myparagraph{Lemma of a token}
The \textit{lemma} of a token represents its base linguistic form.
For a full-word token, the lemma is its dictionary form, without inflectional endings or grammatical variations.
If the token is a subword, \textit{e.g.}, a fragment of a word, its lemma is simply the token itself because subwords do not represent full linguistic units. 
Let $\ell$ denote the lemmatization function that maps a token $t$ to its lemma $l$,
$$
\ell: \mathcal{V} \to \mathcal{L}, \quad \ell(t) = l.
$$

\subsection{Gradient Inversion Attacks}
Gradient inversion attacks~\cite{dlg, tag, idlg, ig, decep, film, lamp, grab, dager} are techniques designed to reconstruct a client's training data by exploiting gradient information exchanged during FL. Unlike privacy and security concerns against centralized learning~\cite{wang2024corelocker, wang2025re, liu2024purpose}, 
these attacks assume that an adversary can intercept the communication between the server and the victim client with full knowledge of the model architecture and parameters. To recover the input data batch $\bm{B}^*_i$, an alternative data batch $\bm{B}_{i}$ is randomly initialized and optimized by solving the following optimization problem, 
\begin{equation*} \small
    \underset{\bm{B}_{i}}{\argmin} ~\mathcal{D}(\nabla_{\bm{\theta}^t} \mathcal{L}(\bm{B}_{i}),~ \nabla_{\bm{\theta}^t} \mathcal{L}(\bm{B}^*_i)), 
\end{equation*} 
where $\nabla_{\bm{\theta}^t} \mathcal{L}(\bm{B}_{i})$ is the dummy gradients, $\nabla_{\bm{\theta}^t} \mathcal{L}(\bm{B}^*_i)$ is the observed gradients, and $\mathcal{D}$ is a distance measure function.
For text data, this alternative batch represents the embeddings of the data, which are then mapped back into discrete tokens by matching them to the token embeddings of the language model. Recent advanced attacks~\cite{lamp, grab, dager} improve this process by using discrete optimization techniques or analyzing the rank of attention matrices, resulting in more accurate or even near-perfect recovery of the original data.

\section{Approach}
\label{approach}
This section provides a detailed description of \codename against GIAs on language models. 
Section~\ref{threat_model}  discusses the threat model, outlining the adversarial assumptions and attack scenarios. 
Section~\ref{overview} provides a high-level overview of the overall approach. Sections~\ref{neighbor_filtering} and \ref{neighbor_optimization} delve into the specifics of two components of \codename, explaining their roles in ensuring data privacy protection while preserving model utility.

\subsection{Threat Model}
\label{threat_model}
In this work, we consider the FedSGD~\cite{fl} setting, which involves sharing gradients after each local training iteration.
We consider an honest-but-curious adversary that either controls the server or eavesdrops on the communication between the victim client and the server. 
The adversary has full knowledge of the language model architecture and parameters. Its objective is to recover the private training data of the victim client by utilizing its shared gradient updates. 
Our threat model aligns with the ones in current attacks, and reflects a realistic training scenario where the only interaction between the client and the server is the exchange of gradient updates.

Some existing attacks~\cite{dlg, tag, lamp, film} operate under a threat model that assumes additional knowledge, such as the ground truth labels or the exact lengths of individual sequences within the data batch. While such assumptions may not always be realistic in practical training settings, our defined threat model does not explicitly exclude this information to avoid providing an undue advantage to the defender.

\subsection{Overview of \codename}
\label{overview}
Figure~\ref{fig:overview} shows an overview of \codename, which employs a two-stage process: 
\begin{enumerate}[left=0pt]
    \item a \textit{searching} step that refines the pool of shadow tokens with distinct semantics using well-defined similarity heuristics, and
    \item a \textit{selection} step that identifies the optimal shadow tokens that preserve critical features (hidden states) of the original data.  
\end{enumerate}

\begin{figure*}[!t]
    \centering
    \includegraphics[width=0.9\linewidth]{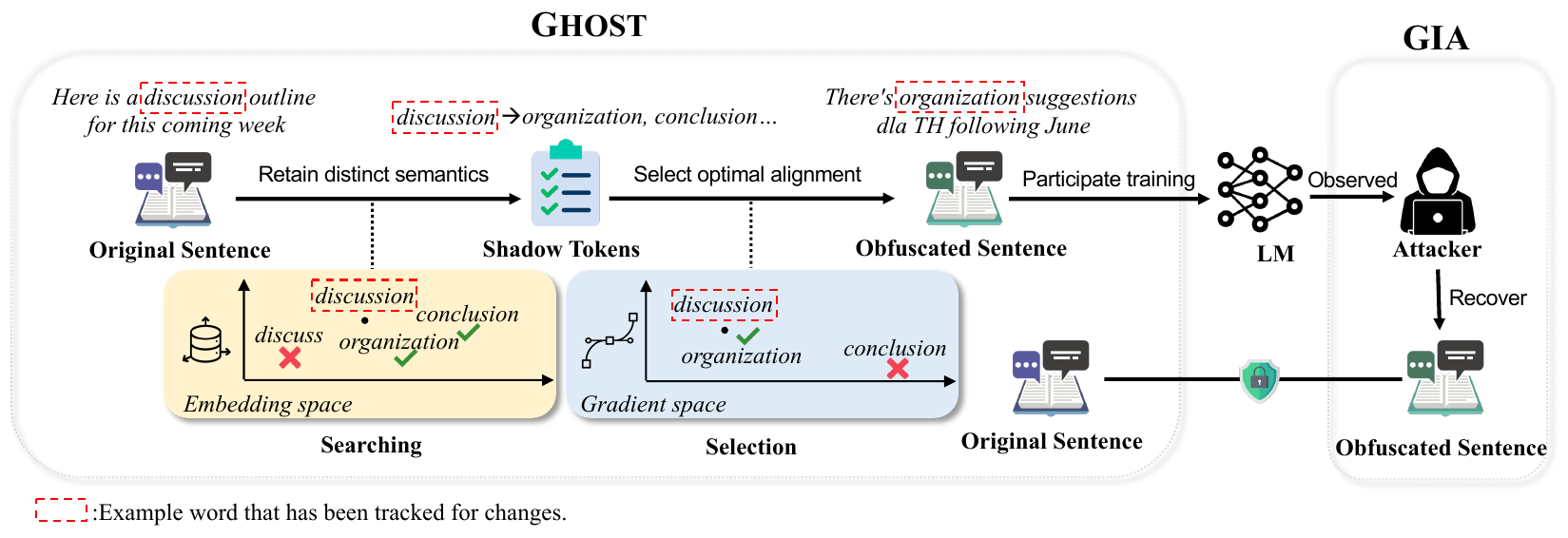}
    \caption{Overview of \codename.}
    \label{fig:overview}
\end{figure*}
By combining these two steps, \codename ensures that the obfuscated data maintains distinct semantics from the original data to defend against GIAs while preserving the critical features necessary for effective model training. We highlight that \codename operates without relying on gradient information during the fine-tuning, allowing it to obfuscate data prior to the fine-tuning process. 

\subsection{Searching}
\label{neighbor_filtering}

GIAs for language models aim to recover the original tokens by optimizing dummy embeddings that produce similar gradients in the gradient space, then map back to the token space. 
The objective of the searching step in \codename is to break this connection by retaining the semantically distinct neighbors of the original token as shadow tokens. The shadow tokens serve as replacement candidates for the original tokens.
This is accomplished through the application of three defined similarity measures:
\begin{itemize}[left=0pt]
\item \textbf{Indirect Similarity}:
Tokens $t'$ and $t''$ satisfy indirect similarity if their sets of top-$k$ neighbors have an overlap larger than a pre-defined threshold:
$$
S_i(t', t'') \triangleq |\mathcal{N}(t', k) \cap \mathcal{N}(t'', k)| > \tau_o.
$$

\item \textbf{Direct Similarity}:
Tokens $t'$ and $t''$ satisfy direct similarity if they are in each other's top-$k$ neighbors:
$$
S_d(t', t'') \triangleq t' \in \mathcal{N}(t'', k) \wedge t'' \in \mathcal{N}(t', k).
$$

\item \textbf{Common Lemma Similarity}: Tokens $t'$ and $t''$ satisfy common lemma similarity if they share a common lemma:
$$
S_c(t', t'') \triangleq (\ell(t') = \ell(t'')).
$$
\end{itemize}

Therefore, given a token $t$ and its top-$k$ neighbors $\mathcal{N}(t, k)$, for every neighbor token $t'$ in $\mathcal{N}(t, k)$, it is considered as a shadow token that forms the shadow token set $\mathcal{S}_t$ if it does not satisfy any one of the similarities:
$$
\mathcal{S}_t = \{t' \mid (t' \nvDash S_i(t',t) \vee S_d(t',t) \vee S_c(t',t)), \forall t' \in \mathcal{N}(t, k)\}.
$$

The searching process is detailed in Algorithm~\ref{alg:neighbor_filter}.
Each token in the vocabulary undergoes a similarity evaluation with the above-mentioned similarity criteria against its top-$k$ neighbors. The neighbor tokens that satisfy the similarity criteria are removed, as they are considered to have similar semantics.
For each token, if it results in an empty candidate set, top-$k$ is gradually incremented for a broader search range.
This structured process results in a refined set of shadow tokens $\mathcal{S}$, which is subsequently passed to the selection step.

\begin{algorithm}[t]
\caption{search($\cdot$)}
\label{alg:neighbor_filter}
\KwIn{
    $\mathcal{V}$ - the vocabulary of the language model; \\
    top-$k$ - the number of neighbors;\\
    $\tau_o$ - the neighbor overlap threshold.
}
\KwOut{
    the shadow tokens.
}

$\mathcal{S} \gets \mathrm{Dict}()$\;
$k \gets \text{top-}k$\;

\ForEach{$t \in \mathcal{V}$}{
\While{\True}{
    $\mathcal{S}_t \gets \mathcal{N}(t, k)$\;
    $\mathcal{S}_t' \gets \mathrm{copy}(\mathcal{S}_t)$\;
    $l_t \gets \ell(t)$\;

    \ForEach{$t' \in \mathcal{S}_t$}{
        $\mathcal{S}_{t'} \gets \mathcal{N}(t', k)$\;
        $l_{t'} \gets \ell(t')$\;
        
        \If{
        $(|\mathcal{S}_t \cap \mathcal{S}_{t'}| > \tau_o$
        \Or $t \in \mathcal{S}_{t'}$
        \Or $l_t = l_{t'})$
        }{
            $\mathcal{S}_t'.\mathrm{remove}(t')$\;
        }
    }
\If{$\mathcal{S}_t' \neq \varnothing$}{
    $\mathcal{S}.\mathrm{set}(t, \mathcal{S}_t')$\;
    $k \gets \text{top-}k$\;
    \textbf{break}\;
}
\Else{$k \gets k + 10$\;}
}
}
\Return $\mathcal{S}$\;

\end{algorithm}

\subsection{Selection}
\label{neighbor_optimization}

The selection step aims to identify the optimal shadow tokens to create obfuscated data, which preserves the critical features for effective model training.  
The selection objective is to find shadow tokens such that the hidden states produced by the obfuscated data closely align with those of the original data across all layers of the language model.
The selection minimizes the Mean Squared Error (MSE) between the hidden states. 

Specifically, given an input token sequence $\bm{t}$, the goal is to identify the optimal shadow token sequence $\bm{t}'$ from the shadow token set $\mathcal{S}$ for each token $t_i$ in $\bm{t}$:
$$
\bm{t}' = \underset{\bm{t}' \in \mathcal{S}}{\argmin} \sum_{i=1}^{n_l} \text{MSE}(\bm{h}^{(i)}(\bm{t}), \bm{h}^{(i)}(\bm{t}')),
$$
where $\bm{h}^{(i)}(\bm{t})$ and $\bm{h}^{(i)}(\bm{t}')$ represents the hidden states taking $\bm{t}$ and $\bm{t}'$ as input, respectively.
Here, $n_l$ is the number of considered layers in the model.

The selection process is conducted iteratively for each token in the input sequence, as outlined in Algorithm~\ref{alg:neighbor_optimization}.
First, the original text sentence $\bm{x}$ is tokenized into a token sequence $\bm{t}_{ori}$ (line 1). The hidden states of the original token sequence $\mathcal{H}_{ori}$ are computed (line 2).
Then, the shadow tokens are used to randomly initialize an initial obfuscated token sequence $\bm{t}$ (line 3).
Next, each token $t$ in $\bm{t}$ is substituted with each shadow token in $\mathcal{S}_{t_{ori}}$. The optimization objective is to minimize the MSE of the two hidden states. The shadow tokens that produce hidden states closest to the original hidden states $\mathcal{H}_{\bm{t}}$ are retained (lines 6--9). If the MSE change between two successive iterations is less than a pre-defined threshold $\tau_d$, early stopping is triggered (lines 10--12). Finally, the obfuscated token sequence $\bm{t}$ is detokenized and returned as the obfuscated textual sentence $\bm{x}$.





\begin{algorithm}[t]
\caption{select($\cdot$)}
\label{alg:neighbor_optimization}
\KwIn{
    $\bm{x}$ - the original text sentence; \\
    $\bm{f}$ - the language model; \\
    $\mathcal{S}$ - the shadow tokens; \\
    $\tau_d$ - the hidden states distance threshold.
}
\KwOut{
    the obfuscated text sentence.
}

$\bm{t}_{\mathrm{ori}} \gets \mathrm{tokenize}(\bm{x})$\;
$\mathcal{H}_{\mathrm{ori}} \gets \mathrm{getHStates}(\bm{t}_{\mathrm{ori}}, \bm{f})$\;
$\bm{t} \gets \mathrm{randomInitialize}(\mathcal{S}, \bm{t}_{\mathrm{ori}})$\;
$\mathcal{H} \gets \mathrm{getHStates}(\bm{t}, \bm{f})$\;
\While{\True}{
    \ForEach{$i \in \mathrm{range}(0,\bm{t}.\mathrm{length})$}{
        $\mathcal{S}_{t_{ori}} \gets \mathcal{S}.\mathrm{get}(\bm{t}_{ori}[i])$\;
        $
        \bm{t}[i] \gets 
        \underset{
        \bm{t}'[i] \in \mathcal{S}_{t_{ori}}
        }{
        \argmin
        }
        \mathrm{MSE}(
        \mathcal{H}_{\mathrm{ori}},
        \mathrm{getHStates}(\bm{t}', \bm{f})
        )
        $\;
        $\mathcal{H}' \gets \mathrm{getHStates}(\bm{t}, \bm{f})$\;
    }
    \If{
    $| \mathrm{MSE}(\mathcal{H}_{\mathrm{ori}}, \mathcal{H}) - \mathrm{MSE}(\mathcal{H}_{\mathrm{ori}}, \mathcal{H}') | < \tau_d$
    }{
        \textbf{break}\;
    }
    $\mathcal{H} \gets \mathcal{H}'$\;
}
$\bm{x} \gets \mathrm{detokenize}(\bm{t})$\;
\Return $\bm{x}$\;

\end{algorithm}

\section{Theoretical Analysis}

This section analyzes the effectiveness of \codename. We demonstrate that the model fine-tuned on the dataset $\widetilde{\mathcal{D}}$ obfuscated by \codename achieves similar performance on the original dataset $\mathcal{D}$, but with different parameter gradients to defend against GIA.
The key insight is that the forward behavior of a model, \textit{e.g.}, its input-output mapping, is generally more stable than its backward behavior, \textit{e.g.}, the gradients, during fine-tuning. As a result, preserving the stability of input-output pairs ensures that the model maintains its utility, whereas the instability in the gradients makes it more difficult for GIAs to succeed. Furthermore, fine-tuning a pre-trained model enhances the local stability of inputs in the embedding space. This characteristic is beneficial when substituting original tokens with shadow tokens, as it helps preserve the model’s utility while disrupting the gradient-token correspondence exploited by GIAs.

This section combines the model and loss function into one output function $L(\bm{x}; \bm{\theta})$. For simplicity, we omit the label term $y$.
This is reasonable as a loss function with its scalar output is usually used to measure the model's performance to guide the training.

We discuss three models with different parameter settings: 
Given a pre-trained model $L(\bm{x}; \bm{\theta})$ and a dataset $\mathcal{D}$, we denote the fine-tuned model as $L(\bm{x}; \bm{\theta}^*)$ with the parameters $\bm{\theta}^*$.
Additionally, we denote $L(\bm{x}; \tilde{\bm{\theta}})$ as the model fine-tuned on the dataset $\widetilde{\mathcal{D}}$ obfuscated by \codename.

\subsection{Assumptions}

With the pre-trained model $L(\bm{x}; \bm{\theta})$, we assume that the model $L(\bm{x}; \bm{\theta}^*)$ fine-tuned on $\mathcal{D}$ achieves a low loss value on $\mathcal{D}$, indicating good performance.
Similarly, we aim to make the model $L(\bm{x}; \tilde{\bm{\theta}})$ fine-tuned on $\widetilde{\mathcal{D}}$ achieve a low loss value on $\mathcal{D}$, reflecting utility preservation.
Formally, we set the following assumptions.

As shown by Roux et al.~\cite{le2007continuous}, differentiability is a fundamental property of neural networks. It essentially allows parameters of neural networks to be optimized via gradients. Based on this, we set the following assumption.

\begin{assumption}[Model and loss function]
    The model $L(\bm{x}; \bm{\theta})$ is a continuous and differentiable function with respect to $\bm{x}$ and $\bm{\theta}$ and outputs a non-negative value.
\end{assumption}

Naturally, a model fine-tuned on a specific dataset $\mathcal{D}$ (or $\widetilde{\mathcal{D}}$) tends to exhibit lower loss values on inputs $\bm{x} \in \mathcal{D}$ (or $\tilde{\bm{x}} \in \widetilde{\mathcal{D}}$). Moreover, the gradient of the loss with respect to each parameter is typically smaller at these inputs, as the training process drives the parameters towards local optima in the loss landscape. 
We formulate this as the following assumption. 

\begin{assumption}[Dataset adaptation]
\label{assumption:dataset_adaptation}
    The model fine-tuned on a specific dataset has a lower loss value on the same dataset than on other datasets.
    That is, for any $\bm{x} \in \mathcal{D}$ and $\tilde{\bm{x}} \in \widetilde{\mathcal{D}}$, we have
    \begin{gather*}
    \begin{aligned}
        L(\bm{x}; \bm{\theta}^*) \leq L(\tilde{\bm{x}}; \bm{\theta}^*), & \quad
        L(\tilde{\bm{x}}; \tilde{\bm{\theta}}) \leq L(\bm{x}; \tilde{\bm{\theta}}), \\
        \nabla_{\bm{\theta}^*} L(\bm{x}; \bm{\theta}^*) \leq \nabla_{{\bm{\theta}^*}} L(\tilde{\bm{x}}; \bm{\theta}^*), & \quad
        \nabla_{\tilde{\bm{\theta}}} L(\tilde{\bm{x}}; \tilde{\bm{\theta}}) \leq \nabla_{\tilde{\bm{\theta}}} L(\bm{x}; \tilde{\bm{\theta}}),
    \end{aligned}
    \end{gather*}
    where we operate the inequality operators on a gradient vector to represent the inequality relation of each corresponding gradient component in two vectors.
\end{assumption}

As shown by Radiya et al.~\cite{radiya2020fine}, fine-tuning pre-trained models typically introduces only small parameter updates. We formulate this as  the following assumption.
\begin{assumption}[Small parameter updates in fine-tuning]
\label{assumption:small_update}
    The fine-tuning process results in small parameter updates to adapt the model to the new dataset, \ie,
    
    \begin{gather*}
        \| \Delta_{\bm{\theta}^*} \| \leq \epsilon_{\bm{\theta}^*}, 
        \quad
        \| \Delta_{\tilde{\bm{\theta}}} \| \leq \epsilon_{\tilde{\bm{\theta}}},
    \end{gather*}
    where $\epsilon_{\bm{\theta}^*}$ and $\epsilon_{\tilde{\bm{\theta}}}$ are a small positive value and $\| \cdot \|$ is the Euclidean norm.
\end{assumption}

As shown by Zhang et al.~\cite{zhang2018efficient}, neural networks exhibit the local smoothness property that perturbations within a certain range yield similar output, hence similar gradients. 
Our approach follows this property to construct the obfuscated data, where it systematically selects tokens within a close embedding range that yield similar outputs but exhibit distinct semantics. 
We set the following property to indicate this.

\begin{assumption}[Similar outputs on obfuscated data]
\label{assumption:dataset_modification}
    The perturbation introduced by neighbor tokens is small, such that the produced loss and gradients are similar, satisfying the local smoothness property.
    Formally, for any $\bm{x} \in \mathcal{D}$ and $\tilde{\bm{x}} \in \widetilde{\mathcal{D}}$,
    \begin{gather*}
        | L(\bm{x}; \bm{\theta}) - L(\tilde{\bm{x}}; \bm{\theta}) | \leq \epsilon_{L_{\bm{\theta}}}, 
        \quad
        \| \nabla_{\bm{\theta}} L(\bm{x}; \bm{\theta}) - \nabla_{\bm{\theta}} L(\tilde{\bm{x}}; \bm{\theta}) \| \leq \epsilon_{g_{\bm{\theta}}},
    \end{gather*}
    where $\epsilon_{L_{\bm{\theta}}}$ and $\epsilon_{g_{\bm{\theta}}}$ are small positive values.
\end{assumption}

For all small positive values in above assumptions, \ie, $\epsilon_{\bm{\theta}^*}$, $\epsilon_{\tilde{\bm{\theta}}}$, $\epsilon_{L_{\bm{\theta}}}$, and $\epsilon_{g_{\bm{\theta}}}$, a joint upper bound $\epsilon$ exists, which can be defined as $\epsilon = \text{max}(\epsilon_{\bm{\theta}^*}, \epsilon_{\tilde{\bm{\theta}}}, \epsilon_{L_{\bm{\theta}}}, \epsilon_{g_{\bm{\theta}}})$. In Section~\ref{exp: validation_theory}, we experimentally validate the existence of this joint upper bound.

\subsection{Model Utility Preservation and Defense against GIAs}

This section analyzes the properties of the model fine-tuned with the obfuscated dataset $\widetilde{\mathcal{D}}$ produced by \codename, where it replaces original tokens with selected neighbor tokens to preserve utility while defending against GIAs. We discuss the model properties from two aspects, including model utility preservation (indicated by loss values) and defense against GIAs (indicated by the mapping from input data points to gradients). In the following analysis, we use the limit $\epsilon \to 0$ only to express asymptotic scaling. We note that our defense does not aim for $\epsilon$ to vanish, which will cause the obfuscated and original tokens to be identical, and the defense will provide no protection.

\subsubsection{Model Utility Preservation}

The following theorem demonstrates that for data point $\bm{x}$ and its obfuscated one $\tilde{\bm{x}}$, the loss value difference between $L(\bm{x}, \tilde{\bm{\theta}})$ and $L(\tilde{\bm{x}}, \tilde{\bm{\theta}})$ has a linear asymptotic growth rate as $\epsilon \to 0$.

\begin{theorem}[Model utility preservation]
\label{thm:utility}

    Given the pre-trained model $L(\bm{x}; \bm{\theta})$ and the model $L(\bm{x}; \tilde{\bm{\theta}})$ fine-tuned on the obfuscated dataset $\widetilde{\mathcal{D}}$, for any pair of the original data point $\bm{x} \in \mathcal{D}$ and its obfuscated data point $\tilde{\bm{x}} \in \widetilde{\mathcal{D}}$, their loss value difference satisfies the following linear upper asymptotic growth rate, when $\epsilon \to 0$,
    \begin{gather*}
        L(\bm{x}; \tilde{\bm{\theta}}) - L(\tilde{\bm{x}}; \tilde{\bm{\theta}}) = O(\epsilon),
    \end{gather*}
    where $\epsilon$ is defined by Assumption~\ref{assumption:small_update}.
    More strictly, we have
    \begin{gather*}
        0
        \leq \limsup_{\epsilon \to 0} \frac{L(\bm{x}; \tilde{\bm{\theta}}) - L(\tilde{\bm{x}}; \tilde{\bm{\theta}})}{\epsilon}
        < k,
    \end{gather*}
    where $k$ is a positive constant.
\end{theorem}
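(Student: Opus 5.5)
The lower bound is immediate. By the second inequality of Assumption~\ref{assumption:dataset_adaptation}, $L(\tilde{\bm{x}}; \tilde{\bm{\theta}}) \leq L(\bm{x}; \tilde{\bm{\theta}})$ for every pair. Hence the difference is non-negative for every $\epsilon$, and so is its $\limsup$ after dividing by $\epsilon>0$. All the work is in the upper bound.

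For the upper bound I will move from $\tilde{\bm{\theta}}$ back to the pre-trained parameters $\bm{\theta}$, where Assumption~\ref{assumption:dataset_modification} controls the gap between $\bm{x}$ and $\tilde{\bm{x}}$. Writing $\tilde{\bm{\theta}} = \bm{\theta} + \Delta_{\tilde{\bm{\theta}}}$, I split
\begin{gather*}
L(\bm{x}; \tilde{\bm{\theta}}) - L(\tilde{\bm{x}}; \tilde{\bm{\theta}})
= \bigl[L(\bm{x}; \tilde{\bm{\theta}}) - L(\bm{x}; \bm{\theta})\bigr]
+ \bigl[L(\bm{x}; \bm{\theta}) - L(\tilde{\bm{x}}; \bm{\theta})\bigr]
+ \bigl[L(\tilde{\bm{x}}; \bm{\theta}) - L(\tilde{\bm{x}}; \tilde{\bm{\theta}})\bigr].
\end{gather*}
The middle bracket is at most $\epsilon_{L_{\bm{\theta}}} \leq \epsilon$ by Assumption~\ref{assumption:dataset_modification}. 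For the two outer brackets I use the differentiability in the first assumption and take a first-order Taylor expansion in $\bm{\theta}$ around $\bm{\theta}$. Their sum becomes
\begin{gather*}
\bigl(\nabla_{\bm{\theta}} L(\bm{x}; \bm{\theta}) - \nabla_{\bm{\theta}} L(\tilde{\bm{x}}; \bm{\theta})\bigr)^{T} \Delta_{\tilde{\bm{\theta}}} + r(\Delta_{\tilde{\bm{\theta}}}),
\end{gather*}
where the remainder satisfies $r = o(\|\Delta_{\tilde{\bm{\theta}}}\|)$. By Cauchy--Schwarz together with Assumptions~\ref{assumption:dataset_modification} and \ref{assumption:small_update}, the linear term is at most $\epsilon_{g_{\bm{\theta}}}\epsilon_{\tilde{\bm{\theta}}} \leq \epsilon^2$. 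The remainder is $o(\epsilon)$. The total is therefore $\epsilon + \epsilon^2 + o(\epsilon)$, which gives $O(\epsilon)$ and $\limsup_{\epsilon\to 0}(\cdot)/\epsilon \leq 1$. Any constant $k>1$ then gives the strict inequality.

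The main obstacle is the remainder term. A bare first-order expansion gives $o(\|\Delta\|)$ only at fixed $\bm{x}$, whereas $\tilde{\bm{x}}$ itself depends on $\epsilon$. I would handle this with a uniform bound. If the first-order differentiability is not strong enough, I would strengthen the smoothness to a locally Lipschitz gradient in $\bm{\theta}$ on a compact neighbourhood of $\bm{\theta}$ covering the finitely many data points; by the first assumption, $L$ is continuous, so this neighbourhood is compact and the data set is finite. The remainder is then bounded by $\tfrac{M}{2}\|\Delta_{\tilde{\bm{\theta}}}\|^2 \leq \tfrac{M}{2}\epsilon^2$.

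A cruder fallback avoids the gradient-difference term entirely. The mean value theorem bounds each outer bracket separately by $G\epsilon$, where $G$ is a uniform bound on $\|\nabla_{\bm{\theta}} L\|$ over that compact set. This gives the limit $\leq 1 + 2G$, so $k = 2 + 2G$ works and the conclusion still holds with a worse constant.
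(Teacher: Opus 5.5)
Your proposal is correct and follows essentially the same route as the paper. Both arguments Taylor-expand the two losses in $\bm{\theta}$ around the pre-trained parameters, bound the loss gap and the gradient gap at $\bm{\theta}$ via Assumption~\ref{assumption:dataset_modification} (with Cauchy--Schwarz against $\|\Delta_{\tilde{\bm{\theta}}}\| \leq \epsilon$), and take the nonnegative lower bound from Assumption~\ref{assumption:dataset_adaptation}. Your extra care about the remainder being uniform (since $\tilde{\bm{x}}$ varies with $\epsilon$) is a point the paper absorbs into an unqualified $O(\|\Delta_{\tilde{\bm{\theta}}}\|^2)$ term, so your version is slightly more careful.
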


\begin{proof}

    Given the Taylor expansions of the loss function, which describe the model state from the pre-trained model to the fine-tuned model on the obfuscated dataset, we have
    \begin{gather*}
    \begin{aligned}
        L(\bm{x}, \tilde{\bm{\theta}})
        &=
        L(\bm{x}; \bm{\theta}) + \nabla_{\bm{\theta}} L(\bm{x}; \bm{\theta}) \Delta_{\tilde{\bm{\theta}}} + O(\| \Delta_{\tilde{\bm{\theta}}} \|^2),\\
        L(\tilde{\bm{x}}, \tilde{\bm{\theta}})
        &=
        L(\tilde{\bm{x}}; \bm{\theta}) + \nabla_{\bm{\theta}} L(\tilde{\bm{x}}; \bm{\theta}) \Delta_{\tilde{\bm{\theta}}} + O(\| \Delta_{\tilde{\bm{\theta}}} \|^2). \\
    \end{aligned}
    \end{gather*}
    By Assumptions~\ref{assumption:small_update} and \ref{assumption:dataset_modification}, we have
    \begin{align*}
        & L(\bm{x}; \tilde{\bm{\theta}}) - L(\tilde{\bm{x}}; \tilde{\bm{\theta}}) \\
        \leq & | L(\bm{x}, \tilde{\bm{\theta}}) - L(\tilde{\bm{x}}, \tilde{\bm{\theta}}) | \\
        = &  
        | L(\bm{x}; \bm{\theta}) - L(\tilde{\bm{x}}; \bm{\theta}) + \nabla_{\bm{\theta}} L(\bm{x}; \bm{\theta}) \Delta_{\tilde{\bm{\theta}}} - \nabla_{\bm{\theta}} L(\tilde{\bm{x}}; \bm{\theta}) \Delta_{\tilde{\bm{\theta}}} + \\
        & O(\| \Delta_{\tilde{\bm{\theta}}} \|^2) - O(\| \Delta_{\tilde{\bm{\theta}}} \|^2) | \\
        \leq & | L(\bm{x}; \bm{\theta}) - L(\tilde{\bm{x}}; \bm{\theta}) | + \| \nabla_{\bm{\theta}} L(\bm{x}; \bm{\theta}) - \nabla_{\bm{\theta}} L(\tilde{\bm{x}}; \bm{\theta}) \| \cdot \| \Delta_{\tilde{\bm{\theta}}} \| + \\
        & | O(\| \Delta_{\tilde{\bm{\theta}}} \|^2) - O(\| \Delta_{\tilde{\bm{\theta}}} \|^2) | \\
        \leq & \epsilon + \epsilon \cdot \epsilon + O(\epsilon^2) \\
        = & O(\epsilon).
    \end{align*}
    More strictly, considering Assumption~\ref{assumption:dataset_adaptation} and the definition of the upper bound of the asymptotic growth rate, we have that the theorem holds.
\end{proof}

Theorem~\ref{thm:utility} shows that the model fine-tuned with the obfuscated dataset $\widetilde{\mathcal{D}}$ can also achieve a similar loss value on the original dataset $\mathcal{D}$.
This ensures that the data obfuscated by \codename has minimal impact on model utility if the obfuscated data is close enough to the original data in the embedding space.
This theorem can be symmetrically applied on $L(\bm{x}, \bm{\theta}^*)$ and $L(\tilde{\bm{x}}, \bm{\theta}^*)$ to show that the local stability on loss value of two embedding neighbors.

\subsubsection{Effective Defense against GIAs}

We aim to show the difference between the gradients produced by the original data point $\bm{x}$ and its obfuscated one $\tilde{\bm{x}}$ is significant enough to reduce the effectiveness of GIAs. The gradient difference has a linear asymptotic growth rate when $\epsilon \to 0$. This indicates that the gradient difference has a higher growth rate when $\epsilon$ approaches zero, compared to the loss value difference in Theorem~\ref{thm:utility}.

\begin{theorem}[Effective defense against GIAs]

\label{thm:effective_defense}
    Given the conditions in Theorem~\ref{thm:utility}, the gradient difference between gradients generated by the original data point $\bm{x} \in \mathcal{D}$ and its obfuscated data point $\tilde{\bm{x}} \in \widetilde{\mathcal{D}}$ satisfies the following lower asymptotic growth rate, when $\epsilon \to 0$,
    \begin{gather*}
        \| \nabla_{\tilde{\bm{\theta}}} L(\bm{x}; \tilde{\bm{\theta}}) - \nabla_{\tilde{\bm{\theta}}} L(\tilde{\bm{x}}; \tilde{\bm{\theta}}) \|
        = O(1),
    \end{gather*}
    where $\epsilon$ defined in Assumption~\ref{assumption:small_update}.
    More strictly, we have
    \begin{gather*}
        \limsup_{\epsilon \to 0}
        \| \nabla_{\tilde{\bm{\theta}}} L(\bm{x}; \tilde{\bm{\theta}}) - \nabla_{\tilde{\bm{\theta}}} L(\tilde{\bm{x}}; \tilde{\bm{\theta}}) \| < k,
    \end{gather*}
    where $k$ is a positive constant.
\end{theorem}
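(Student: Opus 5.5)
The plan mirrors the proof of Theorem~\ref{thm:utility}, but the Taylor expansion is applied to the gradient map $\bm{\theta} \mapsto \nabla_{\bm{\theta}} L(\cdot;\bm{\theta})$ rather than to the loss. This requires one more order of regularity than Assumption 1 provides: I take $L$ to be twice continuously differentiable in $\bm{\theta}$, so that the Hessian $\nabla^2_{\bm{\theta}} L$ exists and is bounded on a neighbourhood of the pre-trained parameters $\bm{\theta}$. The ball of radius $\epsilon$ in Assumption~\ref{assumption:small_update} stays inside this neighbourhood, so the same bound holds there. Expanding around $\bm{\theta}$ along the fine-tuning update $\Delta_{\tilde{\bm{\theta}}}$ gives
\begin{gather*}
\nabla_{\tilde{\bm{\theta}}} L(\bm{x}; \tilde{\bm{\theta}}) = \nabla_{\bm{\theta}} L(\bm{x}; \bm{\theta}) + \nabla^2_{\bm{\theta}} L(\bm{x}; \bm{\theta}) \Delta_{\tilde{\bm{\theta}}} + O(\| \Delta_{\tilde{\bm{\theta}}} \|^2),
\end{gather*}
and the analogous identity for $\tilde{\bm{x}}$.

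Next I subtract the two expansions and apply the triangle inequality. This splits the gradient difference into three terms:
\begin{gather*}
\| \nabla_{\bm{\theta}} L(\bm{x}; \bm{\theta}) - \nabla_{\bm{\theta}} L(\tilde{\bm{x}}; \bm{\theta}) \| + \| \nabla^2_{\bm{\theta}} L(\bm{x}; \bm{\theta}) - \nabla^2_{\bm{\theta}} L(\tilde{\bm{x}}; \bm{\theta}) \| \cdot \| \Delta_{\tilde{\bm{\theta}}} \| + O(\| \Delta_{\tilde{\bm{\theta}}} \|^2).
\end{gather*}
\begin{itemize}
\item The first term is at most $\epsilon_{g_{\bm{\theta}}} \leq \epsilon$ by Assumption~\ref{assumption:dataset_modification}.
\item In the second term, the Hessians are \emph{not} assumed close, since the assumptions say nothing about second-order similarity of $\bm{x}$ and $\tilde{\bm{x}}$. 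I only use that both are bounded by a constant $C$, which bounds the term by $2C\epsilon$.
\item The last term is $O(\epsilon^2)$ by Assumption~\ref{assumption:small_update}.
\end{itemize}
Adding these gives a total of the form $c_0 + c_1 \epsilon + O(\epsilon^2)$, where $c_0$ collects any $\epsilon$-independent contribution, namely the Hessian bound and the fact that the gradients themselves are $O(1)$. This yields the stated $O(1)$ rate. Taking $\limsup_{\epsilon \to 0}$ then gives a finite constant, so some positive $k$ exceeds it, which is the strict form of the statement.

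To justify reading this as a \emph{lower}-rate contrast with Theorem~\ref{thm:utility}, I would then invoke Assumption~\ref{assumption:dataset_adaptation}. Componentwise, $\nabla_{\tilde{\bm{\theta}}} L(\tilde{\bm{x}}; \tilde{\bm{\theta}}) \leq \nabla_{\tilde{\bm{\theta}}} L(\bm{x}; \tilde{\bm{\theta}})$. This fixes the sign of the difference, which I interpret as the mismatch between the two gradients not being forced to scale with $\epsilon$. The contrast with the $O(\epsilon)$ loss difference is exactly what the theorem is meant to convey.

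The main obstacle is the mismatch between the assumptions and the claimed rate. As stated, Assumption~\ref{assumption:dataset_modification} already bounds the pre-trained gradient difference by $\epsilon$, and a naive expansion would then give an $O(\epsilon)$ bound, not merely $O(1)$. An $O(\epsilon)$ bound is formally still $O(1)$, so the statement is true but weak. The step I expect to need care is showing the Hessian term is honestly only $O(1)$-controlled; I would first try to argue this via the uniform Hessian bound on the compact $\epsilon$-ball. I would also note explicitly that only an upper bound (the $\limsup < k$) is being proved, since nothing in the assumptions gives a matching lower bound.
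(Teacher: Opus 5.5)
Your argument is correct for the statement as written, but it follows a different route from the paper. The paper stays at the level of the loss. It Taylor-expands $L$ at $\tilde{\bm{\theta}}$ for both $\bm{x}$ and $\tilde{\bm{x}}$, subtracts, and uses the loss clause of Assumption~\ref{assumption:dataset_modification} together with Theorem~\ref{thm:utility} to get $(\nabla_{\tilde{\bm{\theta}}} L(\bm{x}; \tilde{\bm{\theta}}) - \nabla_{\tilde{\bm{\theta}}} L(\tilde{\bm{x}}; \tilde{\bm{\theta}}))\Delta_{\tilde{\bm{\theta}}} = O(\epsilon)$. It then argues by contradiction that if either gradient grew faster than a constant, this product would exceed $O(\epsilon)$ because $\|\Delta_{\tilde{\bm{\theta}}}\| \leq \epsilon$.

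You instead expand the gradient map itself around $\bm{\theta}$ and use the gradient clause $\epsilon_{g_{\bm{\theta}}}$ directly as a norm bound. The price is an explicit hypothesis beyond Assumption~1: a Hessian bound uniform over the inputs and the $\epsilon$-ball. The paper's own $O(\|\Delta_{\tilde{\bm{\theta}}}\|^2)$ remainders tacitly need comparable second-order control anyway. What you gain is a genuine norm estimate. The paper's scalar bound only constrains the component of the gradient difference along $\Delta_{\tilde{\bm{\theta}}}$; orthogonal components are left free, and $\|\Delta_{\tilde{\bm{\theta}}}\| \leq \epsilon$ is only an upper bound. So its passage to the full norm is heuristic. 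In return, the paper's route works with the same kind of first-order loss expansion as Theorem~\ref{thm:utility}, just centred at $\tilde{\bm{\theta}}$, and that is how it frames the comparison between the two theorems.

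One technical point: with only a bounded Hessian, your $O(\|\Delta_{\tilde{\bm{\theta}}}\|^2)$ remainder would need a Lipschitz Hessian. The integral form of the mean value theorem avoids this. It gives $\|\nabla_{\tilde{\bm{\theta}}} L(\bm{x}; \tilde{\bm{\theta}}) - \nabla_{\bm{\theta}} L(\bm{x}; \bm{\theta})\| \leq C \|\Delta_{\tilde{\bm{\theta}}}\|$ directly, and likewise for $\tilde{\bm{x}}$.

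Be clear about what your estimate actually gives. Your three terms sum to at most $(1+2C)\epsilon + O(\epsilon^2)$. There is no $\epsilon$-independent term $c_0$, since the Hessian contribution is multiplied by $\|\Delta_{\tilde{\bm{\theta}}}\|$ and is therefore $O(\epsilon)$, not a constant. So you have shown the gradient difference is $O(\epsilon)$. This implies the claimed $O(1)$ and $\limsup < k$; the $\limsup$ is in fact $0$. But it also places the gradient difference in the same $O(\epsilon)$ class that Theorem~\ref{thm:utility} gives for the loss difference. Once the Hessian bound is granted, no separation between the two follows from these assumptions.

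Your appeal to Assumption~\ref{assumption:dataset_adaptation} cannot supply such a separation. A componentwise sign constraint carries no information about magnitude, so it does not show the mismatch is ``not forced to scale with $\epsilon$''. Treat that paragraph as informal interpretation at most. The theorem is purely an upper bound, and your derivation of it stands without that paragraph.
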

\begin{proof}

    Given the Taylor expansions of the loss function, which consider the first-order and second-order derivatives of the loss function and describe the model state from the pre-trained model to the model fine-tuned with the obfuscated dataset, we have
    \begin{gather*}
    \begin{aligned}
        L(\bm{x}; \bm{\theta})
        =~ &
        L(\bm{x}; \tilde{\bm{\theta}}) + \nabla_{\tilde{\bm{\theta}}} L(\bm{x}; \tilde{\bm{\theta}}) \Delta_{\tilde{\bm{\theta}}} + O(\| \Delta_{\tilde{\bm{\theta}}} \|^2), \\
        L(\tilde{\bm{x}}; \bm{\theta})
        =~ &
        L(\tilde{\bm{x}}; \tilde{\bm{\theta}}) + \nabla_{\tilde{\bm{\theta}}} L(\tilde{\bm{x}}; \tilde{\bm{\theta}}) \Delta_{\tilde{\bm{\theta}}} + O(\| \Delta_{\tilde{\bm{\theta}}} \|^2).
    \end{aligned}
    \end{gather*}
    Then, we have, by Assumption~\ref{assumption:dataset_modification} and Theorem~\ref{thm:utility},
    \begin{gather*}
    \begin{aligned}
        & \nabla_{\tilde{\bm{\theta}}} L(\bm{x}; \tilde{\bm{\theta}}) \Delta_{\tilde{\bm{\theta}}} - \nabla_{\tilde{\bm{\theta}}}  L(\tilde{\bm{x}}; \tilde{\bm{\theta}}) \Delta_{\tilde{\bm{\theta}}} \\
        \leq & | \nabla_{\tilde{\bm{\theta}}} L(\bm{x}; \tilde{\bm{\theta}}) \Delta_{\tilde{\bm{\theta}}} - \nabla_{\tilde{\bm{\theta}}}  L(\tilde{\bm{x}}; \tilde{\bm{\theta}}) \Delta_{\tilde{\bm{\theta}}} | \\
        = & | L(\bm{x}; \bm{\theta}) - L(\tilde{\bm{x}}; \bm{\theta}) + L(\tilde{\bm{x}}; \tilde{\bm{\theta}}) - L(\bm{x}; \tilde{\bm{\theta}}) + O(\| \Delta_{\tilde{\bm{\theta}}} \|^2) | \\
        = & | L(\bm{x}; \bm{\theta}) - L(\tilde{\bm{x}}; \bm{\theta}) | + | L(\tilde{\bm{x}}; \tilde{\bm{\theta}}) - L(\bm{x}; \tilde{\bm{\theta}}) | + | O(\| \Delta_{\tilde{\bm{\theta}}} \|^2) | \\
        \leq & \epsilon + O(\epsilon) + O(\epsilon^2) \\ 
        = & O(\epsilon).
    \end{aligned}
    \end{gather*}
    We prove this theorem by contradiction.
    We consider the asymptotic growth rate of each component of a vector in the following.
    Suppose one of $\nabla_{\tilde{\bm{\theta}}} L(\bm{x}; \tilde{\bm{\theta}})$ and $\nabla_{\tilde{\bm{\theta}}}  L(\tilde{\bm{x}}; \tilde{\bm{\theta}})$ satisfies a asymptotic growth rate higher than a constant one, \textit{e.g.}, a linear one.
    Since $\Delta_{\tilde{\theta}}$ has a linear asymptotic growth rate, as $0 \leq \| \Delta_{\tilde{\theta}} \| \leq \epsilon = O(\epsilon)$, then $\nabla_{\tilde{\bm{\theta}}} L(\bm{x}; \tilde{\bm{\theta}}) \Delta_{\tilde{\bm{\theta}}} - \nabla_{\tilde{\bm{\theta}}}  L(\tilde{\bm{x}}; \tilde{\bm{\theta}}) \Delta_{\tilde{\bm{\theta}}}$ has an asymptotic growth rate higher than $O(\epsilon)$, which is a contradiction to the above result.
    So, $\nabla_{\tilde{\bm{\theta}}} L(\bm{x}; \tilde{\bm{\theta}})$ and $\nabla_{\tilde{\bm{\theta}}} L(\tilde{\bm{x}}; \tilde{\bm{\theta}})$ have $O(1)$ as their asymptotic growth rate and $\| \nabla_{\tilde{\bm{\theta}}} L(\bm{x}; \tilde{\bm{\theta}})  - \nabla_{\tilde{\bm{\theta}}} L(\tilde{\bm{x}}; \tilde{\bm{\theta}}) \| = O(1)$.
    Moreover, by the definition of the upper bound of the asymptotic growth rate, we have that the theorem holds.
\end{proof}

Note that, when considering $\epsilon \to 0$, the constant asymptotic growth rate $O(1)$ is more significant compared to the linear asymptotic growth rate $O(\epsilon)$.
Moreover, note that Theorem~\ref{thm:utility} and Theorem~\ref{thm:effective_defense} are considering the same order of Taylor expansion.
It shows that the loss value difference always has a higher order of asymptotic growth rate than the gradient difference.
This indicates a similar model performance on the original dataset and the obfuscated dataset, but the gradient has a potentially more significant difference.
This theorem can also be symmetrically applied on $\nabla_{\bm{\theta}}^* L(\bm{x}, \bm{\theta}^*)$ and $\nabla_{\bm{\theta}}^* L(\tilde{\bm{x}}, \bm{\theta}^*)$ to show the gradient difference of two neighbor inputs are less stable than their loss value difference. In Section~\ref{exp: validation_theory}, we experimentally validate that the growth rate of the gradient is larger than the one of the loss when fine-tuning on the obfuscated data.


\section{Experimental Evaluation}

This section outlines the evaluation of \codename.

\subsection{Experimental Settings}
\label{sec:experimental_settings}
We introduce the experimental settings, covering language models, tasks and datasets, FL setups, and fine-tuning settings. A hyperparameters study can be found in Appendix~\ref{appendix:hyper}.

\myparagraph{Language models}
\codename is evaluated on both MLMs and CLMs, spanning six model families with 21 models. The MLMs include BERT~\cite{bert} (cased, uncased, base, large), RoBERTa~\cite{roberta} (base, large), and DeBERTa-v3~\cite{he2020deberta} (small, base, large). The CLMs consist of GPT-2~\cite{gpt2} (base, medium, large, extra-large), Llama~\cite{touvron2023llama} (2-7b, 2-13b, 3-8b, 3.1-8b, 3.2-1b, 3.2-3b), and Gemma~\cite{team2024gemma} (2-2b, 2-9b). 
This diverse range of models provides a comprehensive evaluation of \codename across different architectures and applications.

\myparagraph{Tasks and datasets} 
We evaluate \codename with widely used datasets across sentence classification and sentence generation tasks, which are the main targeted tasks in GIAs.

For the sentence classification task, we use the SST-2~\cite{sst2_dataset} (binary), Tweet Sentiment Analysis~\cite{tweet_sentiment_dataset} (3 classes), and the Yahoo Answers Topics~\cite{yahoo_answers_dataset} (10 classes) datasets. We randomly sample 1,000, 3,000, and 10,000 samples from these datasets, respectively.
The samples are randomly selected and balanced across classes to ensure fair evaluation.

For the sentence generation task, we use the
Enron Emails\footnote{We use the preprocessed version of the dataset provided by the authors of FILM~\cite{film}, where the dataset is split into sentences.}~\cite{enron_emails_dataset}, Open Australian Legal Corpus~\cite{open_australian_legal_corpus}, and AG News~\cite{ag_news_dataset}. To align with the scope of current attacks, only sentences with fewer than 30 words are included. We randomly select 1,000 data samples for each dataset. 

All datasets follow a standard 0.8-0.2 train-test split during training and evaluation.

\myparagraph{FL setups} In this work, we consider the FedSGD~\cite{fl, ma2023loden, zhang2023agrevader, ma2024unveiling, vo2025practical} setting, which involves sharing gradients after each local training iteration. A commonly used setting in existing attacks~\cite{dlg, tag, lamp, film, grab, dager} assumes one server and one client (victim) for simplicity, as success on one implies vulnerability of all. This setup is conceptually equivalent to scenarios involving multiple clients in FL~\cite{film}.
To align with existing attacks, we also adopt this setting, where the victim client goes through a fine-tuning with all obfuscated data of a dataset (equivalent to splitting the obfuscated data with multiple clients), and the resulting gradients are being observed by the attacker.

\myparagraph{Fine-tuning settings} 
For all MLMs, the appended classification layer is initialized randomly, and the entire model undergoes full-parameter fine-tuning. In the case of CLMs, the GPT-2 series is fine-tuned with full-parameter optimization. For the Llama and Gemma series, Low-Rank Adaptation (LoRA)~\cite{hu2021lora} is employed with default settings (rank 8, alpha 16, dropout 0.1, and no bias) and 4-bit quantization, where only the LoRA blocks are fine-tuned. All datasets are split into training and testing sets with a 0.8--0.2 random split. Fine-tuning is conducted with a batch size of 32 and a learning rate of 1e-5. Early stopping is used, with a patience of 5 epochs.

\begin{table*}[!htbp]

\renewcommand{\arraystretch}{.8}
\setlength{\tabcolsep}{8pt}

\centering
\caption{Similarity between original and obfuscated data.}
\label{table:privacy_preservation}

\begin{threeparttable}
\begin{tabular}{
    l
    rrrr p{0pt}
    rrrr p{0pt}
    rrrr p{0pt}
}

\toprule
& 
\multicolumn{4}{c}{SST-2} && 
\multicolumn{4}{c}{Tweet Sentiment Analysis} && 
\multicolumn{4}{c}{Yahoo Answers Topics} \\
\cmidrule(l{1pt}r{1pt}){2-5} 
\cmidrule(l{1pt}r{1pt}){7-10}
\cmidrule(l{1pt}r{1pt}){12-15}
& 
\makecell[c]{R-1} & \makecell[c]{R-2} & \makecell[c]{R-L} & \makecell[c]{M} &&
\makecell[c]{R-1} & \makecell[c]{R-2} & \makecell[c]{R-L} & \makecell[c]{M} &&
\makecell[c]{R-1} & \makecell[c]{R-2} & \makecell[c]{R-L} & \makecell[c]{M} \\

\midrule

BERT-base-uncased & 
0.02 & 0.00 & 0.02 & 0.03 &&
0.02 & 0.00 & 0.02 & 0.03 &&
0.02 & 0.00 & 0.02 & 0.03 \\

BERT-base-cased & 
0.02 & 0.00 & 0.02 & 0.02 && 
0.02 & 0.00 & 0.02 & 0.03 && 
0.02 & 0.00 & 0.02 & 0.02 \\

BERT-large-uncased & 
0.03 & 0.00 & 0.03 & 0.03 && 
0.03 & 0.00 & 0.03 & 0.03 && 
0.03 & 0.00 & 0.02 & 0.03 \\

BERT-large-cased & 
0.03 & 0.00 & 0.02 & 0.03 && 
0.03 & 0.00 & 0.03 & 0.04 && 
0.03 & 0.00 & 0.03 & 0.03 \\

RoBERTa-base & 
0.01 & 0.00 & 0.01 & 0.01 && 
0.02 & 0.00 & 0.02 & 0.03 && 
0.01 & 0.00 & 0.01 & 0.02 \\

RoBERTa-large & 
0.04 & 0.00 & 0.03 & 0.03 && 
0.03 & 0.00 & 0.02 & 0.03 && 
0.03 & 0.00 & 0.03 & 0.03 \\

DeBERTa-v3-small & 
0.01 & 0.00 & 0.01 & 0.01 && 
0.02 & 0.00 & 0.02 & 0.02 && 
0.02 & 0.00 & 0.02 & 0.02 \\

DeBERTa-v3-base & 
0.01 & 0.00 & 0.01 & 0.01 && 
0.03 & 0.00 & 0.03 & 0.03 && 
0.02 & 0.00 & 0.02 & 0.02 \\

DeBERTa-v3-large & 
0.02 & 0.00 & 0.02 & 0.03 && 
0.03 & 0.00 & 0.03 & 0.03 && 
0.03 & 0.00 & 0.03 & 0.03 \\
\midrule

& 
\multicolumn{4}{c}{Enron Emails} && 
\multicolumn{4}{c}{Open Australian Legal Corpus} && 
\multicolumn{4}{c}{AG News} \\
\cmidrule(l{1pt}r{1pt}){2-5} 
\cmidrule(l{1pt}r{1pt}){7-10}
\cmidrule(l{1pt}r{1pt}){12-15}

& 
\makecell[c]{R-1} & \makecell[c]{R-2} & \makecell[c]{R-L} & \makecell[c]{M} &&
\makecell[c]{R-1} & \makecell[c]{R-2} & \makecell[c]{R-L} & \makecell[c]{M} &&
\makecell[c]{R-1} & \makecell[c]{R-2} & \makecell[c]{R-L} & \makecell[c]{M} \\

\midrule

GPT-2 & 
0.03 & 0.00 & 0.03 & 0.04 &&
0.05 & 0.00 & 0.04 & 0.05 &&
0.04 & 0.00 & 0.03 & 0.05 \\

GPT-2-medium & 
0.03 & 0.00 & 0.03 & 0.05 &&
0.04 & 0.00 & 0.04 & 0.06 &&
0.04 & 0.00 & 0.03 & 0.06 \\

GPT-2-large & 
0.03 & 0.00 & 0.02 & 0.04 &&
0.03 & 0.00 & 0.03 & 0.04 &&
0.03 & 0.00 & 0.03 & 0.04 \\

GPT-2-xl & 
0.03 & 0.00 & 0.03 & 0.04 &&
0.05 & 0.00 & 0.04 & 0.05 &&
0.03 & 0.00 & 0.03 & 0.04 \\

Llama-2-7b & 
0.02 & 0.00 & 0.02 & 0.04 &&
0.02 & 0.00 & 0.02 & 0.04 &&
0.02 & 0.00 & 0.02 & 0.04 \\

Llama-2-13b & 
0.02 & 0.00 & 0.02 & 0.04 &&
0.04 & 0.00 & 0.04 & 0.05 &&
0.03 & 0.00 & 0.03 & 0.04 \\

Llama-3-8b & 
0.08 & 0.00 & 0.07 & 0.07 &&
0.09 & 0.00 & 0.09 & 0.08 &&
0.07 & 0.00 & 0.06 & 0.06 \\

Llama-3.1-8b & 
0.08 & 0.00 & 0.08 & 0.07 &&
0.09 & 0.00 & 0.08 & 0.08 &&
0.07 & 0.00 & 0.06 & 0.06 \\

Llama-3.2-1b & 
0.08 & 0.00 & 0.08 & 0.07 &&
0.12 & 0.01 & 0.11 & 0.07 &&
0.08 & 0.00 & 0.07 & 0.05 \\

Llama-3.2-3b & 
0.09 & 0.00 & 0.09 & 0.08 &&
0.11 & 0.00 & 0.10 & 0.07 &&
0.07 & 0.00 & 0.06 & 0.05 \\

Gemma-2-2b & 
0.06 & 0.00 & 0.05 & 0.06 &&
0.09 & 0.00 & 0.07 & 0.07 &&
0.06 & 0.00 & 0.05 & 0.05 \\

Gemma-2-9b & 
0.05 & 0.00 & 0.04 & 0.06 &&
0.08 & 0.00 & 0.06 & 0.07 &&
0.06 & 0.00 & 0.05 & 0.06 \\
\bottomrule
\end{tabular}

\end{threeparttable}
\end{table*}

\subsection{Validation of Theoretical Analysis}
\label{exp: validation_theory}
We begin by validating our theoretical analysis. Specifically, we aim to show the existence of the joint upper bound $\epsilon$ of the small positive values in Assumption~\ref{assumption:small_update} and \ref{assumption:dataset_modification}, and the growth rate of gradient is larger than the one of the loss in the Theorems. We use the BERT-base-uncased models fine-tuned on the original and the obfuscated SST-2 training splits (800 samples each) for demonstration.
\myparagraph{Joint upper bound} We compute the values of the terms in the assumptions, including the parameter magnitude drift for model fine-tuned on original data, \ie, $\epsilon_{\bm{\theta}^*}$ (1.44), the parameter magnitude drift for model fine-tuned on obfuscated data, \ie, $\epsilon_{\tilde{\bm{\theta}}}$ (2.48), the loss deviation between the original data and the obfuscated data of the pre-trained model, \ie, $\epsilon_{L_{\bm{\theta}}}$ (0.04 mean), and the gradient deviation between the original data and the obfuscated data of the pre-trained model, \ie, $\epsilon_{g_{\bm{\theta}}}$ (8.45 mean).  For each original and obfuscated sample pair, we compute a global upper bound $\epsilon = \text{max}(\epsilon_{\bm{\theta}^*}, \epsilon_{\tilde{\bm{\theta}}}, \epsilon_{L_{\bm{\theta}}}, \epsilon_{g_{\bm{\theta}}})$, and the local upper bound $\epsilon = \text{max}(\epsilon_{L_{\bm{\theta}}}, \epsilon_{g_{\bm{\theta}}})$ for the deviation terms in Assumption \ref{assumption:dataset_modification}. Figure~\ref{fig:epsilons} in Appendix~\ref{appendix: validation_theory} shows their correlations. Subplot (1,1) indicates that $\epsilon$ is concentrated in the lower range, implying small parameter drifts and deviations for most samples. Subplot (2,2) shows the same pattern for $\epsilon_d$, suggesting it dominates the global bound. The scatter plots in subplots (1,2) and (2,1) further confirm this, where values lie along a diagonal with a slope of 1.

\myparagraph{Growth rate validation} For each sample pair, we compute the loss deviations (0.47 mean) and the gradient deviations (40.21 mean) of the obfuscated model between the original data and the obfuscated data. The results are visualized in Figure~\ref{fig:loss_vs_grad} in Appendix~\ref{appendix: validation_theory}, where both values are plotted in a pair-wise scatter plot, and a linear regression is applied to show the correlation between $\epsilon$ and the deviations. The vertical axis uses a logarithmic scale for clearer separation of points, although the regression is computed in linear scale, making the fitted lines appear relatively flat. Loss deviations remain consistently small, while gradient deviations are substantially larger. Crucially, the fitted slope for gradient deviations far exceeds that for loss deviations, confirming the theoretical insight that utility is preserved (slow loss-deviation growth) while inversion is defended (rapid gradient-deviation growth).

\subsection{Data Obfuscation} 
\label{sec:data_obfuscation}
We evaluate \codename's data obfuscation capability by comparing the similarity between the obfuscated and original data across various models and datasets.

\myparagraph{Metrics}
Both exact match and partial semantic match similarities are measured. Exact match similarity is evaluated with ROUGE score~\cite{rouge}. ROUGE considers overlapping n-grams (unigrams, bigrams) and the longest matching subsequence, reported as ROUGE-1 (R-1), ROUGE-2 (R-2), and ROUGE-L (R-L). Partial semantic similarity is assessed with METEOR (M) score~\cite{banerjee2005meteor}, which captures a wider range of matches, including exact words, stems, synonyms, and paraphrases. Both metrics range from 0 to 1, where higher values indicate greater similarity, hence lower obfuscation level.

\myparagraph{Results}
The results in Table~\ref{table:privacy_preservation} demonstrate that, for most models and datasets, the obfuscated data shows negligible similarity to the original data, as reflected by ROUGE and METEOR scores. R-1 scores range from 0.01 to 0.12, while METEOR scores fall between 0.01 and 0.08, indicating minimum overlap in both individual tokens and partial semantics. This confirms \codename's remarkable effectiveness in obfuscating the original data.

An original-obfuscated sentence pair generated by \codename is shown below, where the obfuscated sentence contains limited semantics similarity compared to the original sentence (\textit{e.g.}, ends--concluded, six--THREE, stalled--discouraged). A human evaluation on the semantics difference between the original-obfuscated pairs is presented in Appendix~\ref{appendix:human_eval_semantics}.
\definecolor{skyblue}{rgb}{0.529, 0.808, 0.922}
\begin{tcolorbox}[
    fontupper=\footnotesize,
    colback=white,    
    colframe=skyblue!40!white, 
    width=\columnwidth,       
    sharp corners,           
    title=Example of original and obfuscated sentence pair., 
    fonttitle=\bfseries,     
    coltitle=black,           
    left=0mm, right=0mm, top=1mm, bottom=1mm
]
\textbf{Original sentence:} \\
The compromise apparently ends six months of stalled negotiations.

\textbf{Obfuscated sentence:} \\
Becauseromising bizarre concluded THREE kilometers had discouraged tensions.
\end{tcolorbox}

We emphasize that the other unassessed semantic similarity metrics, such as BERTScore~\cite{zhang2019bertscore}, rely on embedding similarity, which directly conflicts with our approach. 

\subsection{Utility Preservation}
\label{sec:utility_preservation}

\begin{table*}[!htbp]

\renewcommand{\arraystretch}{.8}
\setlength{\tabcolsep}{2pt}

\centering
\caption{Utility comparison across three model configurations.}
\label{table:utility_preservation}

\begin{threeparttable}
\resizebox{1\textwidth}{!}{
\begin{tabular}{
    l
    rrrrrr p{0pt}
    rrrrrr p{0pt}
    rrrrrr p{0pt}
}

\toprule
& \multicolumn{6}{c}{SST-2} && 
\multicolumn{6}{c}{Tweet Sentiment Analysis} && 
\multicolumn{6}{c}{Yahoo Answers Topics} \\
\cmidrule(l{1pt}r{1pt}){2-7} 
\cmidrule(l{1pt}r{1pt}){9-14}
\cmidrule(l{1pt}r{1pt}){16-21}
& \multicolumn{2}{c}{Pre-trained} & \multicolumn{2}{c}{Original} & \multicolumn{2}{c}{Obfuscated} && 
\multicolumn{2}{c}{Pre-trained} & \multicolumn{2}{c}{Original} & \multicolumn{2}{c}{Obfuscated} &&
\multicolumn{2}{c}{Pre-trained} & \multicolumn{2}{c}{Original} & \multicolumn{2}{c}{Obfuscated} \\
\cmidrule(l{1pt}r{1pt}){2-3} 
\cmidrule(l{1pt}r{1pt}){4-5} 
\cmidrule(l{1pt}r{1pt}){6-7}
\cmidrule(l{1pt}r{1pt}){9-10} 
\cmidrule(l{1pt}r{1pt}){11-12}
\cmidrule(l{1pt}r{1pt}){13-14}
\cmidrule(l{1pt}r{1pt}){16-17} 
\cmidrule(l{1pt}r{1pt}){18-19}
\cmidrule(l{1pt}r{1pt}){20-21}
& 
\multicolumn{1}{c}{Acc.} & \multicolumn{1}{c}{F1} & \multicolumn{1}{c}{Acc.} & \multicolumn{1}{c}{F1} & \multicolumn{1}{c}{Acc.} & \multicolumn{1}{c}{F1} &&
\multicolumn{1}{c}{Acc.} & \multicolumn{1}{c}{F1} & \multicolumn{1}{c}{Acc.} & \multicolumn{1}{c}{F1} & \multicolumn{1}{c}{Acc.} & \multicolumn{1}{c}{F1} &&
\multicolumn{1}{c}{Acc.} & \multicolumn{1}{c}{F1} & \multicolumn{1}{c}{Acc.} & \multicolumn{1}{c}{F1} & \multicolumn{1}{c}{Acc.} & \multicolumn{1}{c}{F1} \\
\midrule
BERT-base-uncased & 
0.53 & 0.64 & 0.92 & 0.93 & 0.82 & 0.85 &&
0.27 & 0.15 & 0.76 & 0.75 & 0.63 & 0.63 &&
0.11 & 0.03 & 0.65 & 0.65 & 0.60 & 0.60 \\

BERT-base-cased & 
0.54 & 0.70 & 0.89 & 0.90 & 0.84 & 0.85 &&
0.28 & 0.15 & 0.73 & 0.73 & 0.58 & 0.57 &&
0.10 & 0.03 & 0.64 & 0.64 & 0.58 & 0.58 \\

BERT-large-uncased & 
0.54 & 0.70 & 0.94 & 0.94 & 0.79 & 0.83 &&
0.33 & 0.27 & 0.74 & 0.73 & 0.61 & 0.59 &&
0.12 & 0.04 & 0.66 & 0.66 & 0.47 & 0.48 \\

BERT-large-cased & 
0.48 & 0.04 & 0.91 & 0.91 & 0.81 & 0.83 &&
0.36 & 0.19 & 0.75 & 0.73 & 0.63 & 0.60 &&
0.09 & 0.03 & 0.64 & 0.64 & 0.59 & 0.59 \\

RoBERTa-base & 
0.47 & 0.00    & 0.91 & 0.91 & 0.77 & 0.81 &&
0.35 & 0.17 & 0.79 & 0.79 & 0.61 & 0.60 &&
0.10 & 0.02 & 0.65 & 0.65 & 0.60 & 0.59 \\

RoBERTa-large & 
0.47 & 0.00    & 0.93 & 0.93 & 0.78 & 0.81 &&
0.35 & 0.17 & 0.79 & 0.77 & 0.60 & 0.58 &&
0.10 & 0.02 & 0.68 & 0.67 & 0.64 & 0.63 \\

DeBERTa-v3-small & 
0.47 & 0.00    & 0.94 & 0.94 & 0.84 & 0.86 &&
0.36 & 0.18 & 0.78 & 0.77 & 0.63 & 0.59 &&
0.10 & 0.02 & 0.64 & 0.64 & 0.49 & 0.48 \\

DeBERTa-v3-base & 
0.54 & 0.70 & 0.94 & 0.94 & 0.69 & 0.76 &&
0.29 & 0.15 & 0.77 & 0.75 & 0.61 & 0.53 &&
0.11 & 0.02 & 0.65 & 0.65 & 0.45 & 0.43 \\

DeBERTa-v3-large & 
0.52 & 0.46 & 0.94 & 0.94 & 0.92 & 0.92 &&
0.35 & 0.17 & 0.77 & 0.76 & 0.60 & 0.51 &&
0.10 & 0.02 & 0.67 & 0.67 & 0.63 & 0.63 \\
\midrule
& \multicolumn{6}{c}{Enron Emails} && 
\multicolumn{6}{c}{Open Australian Legal Corpus} && 
\multicolumn{6}{c}{AG News} \\
\cmidrule(l{1pt}r{1pt}){2-7} 
\cmidrule(l{1pt}r{1pt}){9-14}
\cmidrule(l{1pt}r{1pt}){16-21}
& \multicolumn{2}{c}{Pre-trained} & \multicolumn{2}{c}{Original} & \multicolumn{2}{c}{Obfuscated} && 
\multicolumn{2}{c}{Pre-trained} & \multicolumn{2}{c}{Original} & \multicolumn{2}{c}{Obfuscated} &&
\multicolumn{2}{c}{Pre-trained} & \multicolumn{2}{c}{Original} & \multicolumn{2}{c}{Obfuscated} \\
\cmidrule(l{1pt}r{1pt}){2-3} 
\cmidrule(l{1pt}r{1pt}){4-5} 
\cmidrule(l{1pt}r{1pt}){6-7}
\cmidrule(l{1pt}r{1pt}){9-10} 
\cmidrule(l{1pt}r{1pt}){11-12}
\cmidrule(l{1pt}r{1pt}){13-14}
\cmidrule(l{1pt}r{1pt}){16-17} 
\cmidrule(l{1pt}r{1pt}){18-19}
\cmidrule(l{1pt}r{1pt}){20-21}
& 
\multicolumn{1}{c}{Loss} & \multicolumn{1}{c}{PPL} & \multicolumn{1}{c}{Loss} & \multicolumn{1}{c}{PPL} & \multicolumn{1}{c}{Loss} & \multicolumn{1}{c}{PPL} &&
\multicolumn{1}{c}{Loss} & \multicolumn{1}{c}{PPL} & \multicolumn{1}{c}{Loss} & \multicolumn{1}{c}{PPL} & \multicolumn{1}{c}{Loss} & \multicolumn{1}{c}{PPL} &&
\multicolumn{1}{c}{Loss} & \multicolumn{1}{c}{PPL} & \multicolumn{1}{c}{Loss} & \multicolumn{1}{c}{PPL} & \multicolumn{1}{c}{Loss} & \multicolumn{1}{c}{PPL} \\
\midrule

GPT-2 & 
8.17 & 3540.01 & 2.40 & 11.05 & 2.95 & 19.12 &&
8.86 & 7053.71 & 1.78 &  5.91 & 2.01 &  7.46 &&
8.18 & 3578.13 & 2.20 &  9.07 & 2.56 & 12.91 \\

GPT-2-medium & 
7.66 & 2122.13 & 2.29 & 9.87 & 2.83 & 16.92 &&
8.30 & 4032.74 & 1.65 & 5.22 & 1.94 &  6.94 &&
7.48 & 1777.53 & 2.02 & 7.51 & 2.36 & 10.61 \\

GPT-2-large & 
8.01 & 3001.36 & 2.23 & 9.30 & 2.91 & 16.69 &&
8.75 & 6304.60 & 1.57 & 4.79 & 1.83 &  6.24 &&
7.88 & 2650.71 & 1.92 & 6.82 & 2.34 & 10.42 \\

GPT-2-xl & 
7.55 & 1902.85 & 2.21 & 9.08 & 3.50 & 33.22 &&
8.14 & 3421.60 & 1.54 & 4.65 & 2.65 & 14.16 &&
7.36 & 1565.56 & 1.88 & 6.58 & 2.81 & 16.64 \\

Llama-2-7b & 
15.81 & 7.36e06 & 1.23 & 3.42 & 1.73 & 5.64 &&
14.96 & 3.13e06 & 1.32 & 3.76 & 1.70 & 5.45 &&
11.08 & 6.45e04 & 1.57 & 4.79 & 1.94 & 6.95 \\

Llama-2-13b & 
15.90 & 8.02e06 & 1.25 & 3.48 & 2.73 & 15.27 &&
13.55 & 7.65e05 & 1.34 & 3.84 & 3.27 & 26.29 &&
10.25 & 2.82e04 & 1.19 & 3.30 & 4.07 & 58.65 \\

Llama-3-8b$^*$ & 
5.57 & 263.44 & 1.86 & 6.45 & 2.40 & 11.03 &&
4.85 & 127.88 & 1.74 & 5.72 & 2.09 & 8.12 &&
4.49 & 72.81 & 1.75 & 5.74 & 2.04 & 7.65 \\

Llama-3.1-8b$^*$ & 
6.66 & 778.83 & 1.88 & 6.53 & 2.39 & 10.89 &&
5.50 & 244.82 & 1.75 & 5.76 & 2.09 &  8.08 &&
5.45 & 233.92 & 1.75 & 5.75 & 2.05 &  7.75 \\

Llama-3.2-1b$^*$ & 
7.47 & 1756.23 & 2.03 & 7.58 & 2.50 & 12.21 &&
6.69 &  801.26 & 1.93 & 6.88 & 2.23 & 9.29 &&
6.10 &  447.35 & 2.01 & 7.45 & 2.28 & 9.76 \\

Llama-3.2-3b$^*$ & 
8.16 & 3511.35 & 1.93 & 6.92 & 2.37 & 10.72 &&
7.11 & 1226.99 & 1.83 & 6.23 & 2.15 &  8.62 &&
6.94 & 1030.13 & 1.84 & 6.29 & 2.14 &  8.53 \\

Gemma-2-2b & 
22.78 & 7.81e09 & 1.56 & 4.77 & 2.18 & 8.86 &&
18.75 & 1.38e08 & 1.75 & 5.76 & 2.49 & 12.03 &&
15.32 & 4.50e06 & 2.04 & 7.73 & 2.99 & 19.87 \\

Gemma-2-9b & 
27.15 & 6.19e11 & 1.24 & 3.47 & 1.78 & 5.93 &&
22.35 & 5.09e09 & 1.47 & 4.35 & 1.97 & 7.20 &&
17.35 & 3.43e07 & 1.72 & 5.57 & 2.99 & 11.88 \\
\bottomrule
\end{tabular}
}

\begin{tablenotes}
    \footnotesize
    \item[$*$] Datasets might be used within the pre-training, as indicated by lower loss and perplexity on pre-trained models. However, fine-tuning with obfuscated data is still effective in improving the utility.
\end{tablenotes}

\end{threeparttable}
\end{table*}

We assess the utility preservation capability of \codename by analyzing its impact on model performance across various models and datasets. 
The assessment is conducted on three model configurations across all models: 
(1) pre-trained models, 
(2) models fine-tuned on the original data, and 
(3) models fine-tuned on the obfuscated data.
The model performance is measured using the test split of the original data.

\subsubsection{Classification models} We present the utility evaluation on classification models.

\myparagraph{Metrics}
Accuracy (Acc.) and F1 scores are measured. Both metrics range from 0 to 1, where a higher value represents better utility performance.
\myparagraph{Results}
The results in Table~\ref{table:utility_preservation} show that most pre-trained classification models perform close to random guessing. Models fine-tuned on original data achieve high utility, with accuracies and F1 scores both ranging from 0.64 to 0.94 across different classification tasks. Models fine-tuned on obfuscated data exhibit similar, although slightly reduced, utility, with accuracies ranging from 0.45 to 0.92 and F1 scores from 0.43 to 0.92. This slight reduction illustrates the trade-off introduced by the obfuscation process while demonstrating that the obfuscated data preserves the critical features of the original data required for effective fine-tuning in classification tasks.

\subsubsection{Generative models}
\label{gen_model_utility}
We summarize the utility evaluation on generative models.
\myparagraph{Metrics} Loss and perplexity (PPL) are measured. Both metrics are direct indicators of model performance with no predetermined ranges, which are commonly employed utility metrics in sentence generation tasks~\cite{touvron2023llama, gpt2} and in GIAs targeting generative language models~\cite{film}. Perplexity is the exponential of the loss in generative language models. Lower non-negative values indicate better performance.

\myparagraph{Results}
The results in Table~\ref{table:utility_preservation} reveal a similar trend to classification models. Pre-trained models exhibit high losses and perplexities, reflecting lower performance. Fine-tuning with original data significantly improves these metrics, with losses ranging from 1.19 to 2.40 and perplexities from 3.30 to 11.05. Models fine-tuned on obfuscated data show slightly higher losses and perplexities, ranging from 1.70 to 4.07 and 5.45 to 58.65, respectively. While obfuscation introduces a minor degradation, the results confirm that the critical features for generative tasks are preserved. A human evaluation on the generation similarity between the original and obfuscated models is presented in Appendix~\ref{appendix:human_eval_generation}.

\subsection{Defense Efficacy}
\label{sec:defense_efficacy}

We evaluate the defense efficacy of \codename against GIAs using five existing passive GIA methods to align with our threat model of an honest-but-curious attacker:
\begin{itemize}[left=0pt]
    \item \textbf{DLG}~\cite{dlg}: The first attempt to reconstruct training data from gradients in language models.
    \item \textbf{TAG}~\cite{tag}: An enhanced method introducing additional regularization for improved token recovery.
    \item \textbf{LAMP}~\cite{lamp}: An attack leveraging discrete optimization techniques to achieve more precise reconstructions.
    \item \textbf{GRAB}~\cite{grab}: The first approach to consider practical training scenarios with hybrid optimization strategies.
    \item \textbf{DAGER}~\cite{dager}: A cutting-edge attack exploiting the rank of attention matrices for near-exact data recovery.
\end{itemize}

\begin{figure}[!htbp]

    \centering
    \begin{subfigure}[b]{0.49\linewidth}
        \includegraphics[width=\linewidth]{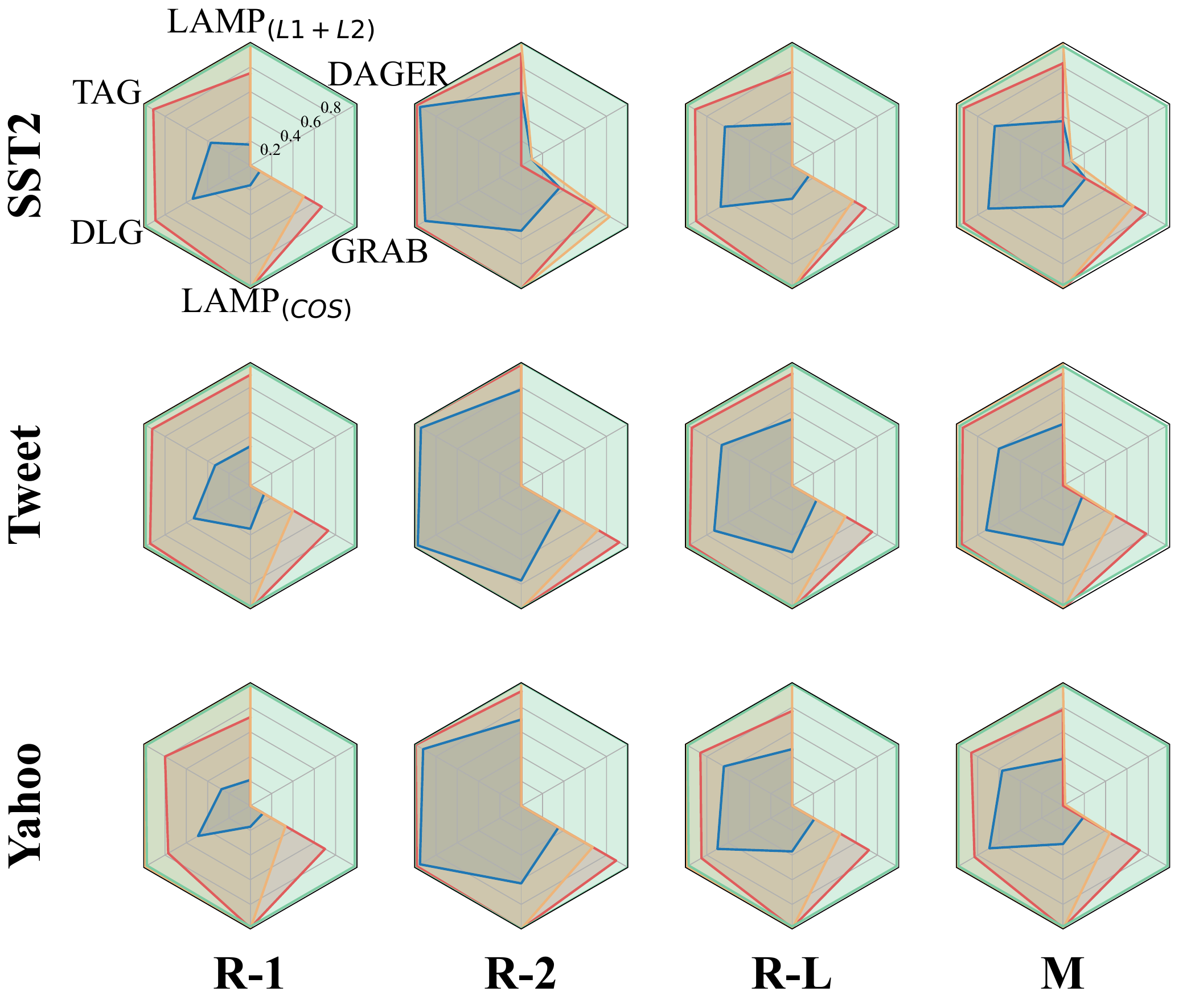}
        \caption{BERT-base-uncased}
    \end{subfigure}
    \hfill
    \begin{subfigure}[b]{0.49\linewidth}
        \includegraphics[width=\linewidth]{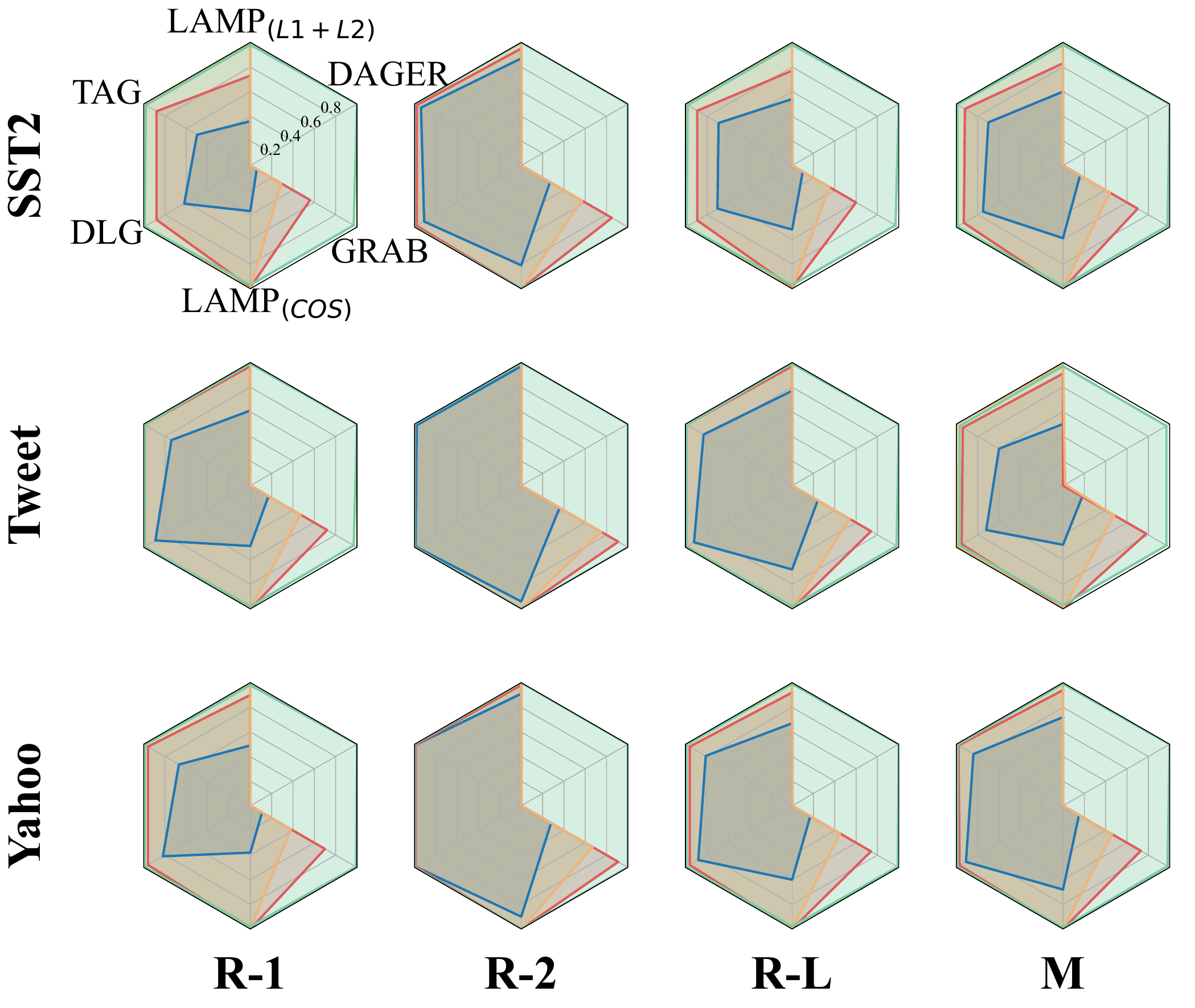}
        \caption{BERT-base-cased}
    \end{subfigure}
        \begin{subfigure}[b]{0.49\linewidth}
        \includegraphics[width=\linewidth]{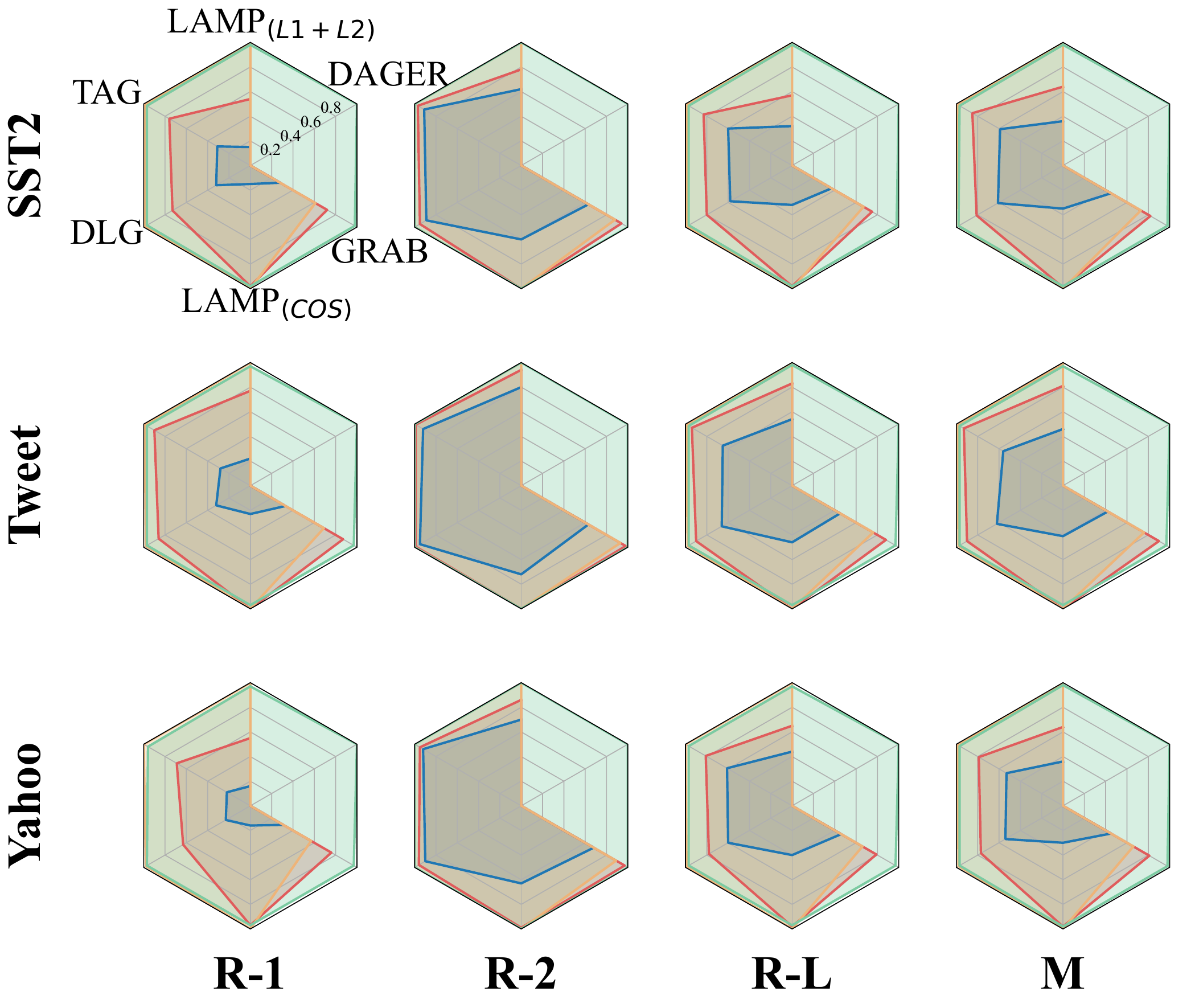}
        \caption{BERT-large-uncased}
    \end{subfigure}
    \hfill
    \begin{subfigure}[b]{0.49\linewidth}
        \includegraphics[width=\linewidth]{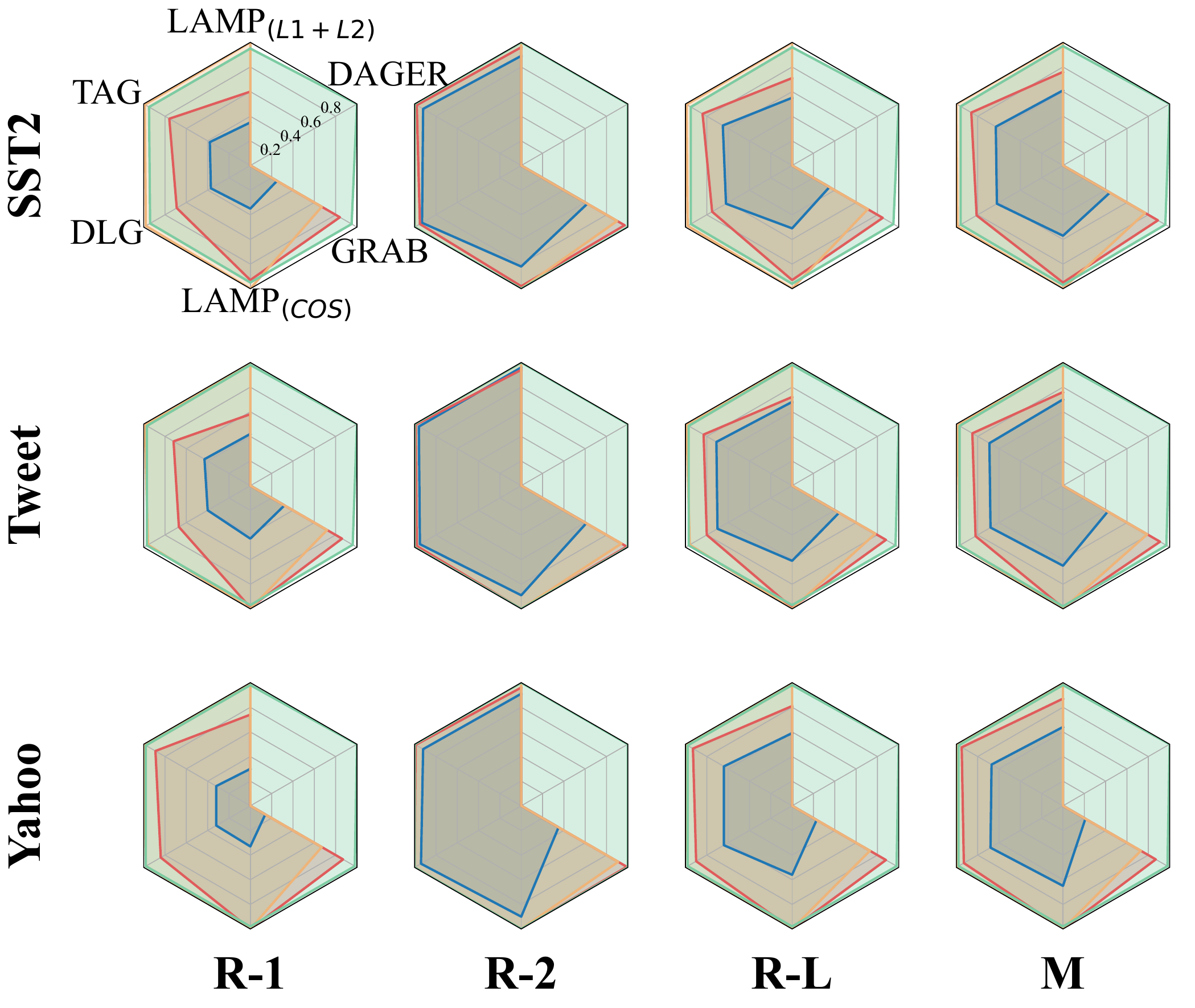}
        \caption{BERT-large-cased}
    \end{subfigure}
    
    \vspace{3pt}
    
    \begin{subfigure}[b]{\linewidth}
    \centering
        \includegraphics[width=0.8\linewidth]{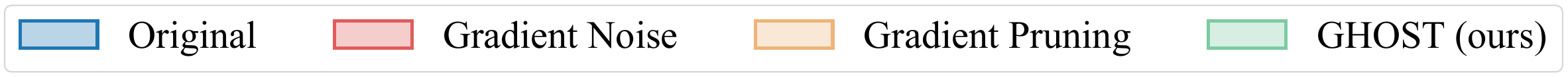}
    \end{subfigure}
    \caption{Defense efficacy against GIAs on BERT models (the greater coverage, the stronger defense).}
    \label{fig:classification_gia}
\end{figure}
\begin{table}[!htp]

\renewcommand{\arraystretch}{.8}
\setlength{\tabcolsep}{1pt}

\centering
\caption{GIA attack efficacy on generative models for DAGER.}
\label{table:generative_gia}

\begin{threeparttable}

\resizebox{\linewidth}{!}{
\begin{tabular}{
l @{\hspace{3pt}}
rrrr p{0pt}
rrrr p{0pt}
rrrr p{0pt}
}
\toprule
& \multicolumn{4}{c}{Original} && \multicolumn{4}{c}{Pruning} && \multicolumn{4}{c}{\codename} \\
\cmidrule(l{1pt}r{1pt}){2-5} 
\cmidrule(l{1pt}r{1pt}){7-10} 
\cmidrule(l{1pt}r{1pt}){12-15}
& 
\makecell[c]{R-1} & \makecell[c]{R-2} & \makecell[c]{R-L} & \makecell[c]{M} &&
\makecell[c]{R-1} & \makecell[c]{R-2} & \makecell[c]{R-L} & \makecell[c]{M} &&
\makecell[c]{R-1} & \makecell[c]{R-2} & \makecell[c]{R-L} & \makecell[c]{M} \\
\midrule

& \multicolumn{14}{c}{Enron Emails} \\
\midrule
GPT-2 & 
1.00 & 1.00 & 1.00 & 1.00 &&
1.00 & 1.00 & 1.00 & 1.00 &&
0.04 & 0.00 & 0.04 & 0.03 \\

Llama-2-7b & 
0.96 & 0.95 & 0.96 & 0.97 &&
0.96 & 0.95 & 0.96 & 0.97 &&
0.02 & 0.00 & 0.02 & 0.04 \\

Llama-3-8b & 
0.90 & 0.89 & 0.90 & 0.94 &&
0.90 & 0.89 & 0.90 & 0.94 &&
0.07 & 0.00 & 0.06 & 0.05 \\

Llama-3.1-8b & 
0.90 & 0.89 & 0.90 & 0.94 &&
0.90 & 0.89 & 0.90 & 0.94 &&
0.08 & 0.00 & 0.07 & 0.06 \\

\midrule
& \multicolumn{14}{c}{Open Australian Legal Corpus} \\
\midrule

GPT-2 & 
0.90 & 0.89 & 0.89 & 0.87 &&
0.90 & 0.89 & 0.90 & 0.87 &&
0.05 & 0.00 & 0.04 & 0.05 \\

Llama-2-7b & 
0.88 & 0.88 & 0.88 & 0.86 &&
0.89 & 0.88 & 0.89 & 0.86 &&
0.03 & 0.00 & 0.03 & 0.05 \\

Llama-3-8b & 
0.87 & 0.85 & 0.86 & 0.86 &&
0.87 & 0.85 & 0.86 & 0.86 &&
0.08 & 0.00 & 0.07 & 0.05 \\

Llama-3.1-8b & 
0.87 & 0.85 & 0.86 & 0.86 &&
0.86 & 0.85 & 0.86 & 0.86 &&
0.08 & 0.00 & 0.07 & 0.04 \\

\midrule
& \multicolumn{14}{c}{AG News} \\
\midrule

GPT-2 & 
1.00 & 1.00 & 1.00 & 0.97 &&
1.00 & 1.00 & 1.00 & 0.97 &&
0.04 & 0.00 & 0.04 & 0.06 \\

Llama-2-7b & 
0.96 & 0.95 & 0.96 & 0.92 &&
0.96 & 0.95 & 0.96 & 0.92 &&
0.03 & 0.00 & 0.03 & 0.04 \\

Llama-3-8b & 
0.95 & 0.94 & 0.95 & 0.94 &&
0.95 & 0.94 & 0.95 & 0.94 &&
0.05 & 0.00 & 0.04 & 0.04 \\

Llama-3.1-8b & 
0.95 & 0.94 & 0.95 & 0.94 &&
0.95 & 0.94 & 0.95 & 0.94 &&
0.05 & 0.00 & 0.05 & 0.04 \\

\bottomrule
\end{tabular}
}
\end{threeparttable}
\end{table}

We use two extensively used gradient-level defenses against GIAs as baselines:
\begin{itemize}[left=0pt]
    \item \textbf{Gradient noise}: A defense that adds Gaussian or Laplacian noise to the gradients, suggested by Zhu \textit{et al.}~\cite{dlg} and Wei \textit{et al.}~\cite{wei2020framework}. We follow a standard DP-SGD~\cite{dpsgd} process, where the client normalizes or clips the gradients before adding the noise. 

    \item \textbf{Gradient pruning}: A defense that prunes part of the model parameter gradients~\cite{dlg}. We follow the gradient pruning variant in LAMP~\cite{lamp}, where a percentage of the gradients are randomly pruned. 
\end{itemize}

This selection of attack and defense methods provides a comprehensive evaluation of \codename's defense capabilities against early and advanced GIA techniques in comparison with existing gradient-level defense mechanisms.

\myparagraph{Attack details}
All attack experiments are conducted with default hyperparameter configurations. For each dataset, 64 samples are randomly sampled for the attacks, consistent with the implementation in GRAB~\cite{grab}. A batch size of 1 is used for attack experiments, as this configuration poses the greatest challenge to the defender~\cite{lamp, grab}. The attacked language model follows the setting outlined in LAMP~\cite{lamp}, with trainable embedding layers and no dropout applied. Additional assumptions regarding labels and sequence lengths, as described in Section~\ref{threat_model}, are also incorporated.

\myparagraph{Defense details}
\codename uses a hyperparameter combination with top-$k$ = 70, beam =1, and $\tau_o$ = 0.1, where the specific hyperparameters study can be found in Appendix~\ref{appendix:hyper}. For gradient noise and gradient pruning, the highest defense levels that exhibit acceptable utilities, reported by Feng \textit{et al.}~\cite{grab}, are applied, \textit{e.g.}, 0.05 noise level, and 0.99 prune ratio. This ensures that \codename's defense efficacy is fairly evaluated by avoiding suboptimal baseline settings.

\begin{table*}[!t]

\renewcommand{\arraystretch}{.8}
\setlength{\tabcolsep}{2pt}

\centering
\caption{Utility comparison across defenses. 
}
\label{table:utility_comparison}

\begin{threeparttable}
\resizebox{\linewidth}{!}{
\begin{tabular}{
l
rrrrrrrr p{0pt}
rrrrrrrr p{0pt}
rrrrrrrr p{0pt}
}
\toprule
& 
\multicolumn{8}{c}{SST-2} && 
\multicolumn{8}{c}{Tweet Sentiment Analysis} && 
\multicolumn{8}{c}{Yahoo Answers Topics} \\
\cmidrule(l{1pt}r{1pt}){2-9} 
\cmidrule(l{1pt}r{1pt}){11-18}
\cmidrule(l{1pt}r{1pt}){20-27}
& 
\multicolumn{2}{c}{Original} & \multicolumn{2}{c}{Noise} & \multicolumn{2}{c}{Pruning} & \multicolumn{2}{c}{\codename} && 
\multicolumn{2}{c}{Original} & \multicolumn{2}{c}{Noise} & \multicolumn{2}{c}{Pruning} & \multicolumn{2}{c}{\codename} &&
\multicolumn{2}{c}{Original} & \multicolumn{2}{c}{Noise} & \multicolumn{2}{c}{Pruning} & \multicolumn{2}{c}{\codename} \\
\cmidrule(l{1pt}r{1pt}){2-3} 
\cmidrule(l{1pt}r{1pt}){4-5} 
\cmidrule(l{1pt}r{1pt}){6-7}
\cmidrule(l{1pt}r{1pt}){8-9} 
\cmidrule(l{1pt}r{1pt}){11-12}
\cmidrule(l{1pt}r{1pt}){13-14}
\cmidrule(l{1pt}r{1pt}){15-16} 
\cmidrule(l{1pt}r{1pt}){17-18}
\cmidrule(l{1pt}r{1pt}){20-21}
\cmidrule(l{1pt}r{1pt}){22-23} 
\cmidrule(l{1pt}r{1pt}){24-25}
\cmidrule(l{1pt}r{1pt}){26-27}
& 
\multicolumn{1}{c}{Acc.} & \multicolumn{1}{c}{F1} & \multicolumn{1}{c}{Acc.} & \multicolumn{1}{c}{F1} & \multicolumn{1}{c}{Acc.} & \multicolumn{1}{c}{F1} & \multicolumn{1}{c}{Acc.} & \multicolumn{1}{c}{F1} &&
\multicolumn{1}{c}{Acc.} & \multicolumn{1}{c}{F1} & \multicolumn{1}{c}{Acc.} & \multicolumn{1}{c}{F1} & \multicolumn{1}{c}{Acc.} & \multicolumn{1}{c}{F1} & \multicolumn{1}{c}{Acc.} & \multicolumn{1}{c}{F1} &&
\multicolumn{1}{c}{Acc.} & \multicolumn{1}{c}{F1} & \multicolumn{1}{c}{Acc.} & \multicolumn{1}{c}{F1} & \multicolumn{1}{c}{Acc.} & \multicolumn{1}{c}{F1} & \multicolumn{1}{c}{Acc.} & \multicolumn{1}{c}{F1} \\
\midrule
BERT-base-uncased & 
0.92 & 0.93 & 0.56 & 0.61 & 0.90 & 0.90 & 0.82 & 0.85 &&
0.76 & 0.75 & 0.39 & 0.37 & 0.69 & 0.68 & 0.63 & 0.63 &&
0.65 & 0.65 & 0.55 & 0.55 & 0.60 & 0.59 & 0.60 & 0.60 \\

BERT-base-cased & 
0.89 & 0.90 & 0.68 & 0.76 & 0.71 & 0.78 & 0.84 & 0.85 &&
0.73 & 0.73 & 0.60 & 0.58 & 0.68 & 0.68 & 0.58 & 0.57 &&
0.65 & 0.65 & 0.54 & 0.53 & 0.59 & 0.58 & 0.60 & 0.60 \\

BERT-large-uncased & 
0.94 & 0.94 & 0.56 & 0.70 & 0.94 & 0.94 & 0.79 & 0.83 &&
0.74 & 0.73 & 0.69 & 0.68 & 0.70 & 0.69 & 0.61 & 0.59 &&
0.66 & 0.66 & 0.19 & 0.16 & 0.61 & 0.61 & 0.47 & 0.48 \\

BERT-large-cased & 
0.91 & 0.91 & 0.88 & 0.88 & 0.90 & 0.90 & 0.83 & 0.81 &&
0.75 & 0.73 & 0.56 & 0.54 & 0.67 & 0.67 & 0.63 & 0.60 &&
0.64 & 0.64 & 0.49 & 0.48 & 0.59 & 0.59 & 0.59 & 0.59 \\

\midrule
& 
\multicolumn{8}{c}{Enron Emails} && 
\multicolumn{8}{c}{Open Australian Legal Corpus} && 
\multicolumn{8}{c}{AG News} \\
\cmidrule(l{1pt}r{1pt}){2-9} 
\cmidrule(l{1pt}r{1pt}){11-18}
\cmidrule(l{1pt}r{1pt}){20-27}
& \multicolumn{2}{c}{Original} & \multicolumn{2}{c}{Noise} & \multicolumn{2}{c}{Pruning} & \multicolumn{2}{c}{\codename} && 
\multicolumn{2}{c}{Original} & \multicolumn{2}{c}{Noise} & \multicolumn{2}{c}{Pruning} & \multicolumn{2}{c}{\codename} &&
\multicolumn{2}{c}{Original} & \multicolumn{2}{c}{Noise} & \multicolumn{2}{c}{Pruning} & \multicolumn{2}{c}{\codename} \\
\cmidrule(l{1pt}r{1pt}){2-3} 
\cmidrule(l{1pt}r{1pt}){4-5} 
\cmidrule(l{1pt}r{1pt}){6-7}
\cmidrule(l{1pt}r{1pt}){8-9} 
\cmidrule(l{1pt}r{1pt}){11-12}
\cmidrule(l{1pt}r{1pt}){13-14}
\cmidrule(l{1pt}r{1pt}){15-16} 
\cmidrule(l{1pt}r{1pt}){17-18}
\cmidrule(l{1pt}r{1pt}){20-21}
\cmidrule(l{1pt}r{1pt}){22-23} 
\cmidrule(l{1pt}r{1pt}){24-25}
\cmidrule(l{1pt}r{1pt}){26-27}
& 
\multicolumn{1}{c}{Loss} & \multicolumn{1}{c}{PPL} & \multicolumn{1}{c}{Loss} & \multicolumn{1}{c}{PPL} & \multicolumn{1}{c}{Loss} & \multicolumn{1}{c}{PPL} & \multicolumn{1}{c}{Loss} & \multicolumn{1}{c}{PPL} &&
\multicolumn{1}{c}{Loss} & \multicolumn{1}{c}{PPL} & \multicolumn{1}{c}{Loss} & \multicolumn{1}{c}{PPL} & \multicolumn{1}{c}{Loss} & \multicolumn{1}{c}{PPL} & \multicolumn{1}{c}{Loss} & \multicolumn{1}{c}{PPL} &&
\multicolumn{1}{c}{Loss} & \multicolumn{1}{c}{PPL} & \multicolumn{1}{c}{Loss} & \multicolumn{1}{c}{PPL} & \multicolumn{1}{c}{Loss} & \multicolumn{1}{c}{PPL} & \multicolumn{1}{c}{Loss} & \multicolumn{1}{c}{PPL} \\

\midrule

GPT-2 & 
2.40 & 11.05 & \multicolumn{1}{c}{---} & \multicolumn{1}{c}{---} & 2.40 & 10.82 & 2.95 & 19.12 &&
1.78 & 5.91 & \multicolumn{1}{c}{---} & \multicolumn{1}{c}{---} & 1.74 & 5.71 & 2.01 & 7.46 &&
2.20 & 9.07 & \multicolumn{1}{c}{---} & \multicolumn{1}{c}{---} & 2.91 & 8.90 & 2.56 & 12.91 \\

Llama-2-7b & 
1.23 & 3.42 & \multicolumn{1}{c}{---} & \multicolumn{1}{c}{---} & 1.22 & 3.40 & 1.73 & 5.64 &&
1.32 & 3.76 & \multicolumn{1}{c}{---} & \multicolumn{1}{c}{---} & 1.32 & 3.75 & 1.70 & 5.45 &&
1.57 & 4.79 & \multicolumn{1}{c}{---} & \multicolumn{1}{c}{---} & 1.56 & 4.76 & 1.94 & 6.95 \\

Llama-3-8b & 
1.86 & 6.45 & \multicolumn{1}{c}{---} & \multicolumn{1}{c}{---} & 1.85 & 6.39 & 2.40 & 11.03 &&
1.74 & 5.72 & \multicolumn{1}{c}{---} & \multicolumn{1}{c}{---} & 1.73 & 5.65 & 2.09 & 8.12 &&
1.75 & 5.74 & \multicolumn{1}{c}{---} & \multicolumn{1}{c}{---} & 1.74 & 5.69 & 2.04 & 7.65 \\

Llama-3.1-8b & 
1.88 & 6.53 & \multicolumn{1}{c}{---} & \multicolumn{1}{c}{---} & 1.89 & 6.39 & 2.39 & 10.89 &&
1.75 & 5.76 & \multicolumn{1}{c}{---} & \multicolumn{1}{c}{---} & 1.74 & 5.71 & 2.09 & 8.08 &&
1.75 & 5.75 & \multicolumn{1}{c}{---} & \multicolumn{1}{c}{---} & 1.74 & 5.70 & 2.05 & 7.75 \\

\bottomrule
\end{tabular}
}

\begin{tablenotes}
    \footnotesize
    \item ---: At the time of writing, the implementation of DAGER is encountering an error with the gradient noise defense. To align with the attack efficacy evaluated, we do not \\ evaluate the utility of gradient noise.
\end{tablenotes}

\end{threeparttable}
\end{table*}

\myparagraph{Supported configurations}
Since some attack methods are incompatible with certain models and defenses, only supported configurations are reported. These include DLG, TAG, LAMP, and GRAB applied to the four BERT models with all defenses, as well as DAGER applied to BERT-base-uncased, GPT-2, Llama-2-7b, Llama-3-8b, and Llama-3.1-8b with gradient pruning and \codename. A detailed description of the unsupported experiments due to incompatibility is provided in Appendix~\ref{appendix:unsupported}.

\myparagraph{Metrics}
We use ROUGE and METEOR scores to measure the similarity between the recovered sentences and the original sentences as the attack efficacy. For presentation purposes, we report the defense efficacy for classification models, which is (1 minus attack efficacy), while we report the attack efficacy for generative models.

\myparagraph{Results} The defense efficacy for classification models is illustrated in Figure~\ref{fig:classification_gia}, where greater radar chart coverage indicates stronger defense performance. Correspondingly, the attack results for generative models are presented in Table~\ref{table:generative_gia}.
Across all evaluated models and datasets, most attacks recover a significant portion of original data when defenses are absent, with R-1 scores ranging from 0.11 to 1.00 and METEOR scores from 0.06 to 0.98. Baseline defenses like gradient noise and gradient pruning effectively mitigate earlier attacks (\textit{e.g.}, DLG, TAG) in most cases, achieving R-1 scores between 0.00 and 0.37 and METEOR scores between 0.00 and 0.23. However, these defenses struggle against advanced methods (\textit{e.g.}, GRAB, DAGER), where R-1 scores range from 0.13 to 1.00 and METEOR scores from 0.10 to 0.98, with DAGER achieving near-perfect data recovery. In contrast, \codename significantly reduces attack success across all methods, with R-1 scores ranging from 0.01 to 0.08 and METEOR scores from 0.01 to 0.06. This represents up to 98\% more defended tokens compared to the strongest baseline defense. These results highlight \codename's superior defense capabilities, outperforming other methods and providing robust protection against all evaluated GIA methods.

Table~\ref{table:utility_comparison} provides a summary of utility-privacy trade-offs under various defenses. In the absence of any defense, classification models achieve accuracies and F1 scores ranging from 0.64 to 0.94, while generative models exhibit losses between 1.23 and 2.40 and perplexities from 3.42 to 11.05. When gradient noise is applied, classification model accuracies drop to a range of 0.19–0.88, with F1 scores ranging from 0.16 to 0.88. Gradient pruning results in classification accuracies of 0.59–0.94 and F1 scores of 0.58–0.94, while generative model losses range from 1.22 to 2.91 and perplexities from 3.40 to 10.82. With \codename applied, classification model accuracies fall between 0.47 and 0.84, and F1 scores between 0.48 and 0.85, while generative model losses range from 1.70 to 2.95 and perplexities from 5.45 to 19.12. In most cases, these defenses achieve comparable utility levels. However, considering the privacy preservation ability, \codename achieves the best utility-privacy trade-off. It preserves model utility comparable to non-defended models while significantly diminishing attack efficacy, demonstrating its superior ability to balance privacy protection and utility preservation.

\subsection{Ablation Study}
\label{sec:ablation}
We perform an ablation study to isolate and evaluate the effectiveness of \codename's individual components. Specifically, we investigate the impact of the similarity heuristics used during the searching step, and we separately analyze the role of the selection step. 
\myparagraph{Ablation details} For the searching step, we assess the contribution of the three similarity heuristics. This involves systematically disabling each heuristic and observing its impact on the degree of data obfuscation (privacy). For the selection step, to understand the importance of the selection mechanism in maintaining model utility, we compare \codename's selection strategy against two simplified baselines, which are random selection and no selection. For random selection, the shadow tokens are randomly selected from candidates identified in the searching step, disregarding their alignment with the internal model outputs. For no selection, tokens are directly obfuscated with the candidate tokens closest to the original tokens in the embedding space.

\myparagraph{Model and dataset} We evaluate BERT-base-uncased on the SST-2 dataset to demonstrate the effectiveness of the various components of \codename.

\myparagraph{Metrics} To align with the main experiments, we use accuracy and F1 scores to measure the model utility, and use ROUGE score and METEOR score to measure the data obfuscation level.

\myparagraph{Results} The results of the ablation study are summarized in Table~\ref{table:ablation} in Appendix~\ref{appendix: ablation}. For the similarity heuristics, enabling all heuristics or selectively disabling any individual heuristic leads to nearly zero ROUGE and METEOR scores. When disabling two heuristics simultaneously, higher METEOR scores are observed (up to 0.1). Moreover, completely disabling all heuristics results in a significant increase in the METEOR scores of up to 0.2. This indicates that similarity heuristics work effectively together to reduce semantic overlap, thus enhancing privacy protection. Regarding the selection methods, applying random search yields only low utility, whereas applying no search achieves a moderate level of utility. In contrast, employing our optimization method attains a notably high utility level. This demonstrates the efficacy of our optimization approach in maintaining model utility despite data obfuscation.

\begin{table*}[!htbp]

\renewcommand{\arraystretch}{.5}
\setlength{\tabcolsep}{2pt}

\centering
\caption{Attack efficacy of adaptive attacks on SST-2 dataset.}
\label{table:adaptive_attack_partial}

\begin{threeparttable}
\scriptsize

\resizebox{\linewidth}{!}{

\begin{tabular}{
l
rrrr p{1pt}
rrrr p{1pt}
rrrr p{1pt}
rrrr p{1pt}
rrrr
}
\toprule
& 
\multicolumn{4}{c}{Sampling} &&
\multicolumn{4}{c}{Max Similarity} &&
\multicolumn{4}{c}{Median Similarity} &&
\multicolumn{4}{c}{Mean Similarity} &&
\multicolumn{4}{c}{LLM generation} \\
\cmidrule(l{1pt}r{1pt}){2-5}
\cmidrule(l{1pt}r{1pt}){7-10} 
\cmidrule(l{1pt}r{1pt}){12-15}
\cmidrule(l{1pt}r{1pt}){17-20}
\cmidrule(l{1pt}r{1pt}){22-25}
& 
\multicolumn{1}{c}{R-1} & \multicolumn{1}{c}{R-2} & \multicolumn{1}{c}{R-L} & \multicolumn{1}{c}{M} &&
\multicolumn{1}{c}{R-1} & \multicolumn{1}{c}{R-2} & \multicolumn{1}{c}{R-L} & \multicolumn{1}{c}{M} &&
\multicolumn{1}{c}{R-1} & \multicolumn{1}{c}{R-2} & \multicolumn{1}{c}{R-L} & \multicolumn{1}{c}{M} &&
\multicolumn{1}{c}{R-1} & \multicolumn{1}{c}{R-2} & \multicolumn{1}{c}{R-L} & \multicolumn{1}{c}{M} &&
\multicolumn{1}{c}{R-1} & \multicolumn{1}{c}{R-2} & \multicolumn{1}{c}{R-L} & \multicolumn{1}{c}{M} \\

\midrule
& 
\multicolumn{24}{c}{{SST-2}} \\
\midrule

BERT-base-uncased &
0.20 & 0.05 & 0.20 & 0.14 &&
0.20 & 0.04 & 0.20 & 0.15 &&
0.20 & 0.03 & 0.19 & 0.14 &&
0.21 & 0.04 & 0.21 & 0.17 &&
0.10 & 0.01 & 0.09 & 0.08 \\

BERT-base-cased &
0.09 & 0.01 & 0.08 & 0.06 &&
0.07 & 0.00 & 0.07 & 0.06 &&
0.07 & 0.00 & 0.07 & 0.05 &&
0.09 & 0.01 & 0.08 & 0.05 &&
0.07 & 0.01 & 0.06 & 0.05 \\

BERT-large-uncased &
0.12 & 0.02 & 0.12 & 0.08 &&
0.15 & 0.02 & 0.14 & 0.11 &&
0.11 & 0.01 & 0.10 & 0.06 &&
0.15 & 0.02 & 0.15 & 0.11 &&
0.08 & 0.01 & 0.07 & 0.07 \\

BERT-large-cased &
0.08 & 0.01 & 0.07 & 0.05 &&
0.14 & 0.02 & 0.13 & 0.09 &&
0.06 & 0.00 & 0.06 & 0.04 &&
0.10 & 0.01 & 0.09 & 0.07 &&
0.10 & 0.01 & 0.08 & 0.08 \\

RoBERTa-base &
0.05 & 0.00 & 0.05 & 0.03 &&
0.05 & 0.00 & 0.05 & 0.03 &&
0.05 & 0.00 & 0.04 & 0.03 &&
0.04 & 0.00 & 0.04 & 0.03 &&
0.09 & 0.01 & 0.07 & 0.07 \\

RoBERTa-large &
0.04 & 0.00 & 0.04 & 0.03 &&
0.07 & 0.00 & 0.06 & 0.04 &&
0.05 & 0.00 & 0.05 & 0.04 &&
0.07 & 0.00 & 0.06 & 0.04 &&
0.09 & 0.01 & 0.08 & 0.08 \\

DeBERTa-v3-small &
0.07 & 0.00 & 0.07 & 0.04 &&
0.11 & 0.01 & 0.11 & 0.07 &&
0.07 & 0.00 & 0.06 & 0.05 &&
0.14 & 0.02 & 0.14 & 0.10 &&
0.11 & 0.01 & 0.10 & 0.09 \\

DeBERTa-v3-base &
0.01 & 0.00 & 0.01 & 0.01 &&
0.10 & 0.01 & 0.10 & 0.07 &&
0.01 & 0.00 & 0.01 & 0.00 &&
0.09 & 0.00 & 0.08 & 0.06 &&
0.06 & 0.00 & 0.06 & 0.05 \\

DeBERTa-v3-large &
0.05 & 0.00 & 0.05 & 0.04 &&
0.17 & 0.02 & 0.16 & 0.14 &&
0.04 & 0.00 & 0.04 & 0.03 &&
0.18 & 0.03 & 0.18 & 0.15 &&
0.18 & 0.04 & 0.17 & 0.16 \\

\bottomrule
\end{tabular}
}

\end{threeparttable}
\end{table*}

\subsection{Adaptive Attacks}
\label{sec:adaptive_attacks}

Beyond standard GIAs, we evaluate the robustness of \codename against adaptive attacks, assuming an attacker with significantly enhanced capabilities. 

\myparagraph{Adversary capabilities}
In this scenario, the attacker is able to fully recover the obfuscated data through GIAs, given that the objective of the current GIAs is to recover the data that participates in the training. The attacker also has complete knowledge of the defense mechanism, including details such as the top-$k$ parameter, the number of beams, and the overlap ratio $\tau_o$. Specifically, the attacker understands that each token $t$ in the obfuscated data is a shadow token of the original token $t_{ori}$ that resides within the shadow token set $\mathcal{S}_{t_{ori}}$.

To exploit this knowledge, the attacker can extract the set $\mathcal{T}_{ori}$, comprising all tokens for which $t$ appears as a shadow token, and then compute the cosine similarities between $t$ and tokens in $\mathcal{T}_{ori}$. Using this information, the attacker can design strategies to reverse-engineer the obfuscation process, attempting to recover the original token $t_{ori}$ from the shadow token $t$. This setup represents a challenging attack scenario that assesses \codename's robustness under adaptive adversarial conditions. 

\myparagraph{Adaptive attack strategies}
We evaluate the robustness of \codename against five potential adaptive attack strategies, detailed as follows:
\begin{itemize}[left=0pt]
    \item \textbf{Sampling}: The attacker uses the computed similarities between $t$ and $\mathcal{T}_{ori}$ as probabilities to randomly sample a token from $\mathcal{T}_{ori}$ to replace $t$.
    \item \textbf{Maximum similarity}: The attacker selects the token with the highest similarity from $\mathcal{T}_{ori}$ to replace $t$.
    \item \textbf{Median similarity}: The attacker selects the token with the median similarity from $\mathcal{T}_{ori}$ to replace $t$.
    \item \textbf{Mean embedding}: The attacker computes the mean embedding of all tokens in $\mathcal{T}_{ori}$ and selects the token with the highest similarity to the mean embedding from $\mathcal{T}_{ori}$ to replace $t$.
    \item \textbf{LLM generation}: The attacker identifies a reference dataset from a similar domain and applies \codename to create obfuscated data. A generative LLM is then trained on the obfuscated data to generate the original data. Details of the experimental settings of this adaptive attack can be found in Appendix~\ref{appendix:LLM_gen_attack}.
\end{itemize}

\myparagraph{Metrics}
We use ROUGE and METEOR scores to measure the similarity between the recovered sentences and the original sentences as the attack efficacy.

\myparagraph{Results}
The results of adaptive attacks on the SST-2 dataset are presented in Table~\ref{table:adaptive_attack_partial} for demonstration, and the full results are presented in Table~\ref{table:adaptive_attack} in Appendix~\ref{appendix: adaptive}. Although most adaptive attacks demonstrate increased attack efficacy compared to standard GIAs, the overall data recovery remains relatively low, with R-1 scores ranging from 0.00 to 0.29 and METEOR scores from 0.00 to 0.25. \codename effectively withstands even challenging adaptive attacks and exhibits exceptional robustness, where a powerful adversary possesses complete knowledge of the applied defense methods. This effective mitigation directly relates to our theoretical analysis, where it shows that the gradient difference between obfuscated and original data grows proportionally larger than their loss difference, making the original tokens indistinguishable.
\section{Related work}
We review GIAs, defenses, and text modification techniques.

\subsection{Existing GIAs}
Collaborative learning models are vulnerable to privacy attacks. Beyond membership inference~\cite{ma2023loden, shokri2017membership, mireshghallah2022quantifying}, GIAs exploit shared gradients to reconstruct private training data, achieving strong results in vision tasks using labels, priors, and batch statistics~\cite{dlg, idlg, ig, april, stg, gip}. More recently, these methods have been extended to text data of language models~\cite{dlg, tag, film, lamp, grab, decep, dager}.
Early work on language models directly applies continuous optimization from image-based GIAs. DLG~\cite{dlg} first demonstrates partial recovery from BERT, while TAG~\cite{tag} improves token recovery with $L_1$ regularization. However, continuous optimization alone struggles with the token ordering of the reconstructed sequences. Discrete optimization methods address this: FILM~\cite{film} uses GPT-2’s memorization, LAMP~\cite{lamp} alternates gradient descent and random search, GRAB~\cite{grab} extends to practical FL with dropout and beam search, and DAGER~\cite{dager} achieves near-perfect recovery via analyzing attention ranks.
Beyond these passive attacks, DECEPTICONS~\cite{decep} actively injects malicious updates to extract high-fidelity text, though its resilience to detection remains unclear.
\subsection{Defenses against GIAs}
Defenses generally aim to disrupt gradient patterns. Gradient noise injection~\cite{dlg, wei2020framework} or gradient pruning~\cite{dlg} degrade recovery by perturbing shared gradients. Freezing embeddings~\cite{film} and enabling dropout provide lightweight protection for language models. Approaches that generate surrogate data, such as Refiner~\cite{fan2022refiner}, which mimics gradient behavior in vision models but is impractical for real-time training.

\subsection{Text Modifications}
Text modification techniques have been explored. 
Defensively, approaches include character-level edits for adversarial training~\cite{ma2023formalizing}, token substitutions with differential privacy guarantees~\cite{chen-etal-2023-customized}, and rewriting via DP-BART~\cite{igamberdiev-habernal-2023-dp}. 
Pape \textit{et al.}~\cite{pape2024prompt} obfuscate prompts with hard or soft transformation that can lead to similar outputs of the models to prevent prompt extraction. Wang \textit{et al.}~\cite{wang2025ai} modulates the model output by adding or subtracting noise to the output logits.

\section{Conclusion}
In this work, we propose \codename, a novel token-level defense mechanism against GIAs by decoupling the inherent connection across gradient, embedding, and token spaces. \codename is built upon an important insight: due to the large scale of the token space, there exist semantically distinct yet embedding-proximate tokens that can serve as the shadow substitutes of the original tokens. By systematically searching and selecting these shadow tokens to obfuscate the original data, it achieves exceptional privacy protection while preserving the critical features required for training. \codename demonstrates a paradigm shift from the gradient level to the token level for effective mitigation against GIAs.

\newpage
\bibliographystyle{ACM-Reference-Format}
\bibliography{references}

@inproceedings{fl,
  title={Communication-efficient learning of deep networks from decentralized data},
  author={McMahan, Brendan and Moore, Eider and Ramage, Daniel and Hampson, Seth and y Arcas, Blaise Aguera},
  booktitle={Proceedings of the 2017 International Conference on Artificial Intelligence and Statistics (AISTATS)},
  pages={1273--1282},
  year={2017},
}

@article{sst2_dataset,
  title={The Stanford Sentiment Treebank},
  author={Socher, Richard and Perelygin, Alex and Wu, Jean and Chuang, Jason and Manning, Christopher D and Ng, Andrew Y and Potts, Christopher},
  journal={Proceedings of the Conference on Empirical Methods in Natural Language Processing (EMNLP)},
  year={2013},
  pages={1631--1642}
}

@article{rotten_tomatoes_dataset,
  title={Rotten Tomatoes Movie Reviews Dataset},
  author={Pang, Bo and Lee, Lillian},
  journal={Proceedings of the Annual Meeting of the Association for Computational Linguistics (ACL)},
  year={2005},
  pages={79--86}
}

@inproceedings{tweet_sentiment_dataset,
  title={Sentiment analysis on microblogging platforms: A comprehensive review},
  author={Pak, Alexander and Paroubek, Patrick},
  booktitle={Proceedings of the International Conference on Weblogs and Social Media (ICWSM)},
  year={2010},
  pages={145--152}
}

@misc{yahoo_answers_dataset,
      title={Yahoo Answers Topics}, 
        howpublished={\url{https://huggingface.co/datasets/community-datasets/yahoo_answers_topics}},
      author={Huggingface},
      year={2024},
note={Accessed: October, 2024}
}

@inproceedings{enron_emails_dataset,
  title={The Enron Email Dataset: A New Benchmark for Text Analysis},
  author={Cohen, William W},
  booktitle={Conference on Email and Anti-Spam (CEAS)},
  year={2004}
}

@inproceedings{open_australian_legal_corpus,
  title={Building an Open Australian Legal Corpus},
  author={Cabrera, Martha and Baldwin, Timothy and Verspoor, Karin},
  booktitle={Proceedings of the Australasian Language Technology Workshop (ALTW)},
  year={2016},
  pages={58--66}
}

@article{ag_news_dataset,
  title={AG News Corpus: A Large-scale News Categorization Dataset},
  author={Zhang, Xiang and Zhao, Junbo and LeCun, Yann},
  journal={Proceedings of Advances in Neural Information Processing Systems (NeurIPS)},
  year={2015},
  pages={1487--1495}
}

@inproceedings{cai2023federated,
  title={Federated few-shot learning for mobile NLP},
  author={Cai, Dongqi and Wang, Shangguang and Wu, Yaozong and Lin, Felix Xiaozhu and Xu, Mengwei},
  booktitle={Proceedings of the 2023 Annual International Conference on Mobile Computing and Networking (MobiCom)},
  pages={1--17},
  year={2023}
}

@article{tian2022fedbert,
  title={Fedbert: When federated learning meets pre-training},
  author={Tian, Yuanyishu and Wan, Yao and Lyu, Lingjuan and Yao, Dezhong and Jin, Hai and Sun, Lichao},
  journal={ACM Transactions on Intelligent Systems and Technology (TIST)},
  volume={13},
  number={4},
  pages={1--26},
  year={2022},
  publisher={ACM New York, NY}
}

@misc{FedML,
  author = {FedML},
  title = {Releasing FedLLM: Build Your Own Large Language Models on Proprietary Data using the FedML Platform},
  year = {2023},
  howpublished = {\url{https://blog.fedml.ai/releasing-fedllm-build-your-own-large-language-models-on-proprietary-data-using-the-fedml-platform/}},
note={Accessed: January, 2024}
}

@inproceedings{dlg,
  title={Deep leakage from gradients},
  author={Zhu, Ligeng and Liu, Zhijian and Han, Song},
  booktitle={the 2019 Advances in Neural Information Processing Systems (NeurIPS)},
  year={2019}
}

@article{idlg,
  title={idlg: Improved deep leakage from gradients},
  author={Zhao, Bo and Mopuri, Konda Reddy and Bilen, Hakan},
  journal={arXiv preprint arXiv:2001.02610},
  year={2020}
}

@inproceedings{gip,
  title={Gradient inversion with generative image prior},
  author={Jeon, Jinwoo and Lee, Kangwook and Oh, Sewoong and Ok, Jungseul and others},
  booktitle={the 2021 Advances in Neural Information Processing Systems (NeurIPS)},
  pages={29898--29908},
  year={2021}
}

@inproceedings{stg,
  title={See through gradients: Image batch recovery via gradinversion},
  author={Yin, Hongxu and Mallya, Arun and Vahdat, Arash and Alvarez, Jose M and Kautz, Jan and Molchanov, Pavlo},
  booktitle={Proceedings of the 2021 IEEE/CVF Conference on Computer Vision and Pattern Recognition (CVPR)},
  pages={16337--16346},
  year={2021}
}

@inproceedings{april,
  title={APRIL: Finding the Achilles' Heel on Privacy for Vision Transformers},
  author={Lu, Jiahao and Zhang, Xi Sheryl and Zhao, Tianli and He, Xiangyu and Cheng, Jian},
  booktitle={Proceedings of the 2022 IEEE/CVF Conference on Computer Vision and Pattern Recognition (CVPR)},
  pages={10051--10060},
  year={2022}
}

@inproceedings{ig,
  title={Inverting gradients-how easy is it to break privacy in federated learning?},
  author={Geiping, Jonas and Bauermeister, Hartmut and Dr{\"o}ge, Hannah and Moeller, Michael},
  booktitle={the 2020 Advances in Neural Information Processing Systems (NeurIPS)},
  pages={16937--16947},
  year={2020}
}

@article{tag,
  title={Tag: Gradient attack on transformer-based language models},
  author={Deng, Jieren and Wang, Yijue and Li, Ji and Shang, Chao and Liu, Hang and Rajasekaran, Sanguthevar and Ding, Caiwen},
  journal={arXiv preprint arXiv:2103.06819},
  year={2021}
}

@inproceedings{lamp,
  title={Lamp: Extracting text from gradients with language model priors},
  author={Balunovic, Mislav and Dimitrov, Dimitar and Jovanovi{\'c}, Nikola and Vechev, Martin},
  booktitle={the 2022 Advances in Neural Information Processing Systems (NeurIPS)},
  pages={7641--7654},
  year={2022}
}

@inproceedings{film,
  title={Recovering private text in federated learning of language models},
  author={Gupta, Samyak and Huang, Yangsibo and Zhong, Zexuan and Gao, Tianyu and Li, Kai and Chen, Danqi},
  booktitle={the 2022 Advances in Neural Information Processing Systems (NeurIPS)},
  pages={8130--8143},
  year={2022}
}

@inproceedings{decep,
  title={Decepticons: Corrupted transformers breach privacy in federated learning for language models},
  author={Fowl, Liam and Geiping, Jonas and Reich, Steven and Wen, Yuxin and Czaja, Wojtek and Goldblum, Micah and Goldstein, Tom},
  booktitle={Proceedings of the 2022 International Conference on Learning Representations (ICLR)},
  year={2022}
}

@inproceedings{grab,
  title={Uncovering Gradient Inversion Risks in Practical Language Model Training},
  author={Feng, Xinguo and Ma, Zhongkui and Wang, Zihan and Chegne, Eu Joe and Ma, Mengyao and Abuadbba, Alsharif and Bai, Guangdong},
  booktitle={Proceedings of the 2024 ACM SIGSAC Conference on Computer and Communications Security (CCS '24)},
  year={2024},
  month={October 14--18},
  address={Salt Lake City, UT, USA},
  publisher={ACM},
  location={New York, NY, USA},
  pages={15},
  doi={10.1145/3658644.3690292},
  url={https://doi.org/10.1145/3658644.3690292}
}

@inproceedings{dager,
  title={DAGER: Exact Gradient Inversion for Large Language Models},
  author={Petrov, Ivo and Dimitrov, Dimitar I and Baader, Maximilian and M{\"u}ller, Mark Niklas and Vechev, Martin},
  booktitle={the 2024 Advances in Neural Information Processing Systems (NeurIPS)},
  year={2024}
}

@inproceedings{dpsgd,
  title={Deep learning with differential privacy},
  author={Abadi, Martin and Chu, Andy and Goodfellow, Ian and McMahan, H Brendan and Mironov, Ilya and Talwar, Kunal and Zhang, Li},
  booktitle={Proceedings of the 2016 ACM SIGSAC conference on computer and communications security (CCS)},
  pages={308--318},
  year={2016}
}

@article{wei2020framework,
  title={A framework for evaluating gradient leakage attacks in federated learning},
  author={Wei, Wenqi and Liu, Ling and Loper, Margaret and Chow, Ka-Ho and Gursoy, Mehmet Emre and Truex, Stacey and Wu, Yanzhao},
  journal={arXiv preprint arXiv:2004.10397},
  year={2020}
}

@article{bert,
  title={Bert: Pre-training of deep bidirectional transformers for language understanding},
  author={Devlin, Jacob and Chang, Ming-Wei and Lee, Kenton and Toutanova, Kristina},
  journal={arXiv preprint arXiv:1810.04805},
  year={2018}
}

@article{roberta,
  title={Roberta: A robustly optimized bert pretraining approach},
  author={Liu, Yinhan and Ott, Myle and Goyal, Naman and Du, Jingfei and Joshi, Mandar and Chen, Danqi and Levy, Omer and Lewis, Mike and Zettlemoyer, Luke and Stoyanov, Veselin},
  journal={arXiv preprint arXiv:1907.11692},
  year={2019}
}

@inproceedings{he2020deberta,
  title={Deberta: Decoding-enhanced bert with disentangled attention},
  author={He, Pengcheng and Liu, Xiaodong and Gao, Jianfeng and Chen, Weizhu},
  booktitle={Proceedings of the 2022 International Conference on Learning Representations (ICLR)},
  year={2020}
}

@article{gpt2,
  title={Language models are unsupervised multitask learners},
  author={Radford, Alec and Wu, Jeffrey and Child, Rewon and Luan, David and Amodei, Dario and Sutskever, Ilya and others},
  journal={OpenAI blog},
  volume={1},
  number={8},
  pages={9},
  year={2019}
}

@article{touvron2023llama,
  title={Llama: Open and efficient foundation language models},
  author={Touvron, Hugo and Lavril, Thibaut and Izacard, Gautier and Martinet, Xavier and Lachaux, Marie-Anne and Lacroix, Timoth{\'e}e and Rozi{\`e}re, Baptiste and Goyal, Naman and Hambro, Eric and Azhar, Faisal and others},
  journal={arXiv preprint arXiv:2302.13971},
  year={2023}
}

@article{team2024gemma,
  title={Gemma: Open models based on gemini research and technology},
  author={Team, Gemma and Mesnard, Thomas and Hardin, Cassidy and Dadashi, Robert and Bhupatiraju, Surya and Pathak, Shreya and Sifre, Laurent and Rivi{\`e}re, Morgane and Kale, Mihir Sanjay and Love, Juliette and others},
  journal={arXiv preprint arXiv:2403.08295},
  year={2024}
}

@article{sentiment,
  title={Sentiment analysis algorithms and applications: A survey},
  author={Medhat, Walaa and Hassan, Ahmed and Korashy, Hoda},
  journal={Ain Shams engineering journal},
  volume={5},
  number={4},
  pages={1093--1113},
  year={2014},
  publisher={Elsevier}
}

@article{ner,
  title={A survey on deep learning for named entity recognition},
  author={Li, Jing and Sun, Aixin and Han, Jianglei and Li, Chenliang},
  journal={IEEE transactions on knowledge and data engineering},
  volume={34},
  number={1},
  pages={50--70},
  year={2020},
  publisher={IEEE}
}

@book{koehn2009statistical,
  title={Statistical machine translation},
  author={Koehn, Philipp},
  year={2009},
  publisher={Cambridge University Press}
}

@inproceedings{transformer,
  title={Attention is all you need},
  author={Vaswani, Ashish and Shazeer, Noam and Parmar, Niki and Uszkoreit, Jakob and Jones, Llion and Gomez, Aidan N and Kaiser, {\L}ukasz and Polosukhin, Illia},
  booktitle={the 2017 Advances in Neural Information Processing Systems (NeurIPS)},
  year={2017}
}

@inproceedings{hu2021lora,
  title={Lora: Low-rank adaptation of large language models},
  author={Hu, Edward J and Shen, Yelong and Wallis, Phillip and Allen-Zhu, Zeyuan and Li, Yuanzhi and Wang, Shean and Wang, Lu and Chen, Weizhu},
  booktitle={Proceedings of the 2022 International Conference on Learning Representations (ICLR)},
  year={2022}
}

@inproceedings{rouge,
    title = "{ROUGE}: A Package for Automatic Evaluation of Summaries",
    author = "Lin, Chin-Yew",
    booktitle = "Text Summarization Branches Out",
    month = Jul,
    year = "2004",
    pages = "74--81",
}

@inproceedings{banerjee2005meteor,
  title={METEOR: An automatic metric for MT evaluation with improved correlation with human judgments},
  author={Banerjee, Satanjeev and Lavie, Alon},
  booktitle={Proceedings of the acl workshop on intrinsic and extrinsic evaluation measures for machine translation and/or summarization},
  pages={65--72},
  year={2005}
}

@inproceedings{zhang2019bertscore,
  title={Bertscore: Evaluating text generation with bert},
  author={Zhang, Tianyi and Kishore, Varsha and Wu, Felix and Weinberger, Kilian Q and Artzi, Yoav},
  booktitle={Proceedings of the 2020 International Conference on Learning Representations (ICLR)},
  year={2020}
}

@misc{fine_persona,
      title={
FinePersonas-Conversations-Email-Summaries}, 
        howpublished={\url{https://huggingface.co/datasets/argilla/FinePersonas-Conversations-Email-Summaries}},
      author={Huggingface},
      year={2024},
note={Accessed: April, 2025}
}

@misc{legal_task,
      title={legal-task}, 
        howpublished={\url{https://huggingface.co/datasets/darrow-ai/legal-task}},
      author={Huggingface},
      year={2024},
note={Accessed: April, 2025}
}

@misc{news_dataset,
      title={news-dataset-seq}, 
        howpublished={\url{https://huggingface.co/datasets/prithivMLmods/News-Dataset-Seq}},
      author={Huggingface},
      year={2024},
note={Accessed: April, 2025}
}

@inproceedings{ma2023loden,
  title={Loden: Making every client in federated learning a defender against the poisoning membership inference attacks},
  author={Ma, Mengyao and Zhang, Yanjun and Arachchige, Pathum Chamikara Mahawaga and Zhang, Leo Yu and Chhetri, Mohan Baruwal and Bai, Guangdong},
  booktitle={Proceedings of the 2023 ACM Asia Conference on Computer and Communications Security},
  pages={122--135},
  year={2023}
}

@inproceedings{shokri2017membership,
  title={Membership inference attacks against machine learning models},
  author={Shokri, Reza and Stronati, Marco and Song, Congzheng and Shmatikov, Vitaly},
  booktitle={Proceedings of the 2017 IEEE Symposium on Security and Privacy (SP)},
  pages={3--18},
  year={2017},
}

@inproceedings{mireshghallah2022quantifying,
  title={Quantifying Privacy Risks of Masked Language Models Using Membership Inference Attacks},
  author={Mireshghallah, Fatemehsadat and Goyal, Kartik and Uniyal, Archit and Berg-Kirkpatrick, Taylor and Shokri, Reza},
  booktitle={Proceedings of the 2022 Conference on Empirical Methods in Natural Language Processing (EMNLP)},
  pages={8332--8347},
  year={2022}
}

@article{fan2022refiner,
  title={Refiner: Data Refining against Gradient Leakage Attacks in Federated Learning},
  author={Fan, Mingyuan and Chen, Cen and Wang, Chengyu and Li, Xiaodan and Zhou, Wenmeng and Huang, Jun},
  journal={arXiv preprint arXiv:2212.02042},
  year={2022}
}

@inproceedings{chen-etal-2023-customized,
    title = "A Customized Text Sanitization Mechanism with Differential Privacy",
    author = "Chen, Sai  and
      Mo, Fengran  and
      Wang, Yanhao  and
      Chen, Cen  and
      Nie, Jian-Yun  and
      Wang, Chengyu  and
      Cui, Jamie",
    editor = "Rogers, Anna  and
      Boyd-Graber, Jordan  and
      Okazaki, Naoaki",
    booktitle = "Findings of the Association for Computational Linguistics: ACL 2023",
    month = jul,
    year = "2023",
    address = "Toronto, Canada",
    publisher = "Association for Computational Linguistics",
    url = "https://aclanthology.org/2023.findings-acl.355/",
    doi = "10.18653/v1/2023.findings-acl.355",
    pages = "5747--5758"
}

@inproceedings{igamberdiev-habernal-2023-dp,
    title = "{DP}-{BART} for Privatized Text Rewriting under Local Differential Privacy",
    author = "Igamberdiev, Timour  and
      Habernal, Ivan",
    editor = "Rogers, Anna  and
      Boyd-Graber, Jordan  and
      Okazaki, Naoaki",
    booktitle = "Findings of the Association for Computational Linguistics: ACL 2023",
    month = jul,
    year = "2023",
    address = "Toronto, Canada",
    publisher = "Association for Computational Linguistics",
    url = "https://aclanthology.org/2023.findings-acl.874/",
    doi = "10.18653/v1/2023.findings-acl.874",
    pages = "13914--13934"
}

@inproceedings{liu2024purpose,
  title={Being Transparent is Merely the Beginning: Enforcing Purpose Limitation with Polynomial Approximation},
  author={Liu, Shuofeng and Wang, Zihan and Xue, Minhui and Wang, Long and Zhang, Yuanchao and Bai, Guangdong.},
  journal={USENIX Security},
  year={2024}
}

@inproceedings{ma2023formalizing,
  title={Formalizing Robustness Against Character-Level Perturbations for Neural Network Language Models},
  author={Ma, Zhongkui and Feng, Xinguo and Wang, Zihan and Liu, Shuofeng and Ma, Mengyao and Guan, Hao and Meng, Mark Huasong},
  booktitle={International Conference on Formal Engineering Methods},
  pages={100--117},
  year={2023},
  organization={Springer}
}

@inproceedings{
Holtzman2020The,
title={The Curious Case of Neural Text Degeneration},
author={Ari Holtzman and Jan Buys and Li Du and Maxwell Forbes and Yejin Choi},
booktitle={International Conference on Learning Representations},
year={2020},
url={https://openreview.net/forum?id=rygGQyrFvH}
}

@article{pape2024prompt,
  title={Prompt obfuscation for large language models},
  author={Pape, David and Mavali, Sina and Eisenhofer, Thorsten and Sch{\"o}nherr, Lea},
  journal={arXiv preprint arXiv:2409.11026},
  year={2024}
}

@inproceedings{le2007continuous,
  title={Continuous neural networks},
  author={Le Roux, Nicolas and Bengio, Yoshua},
  booktitle={Artificial Intelligence and Statistics},
  pages={404--411},
  year={2007},
  organization={PMLR}
}

@inproceedings{radiya2020fine,
  title={How fine can fine-tuning be? learning efficient language models},
  author={Radiya-Dixit, Evani and Wang, Xin},
  booktitle={International Conference on Artificial Intelligence and Statistics},
  pages={2435--2443},
  year={2020},
  organization={PMLR}
}

@article{zhang2018efficient,
  title={Efficient neural network robustness certification with general activation functions},
  author={Zhang, Huan and Weng, Tsui-Wei and Chen, Pin-Yu and Hsieh, Cho-Jui and Daniel, Luca},
  journal={Advances in neural information processing systems},
  volume={31},
  year={2018}
}

@article{ye2024fedllm,
  title={Fedllm-bench: Realistic benchmarks for federated learning of large language models},
  author={Ye, Rui and Ge, Rui and Zhu, Xinyu and Chai, Jingyi and Yaxin, Du and Liu, Yang and Wang, Yanfeng and Chen, Siheng},
  journal={Advances in Neural Information Processing Systems},
  volume={37},
  pages={111106--111130},
  year={2024}
}

@inproceedings{zhang2023agrevader,
  title={AgrEvader: Poisoning membership inference against Byzantine-robust federated learning},
  author={Zhang, Yanjun and Bai, Guangdong and Chamikara, Mahawaga Arachchige Pathum and Ma, Mengyao and Shen, Liyue and Wang, Jingwei and Nepal, Surya and Xue, Minhui and Wang, Long and Liu, Joseph},
  booktitle={Proceedings of the ACM Web Conference 2023},
  pages={2371--2382},
  year={2023}
}

@inproceedings{ma2024unveiling,
  title={Unveiling intellectual property vulnerabilities of gan-based distributed machine learning through model extraction attacks},
  author={Ma, Mengyao and Liu, Shuofeng and Chamikara, MAP and Baruwal Chhetri, Mohan and Bai, Guangdong},
  booktitle={Proceedings of the 33rd ACM International Conference on Information and Knowledge Management},
  pages={1617--1626},
  year={2024}
}

@inproceedings{vo2025practical,
  title={Practical Poisoning Attacks with Limited Byzantine Clients in Clustered Federated Learning},
  author={Vo, Viet and Ma, Mengyao and Bai, Guangdong and Ko, Ryan and Neplal, Surya},
  booktitle={2025 IEEE Symposium on Security and Privacy (SP)},
  pages={1751--1769},
  year={2025},
  organization={IEEE}
}

@inproceedings{wang2025ai,
  title={AI Model Modulation with Logits Redistribution},
  author={Wang, Zihan and Ma, Zhongkui and Feng, Xinguo and Mei, Zhiyang and Ma, Ethan and Wang, Derui and Xue, Minhui and Bai, Guangdong},
  booktitle={Proceedings of the ACM on Web Conference 2025},
  pages={4699--4709},
  year={2025}
}

@inproceedings{wang2024corelocker,
  title={Corelocker: Neuron-level usage control},
  author={Wang, Zihan and Ma, Zhongkui and Feng, Xinguo and Sun, Ruoxi and Wang, Hu and Xue, Minhui and Bai, Guangdong},
  booktitle={2024 IEEE Symposium on Security and Privacy (SP)},
  pages={2497--2514},
  year={2024},
  organization={IEEE}
}

@article{wang2025re,
  title={Re-Key-Free, Risky-Free: Adaptable Model Usage Control},
  author={Wang, Zihan and Ma, Zhongkui and Feng, Xinguo and Yan, Chuan and Liu, Dongge and Sun, Ruoxi and Wang, Derui and Xue, Minhui and Bai, Guangdong},
  journal={arXiv preprint arXiv:2511.18772},
  year={2025}
}

@article{wang2025catch,
  title={Catch-Only-One: Non-Transferable Examples for Model-Specific Authorization},
  author={Wang, Zihan and Ma, Zhiyong and Ma, Zhongkui and Liu, Shuofeng and Liu, Akide and Wang, Derui and Xue, Minhui and Bai, Guangdong},
  journal={arXiv preprint arXiv:2510.10982},
  year={2025}
}

\appendix
\section{Additional Experimental Settings}

\subsection{Hyperparameters Study}
\label{appendix:hyper}
We run a grid search with the GPT-2 model and the Enron Emails dataset for the following hyperparameters of \codename: top-$k$ (10, 20, $\cdots$, 100), overlap threshold $\tau_o$ (0.1, 0.2, $\cdots$, 0.5), and beam search width (1, 2, $\cdots$, 5). Our results show that \codename is robust across these hyperparameter combinations. We select a configuration of top-$k = 70$, $\tau_o = 0.1$, and beams $= 1$, as this setup achieves a balanced trade-off among utility, privacy, and computational efficiency. The complete parameter search results are in Table~\ref{table:param_search}.

\begin{table}[!htp]

\renewcommand{\arraystretch}{.6}
\setlength{\tabcolsep}{3pt}

\centering
\caption{Hyperparamters study.}
\label{table:param_search}

\begin{threeparttable}

\begin{tabular}{
cc
rr p{0pt}
rrrr
}

\toprule
&&
\multirow{3}{*}{\makecell[c]{Loss}} &
\multirow{3}{*}{\makecell[c]{PPL}} &&
\multicolumn{4}{c}{Text Similarity} \\
\cmidrule(l{1pt}r{1pt}){6-9}

&&&&&
\makecell[c]{R-1} & \makecell[c]{R-2} & \makecell[c]{R-L} & \makecell[c]{M} \\
\midrule

Pre-trained & \makecell[c]{---} & 8.17 & 3540.01 && \makecell[c]{---} & \makecell[c]{---} & \makecell[c]{---} & \makecell[c]{---} \\
\midrule

Original & \makecell[c]{---} & 2.4 & 11.05 && \makecell[c]{---} & \makecell[c]{---} & \makecell[c]{---} & \makecell[c]{---} \\
\midrule

\multirow{10}{*}{\makecell[c]{Obfuscated\\(Beam=1,\\$\tau_o$=0.1)}}
& Top-$k$=10  & 2.89 & 18.04 && 0.12 & 0.01 & 0.11 & 0.12 \\
& Top-$k$=20  & 2.93 & 18.68 && 0.11 & 0.01 & 0.10 & 0.12 \\
& Top-$k$=30  & 2.92 & 18.61 && 0.09 & 0.01 & 0.08 & 0.10 \\
& Top-$k$=40  & 2.94 & 19.00 && 0.08 & 0.00 & 0.08 & 0.10 \\
& Top-$k$=50  & 2.95 & 19.17 && 0.07 & 0.00 & 0.06 & 0.07 \\
& Top-$k$=60  & 2.95 & 19.16 && 0.06 & 0.00 & 0.06 & 0.08 \\
& Top-$k$=\textbf{70} & \textbf{2.95} & \textbf{19.18} &&  \textbf{0.04} & \textbf{0.00} & \textbf{0.04} & \textbf{0.05} \\
& Top-$k$=80  & 2.95 & 19.10 && 0.03 & 0.00 & 0.03 & 0.05 \\
& Top-$k$=90  & 2.96 & 19.24 && 0.03 & 0.00 & 0.03 & 0.06 \\
& Top-$k$=100 & 2.96 & 19.21 && 0.08 & 0.00 & 0.08 & 0.08 \\

\midrule

\multirow{5}{*}{\makecell[c]{Obfuscated\\(Top-$k$=70,\\$\tau_o$=0.1)}}
& Beam=\textbf{1} & \textbf{2.95} & \textbf{19.98} && \textbf{0.04} & \textbf{0.00} & \textbf{0.04} & \textbf{0.05} \\
& Beam=2 & 2.95 & 19.98 && 0.04 & 0.00 & 0.04 & 0.05 \\
& Beam=3 & 2.95 & 19.98 && 0.04 & 0.00 & 0.04 & 0.05 \\
& Beam=4 & 2.96 & 19.31 && 0.04 & 0.00 & 0.04 & 0.05 \\
& Beam=5 & 2.95 & 19.16 && 0.04 & 0.00 & 0.04 & 0.05 \\

\midrule

\multirow{5}{*}{\makecell[c]{Obfuscated\\(Top-$k$=70,\\Beam=1)}}
& $\tau_o$=\textbf{0.1} & \textbf{2.95} & \textbf{19.18} && \textbf{0.04} & \textbf{0.00} & \textbf{0.04} & \textbf{0.05} \\
& $\tau_o$=0.2 & 2.94 & 18.88 && 0.04 & 0.00 & 0.04 & 0.05 \\
& $\tau_o$=0.3 & 2.94 & 18.84 && 0.05 & 0.00 & 0.05 & 0.06 \\
& $\tau_o$=0.4 & 2.94 & 18.84 && 0.06 & 0.00 & 0.05 & 0.06 \\
& $\tau_o$=0.5 & 2.93 & 18.76 && 0.06 & 0.00 & 0.05 & 0.06 \\

\bottomrule
\end{tabular}

\end{threeparttable}
\end{table}

\subsection{LLM Generation Adaptive Attack}
\label{appendix:LLM_gen_attack}
\myparagraph{Model}
We use the advanced Llama-3.2-3B as the base model for fine-tuning the LLM attack model for all target models.

\myparagraph{Datasets}
In the LLM generation adaptive attack, Tweet Sentiment Analysis and Yahoo Answers Topics are paired as reference datasets due to their similar topics and shared domains. For the remaining datasets, the following target-reference pairs are employed: SST-2 with Rotten Tomatoes~\cite{rotten_tomatoes_dataset}, Enron Emails with Fine Personas Emails Conversation Summaries~\cite{fine_persona}, Open Australian Legal Corpus with Legal Task~\cite{legal_task}, and AG News with News Dataset Seq~\cite{news_dataset}. These pairings are thoughtfully chosen based on their thematic and domain relevance, providing a contextual similarity for training the attack model.
\section{Human Evaluations}
We detail the human evaluations on semantic similarity and generative model utility below.

\subsection{Human Evaluation on Semantic Similarity}
\label{appendix:human_eval_semantics}

In the main experiments, we use ROUGE (exact match) and METEOR (partial semantic match) scores to evaluate the similarity between original and obfuscated data.
This unsurprisingly achieves low similarity, as shadow tokens and original tokens differ in text, and the semantics overlap is limited to stems and synonyms.
To the best of our knowledge, no existing metric fully captures semantic similarity between original and obfuscated data.
To address this, we conduct a human evaluation to gain a comprehensive understanding and bridge the evaluation gap.

\myparagraph{Data selection} We divide all obfuscated data into 10 groups based on their R-1 scores with the original data, using intervals (\textit{e.g.}, 0.0--0.1, 0.2--0.3, $\dots$, 0.9--1.0). We randomly select 50 sentence pairs (an original and its obfuscated version) from these groups for evaluation, maintaining the sample distribution ratios. The sampled data is further screened to ensure no harmful content.

\myparagraph{Evaluation} Ethical approval for human-subject research is obtained for this evaluation from our institute. Ten external volunteers serve as evaluators and independently provide their evaluations. Evaluators receive instructions to rate the semantic similarity between each original and obfuscated sentence pair on a scale of 1 to 5, where higher scores indicate greater similarity. 
This structured approach ensures consistency in human evaluation. The detailed instruction is provided below.

\begin{tcolorbox}[
    colback=white,
    fontupper=\footnotesize,
    colframe=skyblue!40!white, 
    width=\columnwidth,       
    sharp corners,           
    title=Human evaluation instructions for semantic similarity., 
    fonttitle=\bfseries,     
    coltitle=black,           
    left=0mm, right=0mm, top=1mm, bottom=1mm,
    breakable
]
On a scale of 1 to 5 (1 being the lowest and 5 being the highest), rate the semantic similarity between the two sentences:

1: The sentences have completely different meanings, with no discernible overlap. The second sentence may be nonsensical or entirely unrelated.

2: The sentences share minimal overlap, but the meanings are mostly different or unclear. Key ideas are missing or distorted.

3: The sentences share some meaning but differ significantly in phrasing, content, or clarity. The overlap is partial but noticeable.

4: The sentences are mostly similar in meaning but may have slight differences in wording or emphasis. Key ideas are largely preserved.

5: The sentences convey almost identical meanings, with minimal or no differences in content or phrasing.
\end{tcolorbox}

 \myparagraph{Results} We report the evaluation results. The ratings range from 1 to 5, with a mean of 1.43, indicating that the sentence pairs exhibit minimal semantic similarity. The rating variances range from 0.0 to 1.6, with a mean of 0.37, indicating low inter-rater variability and consistent evaluator agreement. These findings validate \codename's effectiveness in disrupting semantics, demonstrating its strength as a data privacy mechanism.

 \subsection{Human Evaluation on Generative Model Utility}
\label{appendix:human_eval_generation}
In Section~\ref{gen_model_utility}, we evaluate the utility preservation of \codename by comparing the losses and perplexities of the original and defended models on test data. To further assess the similarity between their generations, we report ROUGE and METEOR scores, which capture lexical and partial semantic overlap. However, these metrics are limited in reflecting fluency and topic relevance, which are key aspects of practical generative utility. To address this limitation, we conduct a human evaluation to provide a more comprehensive assessment of the generation quality from both the original and defended models.

\myparagraph{Data selection} The first quarter of each test sentence from the three datasets for generative models is used to prompt the models. To encourage more diverse outputs, Nucleus sampling (top-p sampling)~\cite{Holtzman2020The} is applied with a top-p of 0.95 and a temperature of 0.8. We use a maximum length of 128 tokens, so that the generation length is not constrained by the original sentence length. The generations from both models are shuffled, and 50 generated samples are randomly selected for the human evaluation. The sampled data is further screened to ensure no harmful content.

\myparagraph{Evaluation} Ethical approval for human-subject research is obtained for this evaluation from our institute. Ten external volunteers serve as evaluators and independently provide their evaluations. Evaluators are instructed to rate each generation's fluency and topic relevance on a scale from 1 to 5, with higher scores indicating better quality. To prevent potential bias in topic relevance assessment, the authors conducting the evaluation are blinded to the source dataset of each sample. 
The detailed evaluation instructions are provided below.

\begin{tcolorbox}[
    colback=white,
    fontupper=\footnotesize,
    colframe=skyblue!40!white, 
    width=\columnwidth,       
    sharp corners,           
    title=Human evaluation instructions for generation quality., 
    fonttitle=\bfseries,     
    coltitle=black,           
    left=0mm, right=0mm, top=1mm, bottom=1mm,
    breakable
]
On a scale of 1 to 5 (1 being the lowest and 5 being the highest), rate the fluency and topic relevance to its category for each text. Note that the text is restricted to a certain length. It's normal for a long text to end in the middle of a sentence if it exceeds the length. Do not penalize scores based on this.

Fluency:
Rate how natural, clear, and grammatically correct the text is.

1: The text is hard to read, ungrammatical, or nonsensical. It may have serious errors or be incoherent.

2: The text has multiple issues with grammar, structure, or word choice, making it awkward or unclear.

3: The text is somewhat readable but contains noticeable errors or awkward phrasing. Overall meaning is understandable.

4: The text reads well with minor issues. It’s mostly clear and well-formed.

5: The text is smooth, grammatically correct, and natural-sounding with no major issues.

Topic relevance:

Rate how well the text fits its intended category or subject.

1: The text is completely off-topic or irrelevant to the category.

2: The text has minimal connection to the topic. Key ideas are missing or unrelated.

3: The text is partially on-topic but may stray, lack focus, or contain unrelated content.

4: The text mostly aligns with the topic, with only minor deviations or missing details.

5: The text is clearly relevant and strongly aligned with the topic. All key ideas are appropriately addressed.

\end{tcolorbox}

\myparagraph{Results} We report the human evaluation results on fluency and topic relevance of model generations. For fluency, the original models receive ratings between 1 and 5, with a mean of 3.55 and variances ranging from 0 to 2.18 (mean variance: 1.20). The defended models are rated from 1 to 5, with a mean of 3.86 and variances ranging from 0.49 to 2.54 (mean: 1.28). For topic relevance, the original models receive scores from 1 to 5, with a mean of 3.65 and variances ranging from 0 to 2.49 (mean: 1.23). The defended models also receive ratings from 1 to 5, with a mean of 3.88 and variances between 0.18 and 4.01 (mean: 1.41). These consistently moderate to high mean ratings, alongside relatively low mean variances, suggest that both the original and defended models generate highly fluent and topically relevant text, demonstrating the utility-preserving capability of \codename on generative models.
\section{Supplementary Figures and Tables}

\subsection{Adaptive Attacks Results}
\label{appendix: adaptive}
Table~\ref{table:adaptive_attack} presents the results of the five proposed adaptive attacks. Despite improved attack efficacy in most adaptive attacks, overall data recovery remains limited, with R-1 scores ranging from 0.00 to 0.29 and METEOR scores from 0.00 to 0.25. Considering the near-perfect recovery achieved by DAGER, \codename demonstrates exceptional robustness, successfully resisting even the most challenging adaptive attacks with full adversarial knowledge of defense methods and parameters.

\subsection{Ablation study}
\label{appendix: ablation}
Table~\ref{table:ablation} presents the ablation study. Disabling one or all similarity heuristics increases semantic similarity (METEOR up to 0.2), confirming that heuristics jointly reduce overlap and enhance privacy. For selection, random search yields poor utility, no search achieves moderate utility, and our optimization method attains the highest utility, demonstrating its effectiveness in preserving performance under obfuscation.

\subsection{Unsupported Experiments}
\label{appendix:unsupported}
We summarize the supported and unsupported models and defenses for all attack methods mentioned in Section~\ref{sec:defense_efficacy}.

\myparagraph{Models} For classification models, DLG, TAG, and LAMP only support attacking models from the BERT family, GRAB supports all models, and DAGER only supports attacking BERT-base-uncased. For generative models, the authors of DAGER provide modified versions of TAG and LAMP to support attacks on GPT-2, while DLG and GRAB do not support generative models. However, upon further investigation of the implementation, these modified versions of TAG and LAMP use GPT-2 as a classification model, which is not a standard approach for generative tasks. DAGER only supports GPT-2, Llama-2-7b, Llama-3-8b, and Llama-3.1-8b.

\myparagraph{Defenses} Most attacks support all evaluated defenses. The implementation of DAGER is currently encountering an error with the gradient noise defense.
This has been raised to the authors, but there is no response from them at the time of writing.

\begin{table}[!htp]

\renewcommand{\arraystretch}{.6}
\setlength{\tabcolsep}{3pt}

\centering
\caption{Ablation study of BERT-base-uncased on SST-2.}
\label{table:ablation}

\begin{threeparttable}

\begin{tabular}{
ccc p{0pt}
ccc p{0pt}
cc p{0pt}
cccc
}
\toprule
\multicolumn{3}{c}{Similarity} &&
\multicolumn{3}{c}{Search} &&
\multicolumn{2}{c}{Utility} &&
\multicolumn{4}{c}{Privacy}\\
\cmidrule(l{1pt}r{1pt}){1-3}
\cmidrule(l{1pt}r{1pt}){5-7}
\cmidrule(l{1pt}r{1pt}){9-10}
\cmidrule(l{1pt}r{1pt}){12-15}
\multicolumn{1}{c}{IS} & \multicolumn{1}{c}{DS} &\multicolumn{1}{c}{CS} &&
\multicolumn{1}{c}{RS} & \multicolumn{1}{c}{NS} & \multicolumn{1}{c}{OP} &&
\multicolumn{1}{c}{Acc.} & \multicolumn{1}{c}{F1} &&
\multicolumn{1}{c}{R-1} & \multicolumn{1}{c}{R-2} & \multicolumn{1}{c}{R-L} & \multicolumn{1}{c}{M}\\
\midrule
\ding{51}&\ding{51}&\ding{51}  && \ding{51} && && 0.63 & 0.73 && 0.01 & 0.00 & 0.01 & 0.01 \\
\ding{51}&\ding{51}&\ding{51}  &&  &\ding{51}& && 0.72 & 0.78 && 0.02 & 0.00 & 0.02 & 0.03 \\
\ding{51}&\ding{51}&\ding{51}  &&  &&\ding{51} && \textbf{0.82} & \textbf{0.85} && 0.02 & 0.00 & 0.02 & 0.03 \\

& \ding{51} & \ding{51} && \ding{51} && && 0.56 & 0.70 && 0.00 & 0.00 & 0.00 & 0.00 \\
& \ding{51} & \ding{51} &&  &\ding{51}& && 0.71 & 0.78 && 0.03 & 0.00 & 0.03 & 0.04 \\
& \ding{51} & \ding{51} &&  &&\ding{51} && \textbf{0.80} & \textbf{0.82} && 0.01 & 0.00 & 0.01 & 0.02 \\

\ding{51} && \ding{51} && \ding{51} && && 0.56 & 0.70 && 0.01 & 0.00 & 0.01 & 0.02 \\
\ding{51} && \ding{51} && &\ding{51}& && 0.70 & 0.76 && 0.02 & 0.00 & 0.02 & 0.05 \\
\ding{51} && \ding{51} && &&\ding{51} && \textbf{0.82} & \textbf{0.83} && 0.03 & 0.00 & 0.02 & 0.03 \\

\ding{51} &\ding{51}&  && \ding{51} && && 0.58 & 0.63 && 0.01 & 0.00 & 0.01 & 0.02 \\
\ding{51} &\ding{51}&  && &\ding{51}& && 0.78 & 0.81 && 0.02 & 0.00 & 0.02 & 0.05 \\
\ding{51} &\ding{51}&  && &&\ding{51} && \textbf{0.86} & \textbf{0.87} && 0.02 & 0.00 & 0.02 & 0.04 \\

\ding{51}&&  && \ding{51} && && 0.66 & 0.74 && 0.01 & 0.00 & 0.01 & 0.02 \\
\ding{51}&&  &&  &\ding{51}& && 0.74 & 0.79 && 0.02 & 0.00 & 0.02 & \textbf{0.10} \\
\ding{51}&&  &&  &&\ding{51} && \textbf{0.85} & \textbf{0.87} && 0.03 & 0.00 & 0.02 & 0.06 \\

&\ding{51}&  && \ding{51} && && 0.59 & 0.71 && 0.00 & 0.00 & 0.00 & 0.01 \\
&\ding{51}&  &&  &\ding{51}& && 0.84 & 0.86 && 0.03 & 0.00 & 0.03 & 0.06 \\
&\ding{51}&  &&  &&\ding{51} && \textbf{0.88} & \textbf{0.89} && 0.01 & 0.00 & 0.01 & 0.03 \\

&&\ding{51}  && \ding{51} && && 0.57 & 0.71 && 0.01 & 0.00 & 0.01 & 0.01 \\
&&\ding{51}  &&  &\ding{51}& && 0.82 & 0.84 && 0.03 & 0.00 & 0.02 & 0.07 \\
&&\ding{51}  &&  &&\ding{51} && \textbf{0.84} & \textbf{0.85} && 0.04 & 0.00 & 0.03 & 0.05 \\

&&  && \ding{51} && && 0.59 & 0.72 && 0.01 & 0.00 & 0.01 & 0.01 \\
&&  && &\ding{51}& && 0.88 & 0.88 && 0.02 & 0.00 & 0.02 & \textbf{0.20} \\
&&  && &&\ding{51} && \textbf{0.87} & \textbf{0.88} && 0.04 & 0.00 & 0.03 & \textbf{0.10} \\

\bottomrule
\end{tabular}

\begin{tablenotes}
    \footnotesize
    \item IS: Indirect similarity; DS: Direct similarity; CS: Common lemma similarity; RS: Random search; NS: No search; OP: Optimization.
\end{tablenotes}

\end{threeparttable}
\end{table}

\myparagraph{Methods} We also highlight that the attack method FILM~\cite{film} for generative models is not evaluated. Upon further investigation of its implementation, it is not a standard GIA method that recovers one sentence for a sample. It aims to recover multiple candidates and select the optimal one. However, the selection process it unclear.

\subsection{Validation of Theoretical Analysis}
\label{appendix: validation_theory}
Figure~\ref{fig:epsilons} shows that both bounds are concentrated in the low range and that $\epsilon_d$ largely determines $\epsilon$, as indicated by the 1-slope diagonal trends. Figure~\ref{fig:loss_vs_grad} shows the pair-wise scatter with linear regression (log-scaled vertical axis). Loss deviations remain consistently small, while gradient deviations are much larger and grow at a substantially higher fitted slope, confirming our theoretical prediction that utility is preserved but inversion is strongly suppressed.

\begin{figure}[!htbp]
    \centering
    \includegraphics[width=0.9\linewidth]{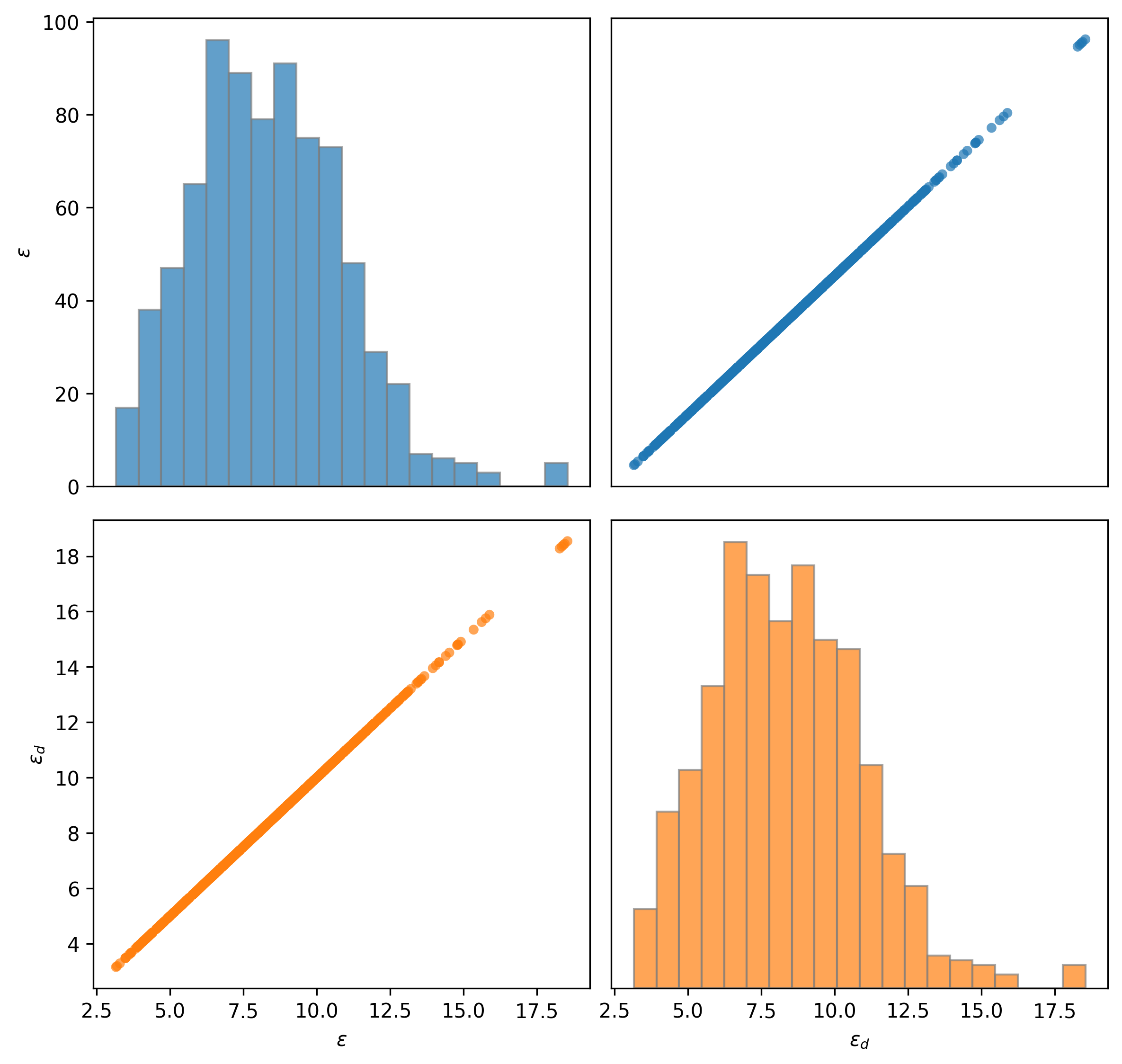}
    \caption{Upper Bounds of $\epsilon$.}
    \label{fig:epsilons}
\end{figure}
\begin{figure}[!htbp]
    \centering
    \includegraphics[width=0.9\linewidth]{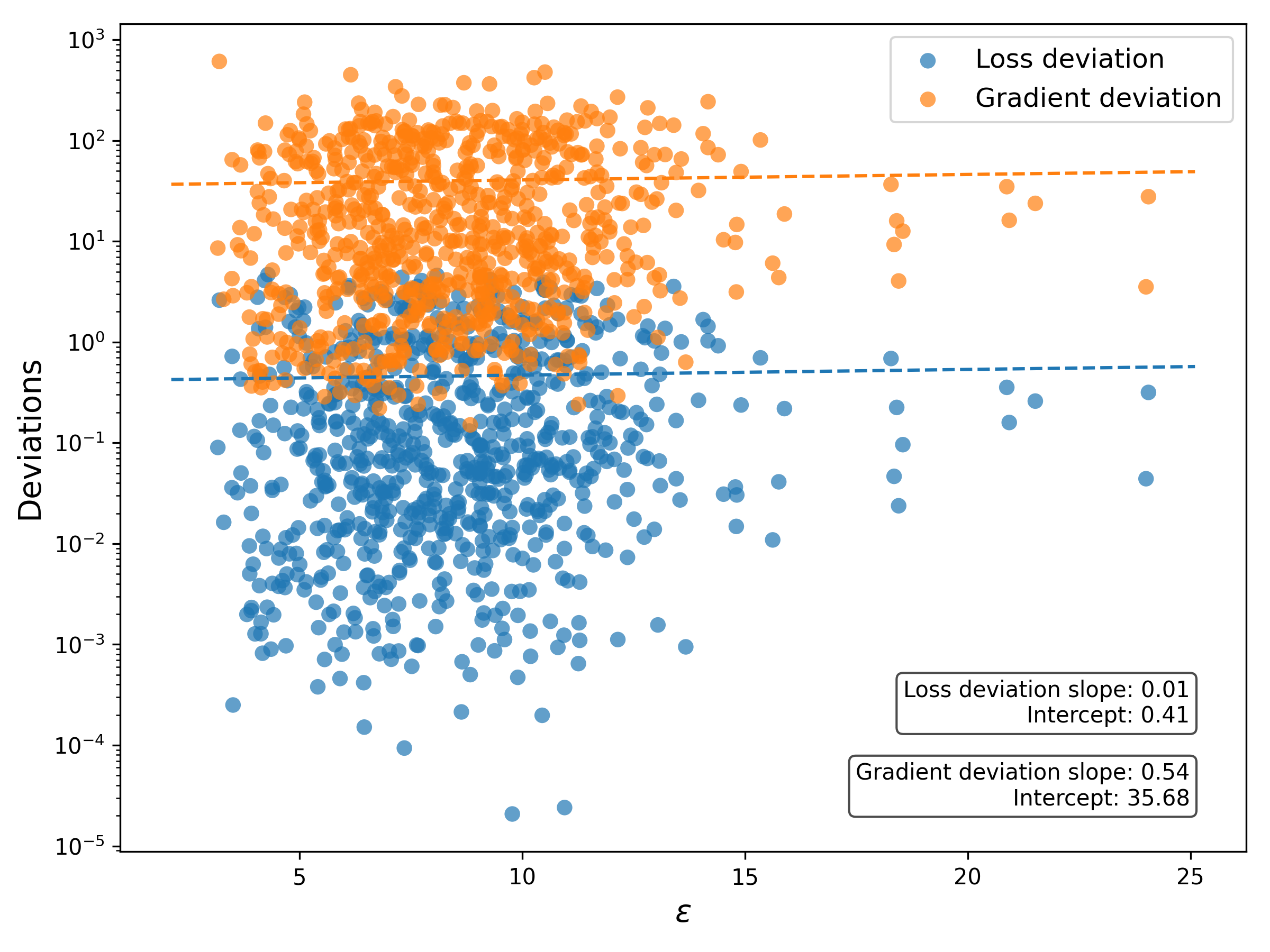}
    \caption{Loss and gradient deviation comparison.}
    \label{fig:loss_vs_grad}
\end{figure}

\afterpage{
    \begin{table*}[!htbp]

\renewcommand{\arraystretch}{.5}
\setlength{\tabcolsep}{2pt}

\centering
\caption{Attack efficacy of adaptive attacks.}
\label{table:adaptive_attack}

\begin{threeparttable}

\resizebox{!}{0.49\textheight}{

\begin{tabular}{
l
rrrr p{1pt}
rrrr p{1pt}
rrrr p{1pt}
rrrr p{1pt}
rrrr
}
\toprule
& 
\multicolumn{4}{c}{Sampling} &&
\multicolumn{4}{c}{Max Similarity} &&
\multicolumn{4}{c}{Median Similarity} &&
\multicolumn{4}{c}{Mean Similarity} &&
\multicolumn{4}{c}{LLM generation} \\
\cmidrule(l{1pt}r{1pt}){2-5}
\cmidrule(l{1pt}r{1pt}){7-10} 
\cmidrule(l{1pt}r{1pt}){12-15}
\cmidrule(l{1pt}r{1pt}){17-20}
\cmidrule(l{1pt}r{1pt}){22-25}
& 
\multicolumn{1}{c}{R-1} & \multicolumn{1}{c}{R-2} & \multicolumn{1}{c}{R-L} & \multicolumn{1}{c}{M} &&
\multicolumn{1}{c}{R-1} & \multicolumn{1}{c}{R-2} & \multicolumn{1}{c}{R-L} & \multicolumn{1}{c}{M} &&
\multicolumn{1}{c}{R-1} & \multicolumn{1}{c}{R-2} & \multicolumn{1}{c}{R-L} & \multicolumn{1}{c}{M} &&
\multicolumn{1}{c}{R-1} & \multicolumn{1}{c}{R-2} & \multicolumn{1}{c}{R-L} & \multicolumn{1}{c}{M} &&
\multicolumn{1}{c}{R-1} & \multicolumn{1}{c}{R-2} & \multicolumn{1}{c}{R-L} & \multicolumn{1}{c}{M} \\

\midrule
& 
\multicolumn{24}{c}{{SST2}} \\
\midrule

BERT-base-uncased &
0.20 & 0.05 & 0.20 & 0.14 &&
0.20 & 0.04 & 0.20 & 0.15 &&
0.20 & 0.03 & 0.19 & 0.14 &&
0.21 & 0.04 & 0.21 & 0.17 &&
0.10 & 0.01 & 0.09 & 0.08 \\

BERT-base-cased &
0.09 & 0.01 & 0.08 & 0.06 &&
0.07 & 0.00 & 0.07 & 0.06 &&
0.07 & 0.00 & 0.07 & 0.05 &&
0.09 & 0.01 & 0.08 & 0.05 &&
0.07 & 0.01 & 0.06 & 0.05 \\

BERT-large-uncased &
0.12 & 0.02 & 0.12 & 0.08 &&
0.15 & 0.02 & 0.14 & 0.11 &&
0.11 & 0.01 & 0.10 & 0.06 &&
0.15 & 0.02 & 0.15 & 0.11 &&
0.08 & 0.01 & 0.07 & 0.07 \\

BERT-large-cased &
0.08 & 0.01 & 0.07 & 0.05 &&
0.14 & 0.02 & 0.13 & 0.09 &&
0.06 & 0.00 & 0.06 & 0.04 &&
0.10 & 0.01 & 0.09 & 0.07 &&
0.10 & 0.01 & 0.08 & 0.08 \\

RoBERTa-base &
0.05 & 0.00 & 0.05 & 0.03 &&
0.05 & 0.00 & 0.05 & 0.03 &&
0.05 & 0.00 & 0.04 & 0.03 &&
0.04 & 0.00 & 0.04 & 0.03 &&
0.09 & 0.01 & 0.07 & 0.07 \\

RoBERTa-large &
0.04 & 0.00 & 0.04 & 0.03 &&
0.07 & 0.00 & 0.06 & 0.04 &&
0.05 & 0.00 & 0.05 & 0.04 &&
0.07 & 0.00 & 0.06 & 0.04 &&
0.09 & 0.01 & 0.08 & 0.08 \\

DeBERTa-v3-small &
0.07 & 0.00 & 0.07 & 0.04 &&
0.11 & 0.01 & 0.11 & 0.07 &&
0.07 & 0.00 & 0.06 & 0.05 &&
0.14 & 0.02 & 0.14 & 0.10 &&
0.11 & 0.01 & 0.10 & 0.09 \\

DeBERTa-v3-base &
0.01 & 0.00 & 0.01 & 0.01 &&
0.10 & 0.01 & 0.10 & 0.07 &&
0.01 & 0.00 & 0.01 & 0.00 &&
0.09 & 0.00 & 0.08 & 0.06 &&
0.06 & 0.00 & 0.06 & 0.05 \\

DeBERTa-v3-large &
0.05 & 0.00 & 0.05 & 0.04 &&
0.17 & 0.02 & 0.16 & 0.14 &&
0.04 & 0.00 & 0.04 & 0.03 &&
0.18 & 0.03 & 0.18 & 0.15 &&
0.18 & 0.04 & 0.17 & 0.16 \\
\midrule

&
\multicolumn{24}{c}{{Tweet}} \\
\midrule

BERT-base-uncased &
0.27 & 0.08 & 0.27 & 0.22 &&
0.25 & 0.07 & 0.25 & 0.20 &&
0.26 & 0.07 & 0.26 & 0.20 &&
0.25 & 0.06 & 0.24 & 0.21 &&
0.17 & 0.03 & 0.15 & 0.12 \\

BERT-base-cased &
0.09 & 0.01 & 0.08 & 0.07 &&
0.06 & 0.00 & 0.06 & 0.06 &&
0.08 & 0.01 & 0.08 & 0.06 &&
0.09 & 0.01 & 0.09 & 0.06 &&
0.09 & 0.01 & 0.08 & 0.07 \\

BERT-large-uncased &
0.15 & 0.02 & 0.15 & 0.11 &&
0.17 & 0.03 & 0.16 & 0.12 &&
0.15 & 0.02 & 0.14 & 0.10 &&
0.17 & 0.03 & 0.17 & 0.12 &&
0.10 & 0.01 & 0.09 & 0.08 \\

BERT-large-cased &
0.09 & 0.01 & 0.08 & 0.07 &&
0.12 & 0.01 & 0.11 & 0.09 &&
0.10 & 0.01 & 0.09 & 0.06 &&
0.10 & 0.01 & 0.09 & 0.06 &&
0.11 & 0.01 & 0.09 & 0.08 \\

RoBERTa-base &
0.08 & 0.01 & 0.08 & 0.06 &&
0.05 & 0.00 & 0.05 & 0.05 &&
0.06 & 0.00 & 0.06 & 0.04 &&
0.04 & 0.00 & 0.04 & 0.04 &&
0.11 & 0.01 & 0.09 & 0.08 \\

RoBERTa-large &
0.06 & 0.00 & 0.06 & 0.05 &&
0.08 & 0.01 & 0.08 & 0.07 &&
0.07 & 0.00 & 0.07 & 0.04 &&
0.08 & 0.00 & 0.08 & 0.06 &&
0.11 & 0.01 & 0.09 & 0.07 \\

DeBERTa-v3-small &
0.07 & 0.00 & 0.07 & 0.05 &&
0.10 & 0.01 & 0.10 & 0.06 &&
0.06 & 0.00 & 0.06 & 0.04 &&
0.14 & 0.02 & 0.13 & 0.11 &&
0.13 & 0.01 & 0.11 & 0.10 \\

DeBERTa-v3-base &
0.01 & 0.00 & 0.01 & 0.02 &&
0.09 & 0.01 & 0.09 & 0.07 &&
0.00 & 0.00 & 0.00 & 0.01 &&
0.11 & 0.01 & 0.10 & 0.08 &&
0.07 & 0.00 & 0.06 & 0.05 \\

DeBERTa-v3-large &
0.07 & 0.00 & 0.07 & 0.05 &&
0.20 & 0.04 & 0.19 & 0.17 &&
0.06 & 0.00 & 0.06 & 0.04 &&
0.19 & 0.03 & 0.19 & 0.16 &&
0.24 & 0.07 & 0.22 & 0.19 \\
\midrule

& 
\multicolumn{24}{c}{{Yahoo}} \\
\midrule

BERT-base-uncased &
0.28 & 0.08 & 0.27 & 0.22 &&
0.25 & 0.08 & 0.24 & 0.19 &&
0.28 & 0.09 & 0.28 & 0.20 &&
0.29 & 0.10 & 0.29 & 0.24 &&
0.21 & 0.05 & 0.19 & 0.18 \\

BERT-base-cased &
0.09 & 0.00 & 0.09 & 0.08 &&
0.07 & 0.00 & 0.07 & 0.05 &&
0.08 & 0.01 & 0.08 & 0.08 &&
0.11 & 0.01 & 0.10 & 0.09 &&
0.11 & 0.01 & 0.10 & 0.09\\

BERT-large-uncased &
0.15 & 0.02 & 0.15 & 0.10 &&
0.16 & 0.03 & 0.15 & 0.11 &&
0.14 & 0.02 & 0.14 & 0.09 &&
0.18 & 0.03 & 0.18 & 0.12 &&
0.15 & 0.02 & 0.13 & 0.12 \\

BERT-large-cased &
0.09 & 0.01 & 0.08 & 0.06 &&
0.12 & 0.01 & 0.11 & 0.08 &&
0.07 & 0.00 & 0.07 & 0.04 &&
0.12 & 0.01 & 0.11 & 0.07 &&
0.13 & 0.02 & 0.12 & 0.11\\

RoBERTa-base &
0.07 & 0.00 & 0.07 & 0.05 &&
0.04 & 0.00 & 0.04 & 0.03 &&
0.06 & 0.00 & 0.06 & 0.04 &&
0.05 & 0.00 & 0.05 & 0.04 &&
0.10 & 0.01 & 0.09 & 0.08\\

RoBERTa-large &
0.06 & 0.00 & 0.06 & 0.04 &&
0.07 & 0.01 & 0.07 & 0.05 &&
0.07 & 0.00 & 0.07 & 0.04 &&
0.08 & 0.00 & 0.08 & 0.05 && 
0.10 & 0.01 & 0.09 & 0.07 \\

DeBERTa-v3-small &
0.08 & 0.01 & 0.08 & 0.05 &&
0.12 & 0.01 & 0.12 & 0.07 &&
0.07 & 0.00 & 0.06 & 0.03 &&
0.16 & 0.02 & 0.15 & 0.18 &&
0.14 & 0.02 & 0.13 & 0.13\\

DeBERTa-v3-base &
0.02 & 0.00 & 0.02 & 0.02 &&
0.11 & 0.01 & 0.10 & 0.07 &&
0.01 & 0.00 & 0.01 & 0.02 &&
0.11 & 0.01 & 0.11 & 0.07 &&
0.07 & 0.00 & 0.07 & 0.06 \\

DeBERTa-v3-large &
0.07 & 0.00 & 0.07 & 0.05 &&
0.19 & 0.04 & 0.19 & 0.14 &&
0.07 & 0.00 & 0.06 & 0.04 &&
0.18 & 0.03 & 0.18 & 0.13 &&
0.27 & 0.08 & 0.25 & 0.24 \\
\midrule

& 
\multicolumn{24}{c}{{Enron}} \\
\midrule

GPT2 &
0.06 & 0.00 & 0.06 & 0.05 &&
0.05 & 0.00 & 0.05 & 0.04 &&
0.06 & 0.00 & 0.05 & 0.04 &&
0.04 & 0.00 & 0.04 & 0.04 &&
0.14 & 0.01 & 0.12 & 0.12 \\

GPT2-medium &
0.08 & 0.01 & 0.08 & 0.07 &&
0.06 & 0.00 & 0.05 & 0.07 &&
0.07 & 0.00 & 0.07 & 0.07 &&
0.06 & 0.00 & 0.06 & 0.07 &&
0.15 & 0.02 & 0.12 & 0.14 \\

GPT2-large &
0.11 & 0.01 & 0.11 & 0.09 &&
0.07 & 0.00 & 0.07 & 0.07 &&
0.11 & 0.01 & 0.11 & 0.09 &&
0.11 & 0.01 & 0.11 & 0.09 &&
0.16 & 0.02 & 0.13 & 0.14 \\

GPT2-xl &
0.09 & 0.01 & 0.09 & 0.07 &&
0.06 & 0.00 & 0.06 & 0.06 &&
0.09 & 0.01 & 0.09 & 0.08 &&
0.06 & 0.00 & 0.05 & 0.05 &&
0.12 & 0.01 & 0.10 & 0.10 \\

Llama-2-7b &
0.06 & 0.00 & 0.06 & 0.06 &&
0.04 & 0.00 & 0.04 & 0.07 &&
0.04 & 0.00 & 0.04 & 0.06 &&
0.02 & 0.00 & 0.02 & 0.03 &&
0.11 & 0.00 & 0.09 & 0.09 \\

Llama-2-13b &
0.05 & 0.00 & 0.05 & 0.05 &&
0.03 & 0.00 & 0.03 & 0.05 &&
0.03 & 0.00 & 0.03 & 0.05 &&
0.03 & 0.00 & 0.02 & 0.04 &&
0.11 & 0.00 & 0.09 & 0.10 \\

Llama-3-8b &
0.04 & 0.00 & 0.04 & 0.04 &&
0.10 & 0.00 & 0.09 & 0.08 &&
0.04 & 0.00 & 0.04 & 0.04 &&
0.07 & 0.00 & 0.07 & 0.06 &&
0.11 & 0.00 & 0.09 & 0.09 \\

Llama-3.1-8b &
0.04 & 0.00 & 0.04 & 0.04 &&
0.10 & 0.01 & 0.10 & 0.08 &&
0.04 & 0.00 & 0.04 & 0.03 &&
0.07 & 0.00 & 0.07 & 0.06 &&
0.11 & 0.00 & 0.09 & 0.09 \\

Llama-3.2-1b &
0.20 & 0.04 & 0.19 & 0.15 &&
0.22 & 0.04 & 0.21 & 0.16 &&
0.19 & 0.03 & 0.18 & 0.15 &&
0.21 & 0.03 & 0.20 & 0.18 &&
0.17 & 0.02 & 0.14 & 0.15 \\

Llama-3.2-3b &
0.24 & 0.05 & 0.23 & 0.19 &&
0.25 & 0.05 & 0.24 & 0.19 &&
0.23 & 0.04 & 0.21 & 0.18 &&
0.23 & 0.05 & 0.22 & 0.19 &&
0.20 & 0.04 & 0.17 & 0.18 \\

Gemma-2-2b &
0.13 & 0.01 & 0.12 & 0.08 &&
0.22 & 0.05 & 0.22 & 0.16 &&
0.12 & 0.01 & 0.12 & 0.07 &&
0.23 & 0.05 & 0.23 & 0.17 &&
0.22 & 0.05 & 0.19 & 0.21 \\

Gemma-2-9b &
0.12 & 0.01 & 0.12 & 0.09 &&
0.15 & 0.02 & 0.14 & 0.10 &&
0.10 & 0.01 & 0.10 & 0.10 &&
0.17 & 0.03 & 0.17 & 0.15 &&
0.22 & 0.06 & 0.20 & 0.20 \\

\midrule

&
\multicolumn{24}{c}{{Legal}} \\
\midrule

GPT2 &
0.08 & 0.00 & 0.07 & 0.06 &&
0.06 & 0.00 & 0.06 & 0.05 &&
0.06 & 0.00 & 0.06 & 0.04 &&
0.06 & 0.00 & 0.06 & 0.05 &&
0.19 & 0.02 & 0.14 & 0.14 \\

GPT2-medium &
0.08 & 0.00 & 0.08 & 0.07 &&
0.06 & 0.00 & 0.06 & 0.06 &&
0.08 & 0.00 & 0.07 & 0.07 &&
0.07 & 0.00 & 0.06 & 0.07 &&
0.19 & 0.02 & 0.14 & 0.14 \\

GPT2-large &
0.12 & 0.01 & 0.11 & 0.09 &&
0.07 & 0.00 & 0.07 & 0.06 &&
0.14 & 0.01 & 0.13 & 0.10 &&
0.14 & 0.02 & 0.14 & 0.10 &&
0.20 & 0.02 & 0.15 & 0.15 \\

GPT2-xl &
0.10 & 0.01 & 0.09 & 0.07 &&
0.06 & 0.00 & 0.05 & 0.06 &&
0.08 & 0.01 & 0.08 & 0.07 &&
0.06 & 0.00 & 0.05 & 0.06 &&
0.16 & 0.01 & 0.12 & 0.12 \\

Llama-2-7b &
0.07 & 0.00 & 0.07 & 0.06 &&
0.04 & 0.00 & 0.04 & 0.06 &&
0.06 & 0.00 & 0.05 & 0.06 &&
0.03 & 0.00 & 0.02 & 0.04 &&
0.15 & 0.01 & 0.11 & 0.11 \\

Llama-2-13b &
0.06 & 0.00 & 0.06 & 0.05 &&
0.04 & 0.00 & 0.03 & 0.04 &&
0.04 & 0.00 & 0.03 & 0.05 &&
0.03 & 0.00 & 0.03 & 0.04 &&
0.15 & 0.01 & 0.11 & 0.10 \\

Llama-3-8b &
0.04 & 0.00 & 0.04 & 0.04 &&
0.10 & 0.00 & 0.10 & 0.08 &&
0.05 & 0.00 & 0.05 & 0.04 &&
0.07 & 0.00 & 0.07 & 0.06 &&
0.15 & 0.01 & 0.12 & 0.11 \\

Llama-3.1-8b &
0.04 & 0.00 & 0.04 & 0.04 &&
0.10 & 0.01 & 0.09 & 0.07 &&
0.04 & 0.00 & 0.04 & 0.03 &&
0.07 & 0.00 & 0.06 & 0.06 &&
0.15 & 0.01 & 0.12 & 0.10 \\

Llama-3.2-1b &
0.21 & 0.03 & 0.19 & 0.15 &&
0.23 & 0.03 & 0.20 & 0.15 &&
0.21 & 0.03 & 0.19 & 0.15 &&
0.23 & 0.03 & 0.20 & 0.17 &&
0.25 & 0.04 & 0.20 & 0.20 \\

Llama-3.2-3b &
0.26 & 0.05 & 0.24 & 0.20 &&
0.28 & 0.05 & 0.25 & 0.19 &&
0.24 & 0.04 & 0.22 & 0.18 &&
0.24 & 0.04 & 0.22 & 0.18 &&
0.27 & 0.05 & 0.22 & 0.21 \\

Gemma-2-2b &
0.13 & 0.01 & 0.12 & 0.08 &&
0.23 & 0.05 & 0.22 & 0.17 &&
0.10 & 0.01 & 0.10 & 0.06 &&
0.22 & 0.04 & 0.21 & 0.15 &&
0.26 & 0.05 & 0.20 & 0.20 \\

Gemma-2-9b &
0.14 & 0.01 & 0.13 & 0.10 &&
0.19 & 0.02 & 0.17 & 0.12 &&
0.12 & 0.01 & 0.11 & 0.10 &&
0.19 & 0.03 & 0.18 & 0.15 &&
0.25 & 0.04 & 0.20 & 0.19 \\

\midrule

&
\multicolumn{24}{c}{{News}} \\
\midrule

GPT2 &
0.07 & 0.00 & 0.07 & 0.06 &&
0.05 & 0.00 & 0.05 & 0.05 &&
0.07 & 0.00 & 0.07 & 0.05 &&
0.05 & 0.00 & 0.05 & 0.05 &&
0.18 & 0.03 & 0.14 & 0.14 \\

GPT2-medium &
0.09 & 0.01 & 0.08 & 0.07 &&
0.07 & 0.00 & 0.06 & 0.07 &&
0.07 & 0.00 & 0.07 & 0.08 &&
0.07 & 0.00 & 0.06 & 0.08 &&
0.18 & 0.02 & 0.14 & 0.14 \\

GPT2-large &
0.10 & 0.01 & 0.09 & 0.08 &&
0.07 & 0.00 & 0.07 & 0.07 &&
0.10 & 0.01 & 0.09 & 0.08 &&
0.10 & 0.01 & 0.09 & 0.08 &&
0.18 & 0.03 & 0.15 & 0.14 \\

GPT2-xl &
0.08 & 0.01 & 0.08 & 0.07 &&
0.06 & 0.00 & 0.06 & 0.06 &&
0.08 & 0.00 & 0.07 & 0.07 &&
0.05 & 0.00 & 0.05 & 0.05 &&
0.14 & 0.01 & 0.11 & 0.11 \\

Llama-2-7b &
0.05 & 0.00 & 0.05 & 0.05 &&
0.03 & 0.00 & 0.03 & 0.05 &&
0.05 & 0.00 & 0.05 & 0.06 &&
0.02 & 0.00 & 0.02 & 0.04 &&
0.08 & 0.00 & 0.06 & 0.05 \\

Llama-2-13b &
0.05 & 0.00 & 0.04 & 0.04 &&
0.04 & 0.00 & 0.03 & 0.04 &&
0.03 & 0.00 & 0.03 & 0.04 &&
0.03 & 0.00 & 0.03 & 0.03 &&
0.10 & 0.00 & 0.07 & 0.06 \\

Llama-3-8b &
0.03 & 0.00 & 0.03 & 0.04 &&
0.08 & 0.00 & 0.08 & 0.07 &&
0.04 & 0.00 & 0.04 & 0.03 &&
0.06 & 0.00 & 0.06 & 0.06 &&
0.11 & 0.00 & 0.08 & 0.08 \\

Llama-3.1-8b &
0.03 & 0.00 & 0.03 & 0.04 &&
0.09 & 0.01 & 0.09 & 0.07 &&
0.03 & 0.00 & 0.03 & 0.02 &&
0.06 & 0.00 & 0.06 & 0.05 &&
0.12 & 0.01 & 0.09 & 0.09 \\

Llama-3.2-1b &
0.19 & 0.03 & 0.18 & 0.14 &&
0.21 & 0.03 & 0.19 & 0.14 &&
0.18 & 0.03 & 0.17 & 0.14 &&
0.21 & 0.03 & 0.19 & 0.17 &&
0.21 & 0.04 & 0.17 & 0.16 \\

Llama-3.2-3b &
0.23 & 0.04 & 0.21 & 0.17 &&
0.24 & 0.05 & 0.22 & 0.16 &&
0.21 & 0.04 & 0.19 & 0.16 &&
0.21 & 0.04 & 0.21 & 0.17 &&
0.22 & 0.04 & 0.18 & 0.18 \\

Gemma-2-2b &
0.14 & 0.01 & 0.13 & 0.08 &&
0.23 & 0.05 & 0.21 & 0.15 &&
0.13 & 0.01 & 0.12 & 0.07 &&
0.23 & 0.05 & 0.22 & 0.15 &&
0.29 & 0.09 & 0.24 & 0.25 \\

Gemma-2-9b &
0.13 & 0.01 & 0.12 & 0.09 &&
0.17 & 0.02 & 0.16 & 0.11 &&
0.11 & 0.01 & 0.10 & 0.09 &&
0.18 & 0.03 & 0.17 & 0.14 &&
0.29 & 0.09 & 0.25 & 0.24 \\

\bottomrule
\end{tabular}
}

\end{threeparttable}
\end{table*}

    \clearpage

}

\end{document}